\title{Over-Parameterization Exponentially Slows Down Gradient Descent for Learning a Single Neuron}
\author[1]{Weihang Xu\thanks{xuwh19@mails.tsinghua.edu.cn. Part of this work was done while W. Xu was visiting University of Washington.} }
\author[2]{Simon S. Du\thanks{ssdu@cs.washington.edu}}
\affil[1]{\small Institute for Interdisciplinary Information Sciences, Tsinghua University. }
\affil[2]{\small Paul G. Allen School of Computer Science and Engineering, University of Washington}
\begin{document}

\maketitle

\begin{abstract}%
We revisit the problem of learning a single neuron with ReLU activation under Gaussian input with square loss.
We particularly focus on the over-parameterization setting where the student network has $n\ge 2$ neurons.
We prove the global convergence of randomly initialized gradient descent with a $O\left(T^{-3}\right)$ rate.
This is the first global convergence result for this problem beyond the exact-parameterization setting ($n=1$) in which the gradient descent enjoys an $\exp(-\Omega(T))$ rate.
Perhaps surprisingly, we further present an $\Omega\left(T^{-3}\right)$ lower bound for randomly initialized gradient flow in the over-parameterization setting.
These two bounds jointly give an exact characterization of the convergence rate and imply, for the first time, that \emph{over-parameterization can exponentially slow down the convergence rate}.
To prove the global convergence, we need to tackle the interactions among student neurons in the gradient descent dynamics, which are not present in the exact-parameterization case.
We use a  three-phase structure to analyze GD's dynamics. Along the way, we prove gradient descent automatically balances student neurons, and use this property to deal with the non-smoothness of the objective function. To prove the convergence rate lower bound, we construct a novel potential function that characterizes the pairwise distances between the student neurons (which cannot be done in the exact-parameterization case).
We show this potential function converges slowly, which implies the slow convergence rate of the loss function.

\end{abstract}

\section{Introduction}
In recent years, theoretical explanations of the success of gradient descent (GD) on training deep neural networks emerge as an important problem. A prominent line of work \cite{AllenZhu18Beyond, Du18provably, Jacot18NTK, safran18spurious, Chizat18Lazy} suggests that over-parameterization plays a key role in the successful training of neural networks. 
%While most of these works require a vast amount of over-parameterization, some [\cite{safran20effects, zhou2021local}] also study the influences of even very mild over-parameterization, such as adding one or two neurons.

However, the drawback of over-parameterization is under-explored. In this paper, we consider training two-layer ReLU networks, with a particular focus on learning a single neuron in the over-parameterization setting.
We give a rigorous proof for the following surprising phenomenon:
\begin{center}
\emph{Over-parameterization exponentially slows down the convergence of gradient descent.}
\end{center}

% Our aim is to rigorously study this effect, and to give it a precise characterization.

Specifically, we consider two-layer ReLU networks with $n$ neurons and input dimension $d$:
\begin{equation}\label{network}
    \x\to \sum_{i=1}^n [\w_i^\top\x]_+,
\end{equation}
where $[x]_+= \max\{0, x\}$ denotes the ReLU function, $\w_1, \ldots, \w_n\in\mathbf{R}^d$ are $n$ neurons. The input $\x\sim\mathcal{N}(0, I)$ follows a standard Gaussian distribution. 

%\noindent\textbf{Teacher-Student Setting}
We consider the teacher-student setting, where a student network is trained to learn a ground truth teacher network. Following the architecture \eqref{network}, the student network $f:\mathbf{R}^d\to \mathbf{R}$ is given by $f(\x)=\sum_{i=1}^n[\w_i^\top\x]_+,$ where $\w_1, \ldots, \w_n\in\mathbf{R}^d$ are $n$ student neurons. Similarly, the teacher network is given by $f^*(\x)=\sum_{i=1}^m[\vv_i^\top\x]_+$, where $\vv_1, \ldots, \vv_m\in\mathbf{R}^d$ are $m$ teacher neurons. It is natural to study the square loss:
\begin{equation}\label{general loss}
    L(\w)=\mathbb{E}_{\x\sim\mathcal{N}(\bm{0}, I)}\left[\frac{1}{2}\left(\sum_{i=1}^n[\w_i^\top\x]_+ -\sum_{i=1}^m[\vv_i^\top\x]_+\right)^2\right],
\end{equation}
where $\w=(\w_1^\top, \w_2^\top, \ldots, \w_n^\top)^\top\in \mathbf{R}^{n\times d}$ denotes the parameter vector formed by student neurons.

In this paper, we focus on the special case where the teacher network consists of one single neuron $\vv_1$, i.e., $m=1$. For simplicity, we omit the subscript and denote $\vv_1$ with $\vv$. 
% Then the loss becomes
% \begin{equation}
%     L(\w)=E_{\x\sim\mathcal{N}(\bm{0}, I)}\left[\frac{1}{2}\left(\sum_{i=1}^n[\w_i^\top\x]_+ -[\vv^\top\x]_+\right)^2\right].
% \end{equation}
The student network is initialized with a Gaussian distribution: $\forall 1\leq i\leq n, \w_i(0)\sim \mathcal{N}(\bm{0}, \sigma^2 I)$,  ($\sigma \in \mathbf{R}^+$ denotes the initialization scale), then trained by gradient descent with a step size $\eta$. 

\begin{figure}[t]
    \begin{minipage}{0.45\linewidth}
    \centering
    \includegraphics[width=\linewidth]{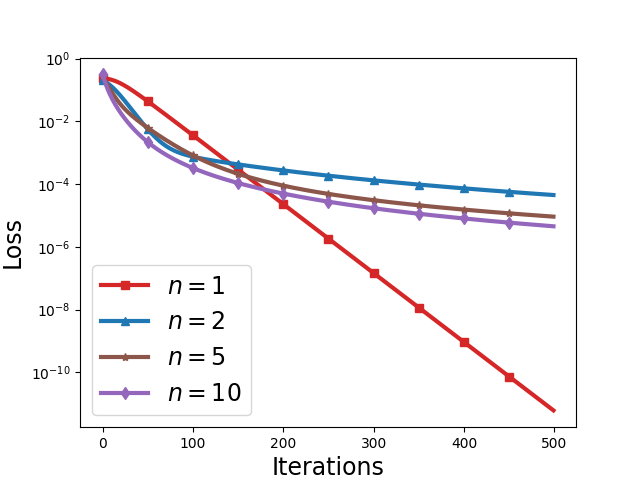}
         \label{single vs multi neurons.png}
     \end{minipage}
     \hfill
     \begin{minipage}{0.45\linewidth}
     \centering
        \includegraphics[width=\linewidth]{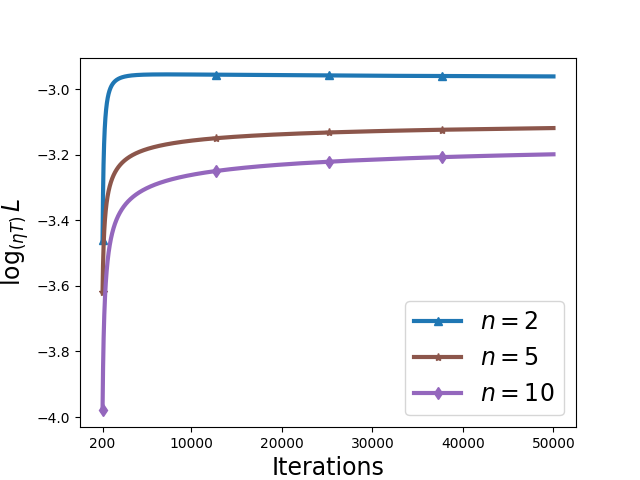}
         \label{loss-asymptotic.png}
     \end{minipage}
     \label{simulation}
     \caption{
     Setting: $\sigma =0.1, \eta = 0.05, \|\vv\|=1$. Left: The loss converges much slower when $n>1$, compared to the case of $n=1$. Right: $\log_{(\eta T)} L(\w(T))$ converges to $-3$, with a  small perturbation that converges extremely slow (note if  we want $\log_{(\eta T)}\frac{C}{(\eta T)^3}\in (-3-\epsilon, -3+\epsilon)$, then $ T\geq \frac{1}{\eta}C^{1/\epsilon}$ is needed.)}

\end{figure}

In this widely-studied setting, we discover a new phenomenon: compared to the exact-parameterized case ($n=1$), the loss $L(\w(t))$ converges much slower in the over-parameterized case. Empirically (see Figure \ref{simulation}), the slow-down effect happens universally for all $n\geq 2$. Moreover, $\log_{(\eta T)}L(\w(T))$ has a tendency of converging towards $-3$, which seems to suggest that the convergence rate should be $L(\w(T))=\Theta( T^{-3})$. 

For the exact-parameterized case ($n=1
$), \citet{Yehudai20SingleNeuron} proved that $L(\w(t))$ convergences with a linear rate: $L(\w(t)) \leq \exp\left(-\Omega(t)\right)$, which is also validated in Figure \ref{simulation}. For the over-parameterized case, an exact characterization of the convergence rate is given in this paper as $L(\w(t)) = \Theta(t^{-3})$.
As a result, we show that (even very mild) over-parameterization exponentially slows down the convergence rate.
Specifically, our main results are the following two theorems.

%\simon{TO DO: maybe move main results to intro instead of technical overview.}
\begin{theorem}[Global Convergence, Informal]\label{informal global convergence}
    For $\forall \delta >0$, suppose the dimension $d=\Omega(\log(n/\delta))$, the initialization scale \footnote{Note that $\|\w_i(0)\|$ scales with $\sigma\sqrt{d}$ rather than $\sigma$. } $\sigma\sqrt{d}= poly(n^{-1})\|\vv\|,$ the learning rate $\eta =poly(\sigma\sqrt{d}, n^{-1}, \|\vv\|^{-1})$. Then with probability at least $1-\delta$, gradient descent converges to a global minimum with rate $L(\w(t)) \leq  poly(n, \|\vv\|, \eta^{-1})t^{-3}$.
\end{theorem}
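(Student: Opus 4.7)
The plan is to track gradient descent through a three-phase decomposition keyed to the geometry of the problem. Writing $\hat{\vv}=\vv/\|\vv\|$ and decomposing each student as $\w_i(t) = c_i(t)\hat{\vv} + \bm{\epsilon}_i(t)$ with $\bm{\epsilon}_i\perp\hat{\vv}$, the set of global minima forms the $(n-1)$-dimensional manifold $\{\bm{\epsilon}_i=\bm{0},\ c_i\ge 0,\ \sum_i c_i=\|\vv\|\}$. The degeneracy of this manifold is what ultimately produces the slow $T^{-3}$ rate and is what my analysis has to grapple with in the last phase.

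In Phase I (alignment) the student norms are small, so the gradient of the loss is dominated by the teacher cross term: to leading order $-\nabla_{\w_i}L \approx \mathbb{E}[\x\,\mathbf{1}\{\w_i^\top\x\ge 0\}\,[\vv^\top\x]_+]$, which has a strictly positive component along $\hat{\vv}$ whenever $\w_i$ is not almost-antiparallel to $\hat{\vv}$. The dimension assumption $d=\Omega(\log(n/\delta))$ enters here through Gaussian concentration of $\w_i(0)^\top\hat{\vv}$ and $\|\w_i(0)\|$: it guarantees with probability $1-\delta$ that every initial neuron is non-degenerately oriented, so the alignment force pulls it towards $\hat{\vv}$. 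I would then show that within $\tilde{O}(1/\eta)$ steps all neurons satisfy $c_i>0$ and the angles $\theta_i=\angle(\w_i,\hat{\vv})$ become uniformly small. In Phase II (growth) the $c_i$'s increase while the perpendicular parts contract, until $\sum_i c_i$ reaches a constant fraction of $\|\vv\|$. The key tool here, and throughout, is a \emph{balancing lemma}: by positive homogeneity of the ReLU, $\|\w_i\|^2-\|\w_j\|^2$ is essentially conserved under GD (exactly so under gradient flow, up to $O(\eta)$ under GD), so near-equal initial scales force the $c_i$'s to stay comparable throughout training. Balancing is also the device used to handle the non-smoothness of the loss: as long as every $\|\w_i\|$ stays bounded away from zero and every $\theta_i$ stays away from $\pi/2$, the activation-indicator terms in the gradient are stable on a neighborhood of each iterate and the usual descent lemma applies without modification.

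Phase III is where the $T^{-3}$ rate emerges and is the main obstacle. Near the optimum manifold, a direct calculation shows the parallel residual $R_\parallel := \sum_i c_i - \|\vv\|$ satisfies $\dot R_\parallel \approx -\tfrac{n}{2}R_\parallel$ and hence decays geometrically, but the perpendicular residuals $\bm{\epsilon}_i$ only couple to the loss through higher-order terms coming from the gap between each student's activation set and $\{\hat{\vv}^\top\x\ge 0\}$. I would expand the loss to cubic order in the $\bm{\epsilon}_i$, use the balancing lemma to quotient out the tangent direction along the optimum manifold, and derive a scalar inequality of the form $\dot\Phi \lesssim -\Phi^{k}/\mathrm{poly}(n,\|\vv\|)$ with some $k>1$, for a potential $\Phi(t)$ that upper bounds $L(\w(t))$; integrating yields $L(\w(t))\le \mathrm{poly}(n,\|\vv\|,\eta^{-1})\cdot t^{-3}$, and converting from gradient flow to gradient descent costs only a $\mathrm{poly}(\eta^{-1})$ factor. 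The real difficulty is identifying the correct potential: because the $\bm{\epsilon}_i$ may point in arbitrary directions, a naive $\ell^2$ potential need not be monotone, and whatever potential one builds must be compatible with the flat directions of the loss. I expect the balancing lemma to be used once more here to reduce the coupled dynamics of the $n$ neurons to a quasi-self-similar form that admits a super-linear dissipation inequality; this is the step I anticipate being the most delicate, and it is exactly the piece that is absent in the exact-parameterized case $n=1$.
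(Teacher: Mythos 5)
Your three-phase scaffold matches the paper's decomposition at a high level (alignment, exponential decay of the parallel residual, slow local convergence), but the load-bearing tool you propose is wrong for this model, and the key step of Phase III is not actually constructed.

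The central problem is your ``balancing lemma.'' You claim $\|\w_i\|^2-\|\w_j\|^2$ is conserved under gradient flow by positive homogeneity. That invariant holds for two-layer networks $\x\mapsto\sum_i a_i[\w_i^\top\x]_+$ with \emph{trainable} output weights $a_i$ (where $\|\w_i\|^2-a_i^2$ is conserved), but the architecture here has fixed output weights equal to $1$. A direct computation gives
$\frac{d}{dt}\|\w_i\|^2 = -2\langle\w_i,\nabla_i\rangle = -2\,\mathbb{E}[R(\x)[\w_i^\top\x]_+]$,
and there is no reason for this to agree across $i$: for instance if two neurons have the same norm but different angles to $\vv$, the two quantities differ. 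So the quantity $\|\w_i\|^2-\|\w_j\|^2$ is not conserved, and everything you hang on this lemma — keeping the $c_i$'s comparable in Phase II, keeping $\|\w_i\|$ bounded away from $0$ to handle non-smoothness, and ``quotienting out the tangent direction'' in Phase III — is unsupported. The paper does prove a balancing statement, but a different one ($h_i\le 2h_j$, on the \emph{projections} $h_i=\langle\w_i,\overline{\vv}\rangle$), and by a different mechanism: once the angles are small, the update of every $h_i$ is approximately the common quantity $\tfrac{\eta}{2}H$, plus the fact that the $h_i(T_1/50)$ are all small relative to the subsequent growth. It also handles the non-smoothness not through balancing but through an implicit-regularization argument (Lemma \ref{locality}): the improved gradient lower bound forces $\sum_t\eta\|\nabla L\|\lesssim\sum_t t^{-2}<\infty$, so the neurons cannot drift to the origin. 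That argument is circular-looking (it uses the rate to prove smoothness to prove the rate) and is resolved inductively; none of this appears in your plan.

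Your Phase III is also incomplete where it matters most. You correctly guess that the mechanism is a superlinear dissipation inequality $\dot\Phi\lesssim-\Phi^{4/3}$, and that the degeneracy of the $(n-1)$-dimensional optimum manifold is what makes a naive quadratic potential fail. But you don't say what $\Phi$ is or why the cubic expansion produces the exponent $4/3$ rather than the exponent $2$ that a generic local analysis (as in prior work) delivers. The paper's actual device is to prove the local gradient lower bound $\|\nabla L(\w)\|\ge\Omega(L^{2/3}(\w)/(n^{2/3}\|\vv\|^{1/3}))$ by choosing, for each iterate, the \emph{specific} nearest global minimizer $\w_i^*=\frac{h_i}{\sum_j h_j}\vv$, showing $\|\w_i-\w_i^*\|=O(L^{1/3})$, and then applying Cauchy--Schwarz to $\sum_i\langle\nabla_i,\w_i-\w_i^*\rangle\ge L$. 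This adaptive choice of reference point is precisely what replaces the constant $\|\w_i-\w_i^*\|$ of earlier work by a shrinking $O(L^{1/3})$, upgrading $\Omega(L)$ to $\Omega(L^{2/3})$. Your proposal leaves this hole (``the step I anticipate being the most delicate''), so as written it does not establish the $t^{-3}$ rate. Finally, the remark that converting gradient flow to gradient descent ``costs only a poly$(\eta^{-1})$ factor'' elides the real work: the descent lemma requires a local smoothness bound on a tube around the trajectory, which is exactly what the implicit-regularization argument supplies, and which your balancing-based plan cannot deliver.
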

\begin{theorem}[Convergence Rate Lower Bound, Informal]\label{Informal Lower Bound}
Suppose the student network is over-parameterized, i.e., $n\geq 2$. Consider gradient flow: $\frac{\partial \w(t)}{\partial t} = -\frac{\partial L(\w(t))}{\partial\w}.$
    If the requirements on $d$ and $\sigma$ in Theorem \ref{informal global convergence} hold, then with high probability, there exist  constants $\Gamma_1, \Gamma_2$ which do not depend on time $t$, such that $\forall t\geq 0, L(\w(t))\geq {(\Gamma_1 t+\Gamma_2)^{-3}}$.
\end{theorem}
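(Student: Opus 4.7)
The plan is to identify a ``slow'' direction in parameter space---the dispersion of the student neurons transverse to $\vv$---along which the loss behaves \emph{cubically} in the perturbation size, and along which gradient flow therefore evolves at a polynomially suppressed rate, giving the $t^{-3}$ lower bound. Concretely I will construct a potential $\Phi(\w)$ measuring this transverse dispersion, prove a structural inequality $L(\w)\ge c\,\Phi(\w)^3$ in the late phase, establish the slow-decay ODE $\dot\Phi(t)\ge -C\,\Phi(t)^2$, and integrate.

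The first ingredient is a late-phase regularity lemma: reusing the warm-up analysis behind Theorem~\ref{informal global convergence}, I would argue that with high probability over the Gaussian initialization, after a bounded warm-up time $t_0$ the trajectory enters a region where every neuron has positive inner product with $\vv$, the aligned magnitude $\sum_i(\w_i^\top\hat\vv)$ is close to $\|\vv\|$, and the transverse dispersion $\Phi(\w(t_0))$ is strictly positive. The last fact exploits the randomness of the initialization: random Gaussian neurons have pairwise angular separations $\Omega(1)$ with high probability, and the automatic balancing property prevents the trajectory from collapsing all neurons onto a single aligned configuration within finite time.

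I would then take the potential
\[
\Phi(\w)\;=\;\sum_{i<j}\,\bigl\|(I-\hat\vv\hat\vv^\top)(\w_i-\w_j)\bigr\|,\qquad \hat\vv=\vv/\|\vv\|,
\]
and establish the structural bound $L(\w)\ge c\,\Phi(\w)^3/\|\vv\|$ throughout the late-phase region. The proof starts from the closed-form expression for $L$ via the standard identity $\mathbb{E}\bigl[[a^\top\x]_+[b^\top\x]_+\bigr]=\tfrac{\|a\|\|b\|}{2\pi}\bigl(\sin\theta+(\pi-\theta)\cos\theta\bigr)$ (with $\theta$ the angle between $a$ and $b$), and Taylor-expands $L$ around the aligned manifold $\{\w_i\parallel\vv\}$; the quadratic contribution of a transverse perturbation of size $\epsilon$ cancels once one imposes the first-order alignment condition $\sum_i\w_i\approx\vv$, leaving a cubic leading term. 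The geometric picture driving the cubic scaling: the disagreement between the student's summed ReLUs and the teacher's single ReLU is supported on a wedge of angular width $\sim\epsilon$ on which the pointwise error is $O(\epsilon)$, yielding a squared-error integral of order $\epsilon\cdot\epsilon^2=\epsilon^3$.

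The last and most delicate step is the slow-decay inequality $\dot\Phi(t)\ge -C\,\Phi(t)^2$, from which ODE comparison gives $\Phi(t)\ge(\Gamma_1't+\Gamma_2')^{-1}$ and then $L(\w(t))\ge(\Gamma_1 t+\Gamma_2)^{-3}$ via the structural bound. The required gradient estimate---that the projection of $\nabla L$ onto the transverse-separation directions defining $\Phi$ has magnitude only $O(\Phi^2)$---uses the same wedge analysis (a cubic potential has a quadratic gradient), but I expect the main obstacle to be that this projection involves contributions from every pair $(i,j)$ of student neurons together with teacher--student cross-terms, and isolating a clean $O(\Phi^2)$ bound requires tracking cancellations among these competing pieces. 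Non-smoothness of ReLU on the measure-zero breakpoint set I would handle by invoking the balancing property to ensure the trajectory stays in the smooth interior, so that the gradient-flow equation is classically well-defined on the relevant interval.
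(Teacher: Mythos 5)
Your overall architecture matches the paper's: both define the same transverse-dispersion potential (your $\Phi$ is exactly the paper's $Z(t)=\sum_{i<j}\|\z_i-\z_j\|$ with $\z_i=(I-\overline{\vv}\,\overline{\vv}^\top)\w_i$), both prove a structural bound $L\gtrsim Z^3/\|\vv\|$, and both aim for the ODE $\dot Z\ge-CZ^2$ followed by integration. However, you have misidentified where the hard part actually sits, and the step you gloss over would fail as stated.

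The difficulty is not ``tracking cancellations among teacher--student cross-terms.'' The bound $\|\dot\w_i-\dot\w_j\|=O(\|\vv\|\theta_{\max}^2)$ is in fact cheap: the dominant term $\tfrac12(\sum_k\w_k-\vv)$ in the gradient is identical for every $i$, so it cancels from $\dot\w_i-\dot\w_j$ by simple subtraction, leaving only the perturbation terms which are $O(\theta_{\max}^2\|\vv\|)$ by Taylor expansion. The genuine obstacle is converting that $\theta_{\max}^2$ into $Z^2$. Without extra structure this conversion is \emph{false}: in the paper's Toy Case 3 ($\w_1=\cdots=\w_n$) all $\z_i$ are equal, so $Z\equiv0$ while $\theta_{\max}$ remains positive and the loss decays exponentially. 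So $\dot Z\ge -CZ^2$ does not hold on that trajectory, and your proposal has nothing that rules such configurations out \emph{at all times} $t$. Remarking that the Gaussian initialization gives $\Omega(1)$ angular separation at $t_0$ is only an initial-time statement; you need the separation to persist, and you never argue for that.

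The paper's key ingredient, which your proposal lacks, is the automatic-separation mechanism: for the pair $(i,j)$ achieving $\kappa_{ij}=\kappa_{\max}$ (the maximum angle between transverse components), gradient flow satisfies $\tfrac{d}{dt}\cos\kappa_{ij}\le-\tfrac{\pi-\theta_{ij}}{\pi}(1-\cos^2\kappa_{ij})$. The proof of this crucially relies on picking the \emph{extremal} pair $i,j$, for which the awkward cross-terms $\cos\kappa_{kj}-\cos\kappa_{ki}\cos\kappa_{ij}$ are guaranteed nonnegative. This yields a persistent lower bound $\kappa_{\max}(t)\ge\kappa_{\max}(0)/3$, which in turn gives $\max_i\|\z_i\|\lesssim Z(t)/\kappa_{\max}(0)$, hence $\theta_{\max}\lesssim nZ/(\kappa_{\max}(0)\|\vv\|)$, and only \emph{then} does $\dot Z\ge-CZ^2$ follow. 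This is also why the paper must state a non-degeneracy condition ($\kappa_{\max}(0)>0$, all $\z_i(0)\neq\bm 0$), which you do not mention: without it, the lower bound is simply wrong, as the toy cases show. Your proposal's structural bound $L\gtrsim Z^3$ is fine and matches the paper (it is essentially Lemma~\ref{theta} plus $Z\lesssim n\|\vv\|\theta_{\max}$), but the ODE step has a real hole.
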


Theorem \ref{informal global convergence} shows the global convergence of GD, while Theorem \ref{Informal Lower Bound} provides a convergence rate lower bound. These two bounds together imply an exact characterization of the convergence rate for GD.
We further highlight the significance of our contributions below:

% \begin{itemize}
\noindent $\bullet$ 
% We provide a global convergence analysis of problem \eqref{general loss} for $m=1$ and $\forall n\in\mathbf{Z}^+$.
To our knowledge, Theorem~\ref{informal global convergence} is the first global convergence result of gradient descent for the square loss beyond the special exact-parameterization cases of $m=n=1$~\citep{tian2017analytical,brutzkus2017globally,Yehudai20SingleNeuron,du2017convolutional} and $m=n=2$~\citep{wu2018no}.
% shown in \cite{Yehudai20SingleNeuron}.
% Existing analyses require either replacing the square loss with an artificially-designed objective function \citep{Ge17Landscape} or specific initialization schemes \citep{zhou2021local,zhong2017recovery}.\simon{maybe cite more.} 

\noindent $\bullet$ While over-parameterization is well-known for its benefit in establishing global convergence in the finite-data regime, this is the first work proving it can slow down gradient-based methods. 
% Perhaps the only remotely related work is [\cite{dwivedi2020singularity}], which shows EM algorithm for learning Gaussian mixture models has a slower convergence rate in the over-parameterization setting but both the settings and proof techniques are completely different.
% \end{itemize}

\subsection{Related Works}
%\noindent\textbf{Learning a Single Neuron. }
The problem of learning a single neuron is actually well-understood and can be solved with minimal assumptions by classical single index models algorithms \citep{kakade2011efficient}. For learning a single-neuron, \citet{brutzkus2017globally,tian2017analytical,soltanolkotabi2017learning} proved convergence for GD assuming Gaussian input distribution, which was later improved by  \citet{Yehudai20SingleNeuron} who proved linear convergence of GD for learning one single neuron properly.
These results are also generalized to learning a convolutional filter~\citep{goel2018learning,du2017convolutional,du2018gradient,zhou2019toward,liu2019towards}. 
These works only focus on the  exact-parameterization setting, while we focus on the over-parameterization setting.

%\noindent\textbf{Landscape Analysis. }
Another direction focuses on the optimization landscape.
\citet{safran18spurious} showed spurious local minima exists for large $m$ in the exact-parameterization setting.
\citet{safran20effects} studied problem \eqref{general loss} with orthogonal teacher neurons. They showed that neither one-point strong convexity nor Polyak-Łojasiewicz (PL) condition hold locally near the global minimum. \citet{wu2018no} showed that problem \eqref{general loss} has no spurious local minima for $m=n=2$. \citet{zhong2017recovery,zhang2019learning} studied the exact-parameterization setting and showed the local strong convexity of loss and therefore with tensor initialization, GD can converge to a global minimum.
\citet{arjevani2022annihilation} proved that over-parameterization annihilates certain  types of spurious local minima.

A popular line of works, known as neural tangent kernel (NTK) \citep{Jacot18NTK,Chizat18Lazy, Du18provably,du2019gradient, cao2019generalization, allen2019convergence, arora2019fine, oymak2020toward,zou2020gradient,li2018learning} connects the training of ultra-wide neural networks with kernel methods. Another line of works uses the mean-field analysis to study the training of infinite-width neural networks \citep{nitanda2017stochastic, chizat2018global, wei2019regularization, nguyen2020rigorous, fang2021modeling, lu2020mean}. All of these works considered the finite-data regime and require the neural network to be ultra-wide, sometimes infinitely wide. Their techniques cannot explain the learnability of a single neuron, as pointed out by \cite{yehudai2019power}.

%\noindent\textbf{Two-layer Networks with Gaussian Inputs. }
More related to our works are results on the dynamics of gradient descent in the teacher-student setting. \citet{li2017convergence} studied the exact-parameterized setting and proved convergence for SGD with initialization in a region near identity.
% [\cite{du2018gradient}] proved global convergence for learning exact-parameterized one-hidden-layer convolutional networks.
\citet{li2020learning} showed that GD can learn two-layer networks better than any kernel methods, but their final upper bound of loss is constantly large and no convergence is proven.
\citet{zhou2021local} proved \emph{local} convergence for mildly over-parameterized two-layer networks. While our global convergence analysis uses their idea of establishing a gradient lower bound, we also propose new techniques to get rid of their architectural modifications, and improved their gradient lower bound to yield a tight convergence rate upper bound (see Section \ref{Technical Overview} for details).
Also, \citet{zhou2021local} only provided a local convergence theory, while we prove convergence globally.
On the other hand, their results hold for general $m\ge1$ whereas we only study $m=1$.

The first phase of our analysis is similar to the initial alignment phenomenon in \cite{boursier2022gradient}. Their analysis also relies on the finite-data regime and the orthogonality of inputs, hence does not apply to our setting.

% Although Theorem \ref{informal global convergence} proves the global convergence of GD for learning a single neuron, some negative results have shown that learning more general models is computationally hard [\cite{goel2020superpolynomial, goel2020tight}].

Similar slow-down effects of over-parameterization on the convergence rate have been observed in other scenarios.
\citet{richert2022soft} considered error function activation and empirically observed an $O(T^{-2})$ convergence rate.
Going beyond neural network training, \citet{dwivedi2020singularity,wu2019randomly} showed such a phenomenon for Expectation-Maximization (EM) algorithm on Gaussian mixture models. \citet{zhang2022preconditioned} exhibited similar empirical behaviors of GD on Burer–Monteiro factorization, but no rigorous proof was given.

\noindent \textbf{Paper Organization.}
In Section \ref{Technical Overview} we describe the main technical challenges in our analysis, and our ideas for addressing them. In section \ref{Preliminaries} we define some notations and preliminary notions. In Section \ref{Proof Sketch} we formalize the global convergence result (Theorem \ref{informal global convergence}) and provide a proof sketch.
In Section \ref{Proof Sketch: Convergence Rate Lower Bound} we formalize the convergence rate lower bound (Theorem \ref{Informal Lower Bound}) and provide a proof sketch.

\section{Technical Overview}\label{Technical Overview}

%The convergence analysis for the optimization problem \eqref{general loss} is quite challenging. While this paper solves the case of $m=1$, the general case still remains open despite many previous works [\cite{zhou2021local, safran20effects,Li20BeyondNTK,Ge17Landscape}]. In this section, we provide an overview of the technical challenges, and our ideas of addressing them.

\noindent \textbf{Three-Phase Convergence Analysis. }
Our global convergence analysis is divided into three phases. We define $\theta_i$ as the angle between $\w_i$ and $\vv$, and $H\coloneqq \|\vv\|-\sum_i \langle \w_i, \overline{\vv}\rangle$. Intuitively, $\theta_i$ represents the radial difference between teacher and students, while $H$ represents the tangential difference between teacher and students.

When the initialization $\sigma\sqrt{d}$ is small enough, in phase $1$, for every $i\in[n]$, $\theta_i$ decreases to a small value while $\|\w_i\|$ remains small. In phase $2$, $\forall i\in[n], \theta_i$ remains bounded by a small value while $H$ decreases with an exponential rate. Both $\theta_i$ and $H$ being small at the end of phase $2$ implies that GD enters a local region near a global minimum. In phase $3$, we establish the local convergence by proving two properties: a lower bound of gradient, and a regularity condition of student neurons.

\noindent\textbf{Non-Benign Optimization Landscape.} Compared to the exact-parameterization setting, the optimization landscape becomes significantly different and much harder to analyze when the network is over-parameterized. \citet{zhou2021local} provided an intuitive illustration for this in their Section 4.
%Some [\cite{safran18spurious, safran20effects}] conjectured that non-global minima might be eliminated in the over-parameterized regime.  
For the general problem \eqref{general loss}, \citet{safran20effects} showed that nice geometric properties that hold when $m=n$, including one-point strong convexity and PL condition, do not hold when $m<n$. In this paper, we go further and show that the difference in geometric landscape leads to totally different convergence rates.

%As a result, we do not attempt to establish desirable geometric properties that holds globally. Instead, our analysis is mostly based on properties that only holds on the descent trajectory (see the proof sketch in Section \ref{Proof Sketch} for details). 

\noindent\textbf{Non-smoothness and Implicit Regularization. } The loss function is not smooth when student neurons are close to $\bm{0}$, which brings
a major technical challenge for a local convergence analysis. \citet{zhou2021local} reparameterized the student neural network architecture to make the loss $L$ smooth. 
We show this artificial change is not necessary. 
% We adopt a more satisfying approach that avoids such artificial modifications. 
Our observation is that GD implicitly regularizes the student neurons and keeps them away from the non-smooth regions near $\bm{0}$. To prove this, we show that $\w_i$ cannot move too far in phase 3, by applying an algebraic trick to upper-bound $\sum_{t = T}^{\infty} \eta\|\nabla_{\w_i}L(\w(t))\|$ with $L(\w(T))$ (Lemma \ref{locality}).
A similar regularization property for GD was given in \cite{du2018algorithmic}, but it applies layer-wise rather than neuron-wise as in our paper.

\noindent\textbf{Improving the Gradient Lower Bound. }
In our local convergence phase, we establish a local gradient lower bound similar to Theorem 3 in \cite{zhou2021local}. Moreover, we improve their bound from $\|\nabla_{\w} L(\w)\|\geq \Omega(L(\w))$ to $\|\nabla_{\w} L(\w)\|\geq \Omega(L^{2/3}(\w))$ (Theorem \ref{gdbound}). The idea in \cite{zhou2021local} is to pick an arbitrary global minimum $\{\w_i^*\}_{i=1}^n$ and show $\sum_i \langle \nabla_{\w_i}L(\w), \w_i-\w_i^* \rangle\geq L(\w)$. We improve their proof technique by carefully choosing a specific $\{\w_i^*\}_{i=1}^n$ such that $\|\w_i-\w_i^*\|$ is small, then applying Cauchy inequality to get a tighter bound.
This improvement is crucial since it improves the final bound of convergence rate from $L=O(T^{-1})$ in \cite{zhou2021local} to $L=O(T^{-3})$, which matches the lower bound in Theorem \ref{Informal Lower Bound}. This also indicates the optimality of the improved dependency $L^{2/3}$.
%The improved convergence rate also enables us prove the implicit regularization condition. (See Section \ref{Step2: Smoothness and Lipschitzness} for details.)

\noindent\textbf{Non-degeneracy Condition. }
While the lower bound for the convergence rate is straightforward to prove in the worst-case (i.e., from a bad initialization), the average-case (i.e., with random initialization) lower bound is highly-nontrivial due to the existence of several counter-examples in the benign cases (see Section \ref{Case Study}). 
To distinguish these counter-examples from general cases, we establish a new non-degeneracy condition and build our lower bound upon it. We define a potential function $Z(t)=\sum_{i<j} \|\z_i(t)-\z_j(t)\|$, where $\z_i:=\w_i-\langle\w_i, \overline{\vv}\rangle\overline{\vv}$. As long as the initialization is non-degenerate (See Definition \ref{non-degenerate}), then $Z(t)=\Omega(t^{-1})$ and $L(\w(t))\geq \Omega(Z^{3}(t)n^{-5}/\|\vv\|)$, which imply $L(\w(t))\geq \Omega(t^{-3})$. Intuitively, the slow convergence rate of $L$ when $n\geq 2$ is due to the slow convergence of term $\z_i-\z_j, (i\neq j)$, and we define $Z(t)$ to formalize this idea.

\section{Preliminaries}\label{Preliminaries}

\textbf{Notations. } In this paper, bold-faced letters denote vectors. We use $[n]$ to denote $\{1,2,\ldots, n\}$.
For any nonzero vector $\vv\in \mathbf{R}^d$, the corresponding normalized vector is denoted with $\overline{\vv}:=\frac{\vv}{\|\vv\|}$. For two nonzero vectors $\w, \vv\in\mathbf{R}^d$, $\theta(\w,\vv)\coloneqq\arccos\left(\langle\overline{\w},\overline{\vv}\right\rangle)$ denotes the angle between them. 

For simplicity, we also adopt some notational conventions. Denote the gradient of the $i^{\text{th}}$ student neuron with $\nabla_i \coloneqq \frac{\partial L(\w)}{\partial \w_i}$. For any variable $\w$ that changes during the training process, $\w(t)$ denotes its value at the $t^{\text{th}}$ iteration, $e.g.$, $\w_i(t)$ indicates the value of $\w_i$ at the $t^{\text{th}}$ iteration. Sometimes we omit the iteration index $t$ when this causes no ambiguity. 
We abbreviate the expectation taken w.r.t the standard Gaussian as $\mathbb{E}_{\x}[\cdot]\coloneqq \mathbb{E}_{\x\sim\mathcal{N}(\bm{0}, I)}[\cdot]$.

\noindent\textbf{Special Notations for Important Terms.}
There are several important terms in our analysis and we give each of them a special notation.
$\theta_{i}\coloneqq\theta(\w_i, \vv)$ denotes the angle between $\w_i$ and $\vv$.
$\theta_{ij}\coloneqq\theta(\w_i, \w_j)$ denotes the angle between $\w_i$ and $\w_j$. 
Define 
\[\rr \coloneqq \sum_{i=1}^n \w_i -\vv,~~~\text{and}~~~
% Define a residual function $
R: \mathbf{R}^d \to \mathbf{R}, R(\x)\coloneqq\sum_{j=1}^n[\bm{w}_j^\top \bm{x}]_+-[\bm{v}^\top \bm{x}]_+.\]
Then $L(\w)=\mathbb{E}_{\x}[\frac{1}{2}R^2(\x)]$. Define the length of the projection of $\w_i$ onto $\vv$ as \[h_i=\langle\w_i, \overline{\vv}\rangle.\] Lastly, define \[H\coloneqq\|\vv\|-\sum_{i\in[n]} h_i=\langle \overline{\vv}, -\rr\rangle.\]
% \simon{we might move these to the next sections because they are mainly used in proofs.}

\noindent\textbf{Closed Form Expressions of Loss and Gradient.}
When the input distribution is standard Gaussian, closed form expressions of $L(\w)$ and $\nabla L(\w)$ can be obtained \citep{safran18spurious}.  The complete form is deferred to Appendix \ref{Closed Form Expressions}. Here we only present the closed form of gradient as it is used extensively in our analysis: \citet{safran18spurious} showed that when $\w_i\neq \bm{0}, \forall i\in[n]$, the loss function is differentiable with gradient given by:
\begin{equation}\label{gradient}
    \begin{split}
    \nabla_i=\frac{1}{2}\left(\sum_j \w_j -\bm{v}\right)+\frac{1}{2\pi}\left[\left(\sum_{j\neq i}\|\w_j\|\sin\theta_{ij}-\|\vv\|\sin\theta_i\right)\overline{\w}_i-\sum_{j\neq i} \theta_{ij} \w_j +\theta_i\vv\right].
\end{split}
\end{equation}

\section{Proof Overview: Global Convergence}\label{Proof Sketch}
In this section we provide a proof sketch for Theorem \ref{informal global convergence}. Full proofs for all theorems and lemmas can be found in the Appendix. We start with the initialization.

\subsection{Initialization} 
We need the following conditions, which hold with high probability by random initialization.
\begin{lemma}\label{Initialization}
 Let $s_1 \coloneqq \frac{1}{2}\sigma \sqrt{d}, s_2\coloneqq 2\sigma \sqrt{d}$.    When $d=\Omega(\log(n/\delta))$, with probability at least $1-\delta$, the following properties hold at the initialization:
    \begin{equation}\label{Init norm condition}
        \forall i\in[n], s_1\leq \|\w_i(0)\|\leq s_2,~~~\text{and}~~~
    % \end{equation}
    % \begin{equation}
    % \label{Init angle condition}
        % \forall i\in [n], 
        \frac{\pi}{3}\leq \theta_i(0)\leq \frac{2\pi}{3}.
    \end{equation}
\end{lemma}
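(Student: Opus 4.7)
The plan is to verify the two properties separately for each $i$, then union bound over the $n$ neurons and the two events. Since $\w_i(0)\sim\mathcal{N}(\bm{0},\sigma^2 I)$ are independent, both properties reduce to standard high-dimensional Gaussian concentration facts, and the hypothesis $d=\Omega(\log(n/\delta))$ is exactly what turns an $e^{-\Omega(d)}$ failure probability into $\delta/(2n)$.

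For the norm condition, observe that $\|\w_i(0)\|^2/\sigma^2$ is $\chi^2_d$-distributed with mean $d$. The Laurent--Massart bound gives $\Pr(\|\w_i(0)\|^2/\sigma^2 \notin [d/4,\,4d]) \le 2e^{-cd}$ for an absolute constant $c>0$. Choosing the implicit constant in $d=\Omega(\log(n/\delta))$ large enough makes this at most $\delta/(2n)$, and taking square roots yields $s_1=\tfrac12\sigma\sqrt d \le \|\w_i(0)\| \le 2\sigma\sqrt d=s_2$.

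For the angle condition, I would exploit the fact that for a Gaussian vector in $\mathbf{R}^d$ the normalized direction $\overline{\w}_i(0)$ is uniform on the sphere $S^{d-1}$ and is independent of the norm $\|\w_i(0)\|$. Writing $\cos\theta_i(0) = \langle \overline{\w}_i(0),\overline{\vv}\rangle$, the standard tail bound for the marginal of a uniform distribution on $S^{d-1}$ gives
\[
\Pr\!\left(|\cos\theta_i(0)|\ge \tfrac12\right) \;\le\; 2\left(1-\tfrac14\right)^{(d-2)/2} \;\le\; 2\exp\!\left(-\tfrac{d-2}{8}\right).
\]
Again $d=\Omega(\log(n/\delta))$ with a suitable constant makes this at most $\delta/(2n)$. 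Since $\cos(\pi/3)=\tfrac12$ and $\cos(2\pi/3)=-\tfrac12$, the bound $|\cos\theta_i(0)|\le\tfrac12$ is equivalent to $\theta_i(0)\in[\pi/3,\,2\pi/3]$.

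Finally, a union bound across $i\in[n]$ and the two conditions gives a total failure probability of at most $n\cdot 2\cdot \delta/(2n)=\delta$, completing the proof. There is no real technical obstacle here; the only thing to be careful about is choosing the constant inside $d=\Omega(\log(n/\delta))$ large enough to absorb the absolute constants in both concentration inequalities simultaneously.
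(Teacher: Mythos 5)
Your proof is correct, and the norm part is essentially the same as the paper's ($\chi^2_d$ concentration). The angle part takes a genuinely different, though comparably elementary, route. The paper bounds the one-dimensional projection $\langle\w_i(0),\overline{\vv}\rangle\sim\mathcal{N}(0,\sigma^2)$ directly by $\frac14\sigma\sqrt d$ via a univariate Gaussian tail, and then combines this with the already-established norm lower bound $\|\w_i(0)\|\ge\frac12\sigma\sqrt d$ to conclude $|\cos\theta_i(0)|\le\frac12$; this couples the two events and leads the paper to split the failure probability three ways. You instead appeal to the fact that $\overline{\w}_i(0)$ is uniform on $S^{d-1}$ and independent of $\|\w_i(0)\|$, and use the spherical-cap tail $\Pr(|\langle u,e_1\rangle|\ge\epsilon)\le 2(1-\epsilon^2)^{(d-2)/2}$. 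This decouples the angle event from the norm event, gives a marginally cleaner union bound, and makes the geometric picture (a random direction is nearly orthogonal to any fixed $\vv$) explicit. The paper's version has the advantage of relying only on one-dimensional Gaussian tails plus the already-needed $\chi^2$ bound, so it does not invoke the sphere-cap estimate as a separate fact. Both arguments absorb constants into $d=\Omega(\log(n/\delta))$ in the same way, so the two are interchangeable.
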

Condition \eqref{Init norm condition} gives upper bound $s_2$ and lower bound $s_1$ for the norms of $\w_i(0)$, and states
% Condition \eqref{Init angle condition} states that
$\theta_i$ will fall in the interval $[\frac{\pi}{3}, \frac{2\pi}{3}]$ initially. 
These are standard facts in high-dimensional probability. See Appendix \ref{proof of Initialization} for proof details.
% Both conditions are due to natural concentration properties of the random initialization and we refer readers to to Appendix \ref{proof of Initialization} for proof details.
The rest of our analysis will proceed \emph{deterministically.}
% We will show that, as long as the two initial conditions \eqref{Init norm condition} \eqref{Init angle condition} hold, the happening of global convergence is guaranteed.

\subsection{Phase 1}
We present the main theorem of Phase 1, which starts at time $0$ and ends at time $T_1$.
\begin{theorem}[Phase 1]\label{p1}
Suppose the initial condition in Lemma \ref{Initialization} holds. For any $\epsilon_1 = O(1), (\epsilon_1>0
)$, there exists $C=O\left(\frac{\epsilon_1^2}{n}\right)$ such that for any $\sigma=O\left(C\epsilon_1^{48}d^{-1/2}\|\vv\|\right)$ and $\eta=O\left(\frac{nC\sigma\sqrt{d}}{\|\vv\|}\right)$, by setting $T_1\coloneqq \frac{C}{\eta},$
% \footnote{Here we set $\eta$ such that $T_1=C/\eta\in\mathbf{N}$.} 
the following holds for $\forall 1\leq i\leq n, 0\leq t\leq T_1$: 
         \begin{equation}\label{main1}
             s_1\leq \|\w_i(t)\|\leq s_2+2\eta\|\vv\|t,
         \end{equation}
         \begin{equation}\label{main2}
   \text{and}~~~      \sin^2\left(\frac{\theta_i(t)}{2}\right)-\epsilon_1^2\leq\left(1+\frac{\eta t}{s_2/\|\vv\|}\right)^{-1/24}\left(\sin^2\left(\frac{\theta_i(0)}{2}\right)-\epsilon_1^2\right).
         \end{equation}
Consequently, at the end of Phase 1, we have
    \begin{equation}\label{maintheta_p1}
        \forall i\in [n], \theta_i(T_1)\leq 4\epsilon_1,
    \end{equation}
     \begin{equation}\label{main3}
 \text{and}~~~~        h_i(T_1)\leq 2h_j(T_1), \forall i,j\in[n].
     \end{equation}
\end{theorem}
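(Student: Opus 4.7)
The plan is an induction on $t\in[0,T_1]$ using the closed-form gradient \eqref{gradient}. The key structural observation for Phase 1 is that since $\|\w_i(t)\|$ starts at $\Theta(\sigma\sqrt d)$ and grows at most linearly in $\eta t\|\vv\|$ while $\|\vv\|$ is fixed, the dynamics of each neuron is, to leading order, a \emph{decoupled} interaction with the teacher; the student--student terms in $\nabla_i$ contribute only an $O(nC\|\vv\|)=O(\epsilon_1^2\|\vv\|)$ perturbation throughout Phase 1, and the scaling $C=O(\epsilon_1^2/n)$ is chosen exactly to make this perturbation subleading.

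First I would split $-\nabla_i$ from \eqref{gradient} into a teacher-driven part $\bigl(\tfrac12-\tfrac{\theta_i}{2\pi}\bigr)\vv-\tfrac12\w_i+\tfrac{\|\vv\|\sin\theta_i}{2\pi}\overline{\w}_i$ and a student--student interaction $-\tfrac12\sum_{j\ne i}\w_j+\tfrac1{2\pi}\sum_{j\ne i}\bigl(\theta_{ij}\w_j-\|\w_j\|\sin\theta_{ij}\overline{\w}_i\bigr)$. Assuming the norm bound \eqref{main1} holds up to step $t$, the interaction part has norm $O(n(s_2+\eta\|\vv\|t))=O(\epsilon_1^2\|\vv\|)$ and is dominated by the teacher part of norm $O(\|\vv\|)$. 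Then $\|\nabla_i\|\le\|\vv\|(1+O(\epsilon_1^2))\le 2\|\vv\|$ closes the inductive upper bound on $\|\w_i(t+1)\|$. For the lower bound $\|\w_i\|\ge s_1$, I would project $-\nabla_i$ onto $\overline{\w}_i$ and check that the radial component is nonnegative as long as $\|\w_i\|\le s_2$ and $\theta_i$ is bounded away from $\{0,\pi\}$; hence each $\|\w_i\|$ is monotonically non-decreasing while it remains small.

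For the angle bound \eqref{main2}, I would analyze the potential $\phi_i(t)\coloneqq\sin^2(\theta_i(t)/2)=\tfrac12(1-h_i/\|\w_i\|)$. Using $\Delta h_i=-\eta\langle\nabla_i,\overline{\vv}\rangle$ and $\Delta\|\w_i\|=-\eta\langle\nabla_i,\overline{\w}_i\rangle+O(\eta^2)$ together with the teacher decomposition above, one obtains a contraction of the form $\phi_i(t+1)-\epsilon_1^2\le\bigl(1-\tfrac{\eta}{24(s_2/\|\vv\|+\eta t)}\bigr)(\phi_i(t)-\epsilon_1^2)$ plus an additive error of order $\epsilon_1^2\eta$ (absorbed using the high power $\epsilon_1^{48}$ in the scaling of $\sigma$). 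The rate $1/24$ arises from a worst-case comparison between the teacher-driven shrinkage of $h_i/\|\w_i\|$ and the linear growth of $\|\w_i\|$. Iterating across $t\in[0,T_1]$ with $\log(1-x)\approx -x$ yields the multiplicative decay factor $(1+\eta t/(s_2/\|\vv\|))^{-1/24}$ in \eqref{main2}, and at $t=T_1=C/\eta$ with $C\gg s_2/\|\vv\|$ this forces $\phi_i(T_1)\le 2\epsilon_1^2$, i.e.\ $\theta_i(T_1)\le 4\epsilon_1$.

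Finally, for the balance \eqref{main3}: once \eqref{main2} holds, $\theta_i(t)$ is bounded away from $\pi$ for $t\gtrsim s_2/(\eta\|\vv\|)$, so the $\overline{\vv}$-projection of the teacher part of $-\nabla_i$ is bounded below by a constant $\rho=\Theta(\|\vv\|)$. Consequently $h_i$ grows by $\eta\rho$ per step over all but an $O(s_2/(\eta\|\vv\|))$-length initial segment, giving $h_i(T_1)=\Theta(C\|\vv\|)$ uniformly in $i$, while $|h_i(0)|\le s_2\ll C\|\vv\|$ by $\sigma\sqrt d=O(C\epsilon_1^{48}\|\vv\|)$; all $h_i(T_1)$ therefore lie within a factor of two of one another. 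The main technical obstacle I anticipate is the angle recursion: maintaining its inductive hypothesis requires simultaneous two-sided control of $\|\w_i\|$ and a careful bookkeeping of the interaction correction to $\phi_i$, which must be kept genuinely smaller than $\epsilon_1^2$; this is where the sharp scaling $\sigma=O(C\epsilon_1^{48}d^{-1/2}\|\vv\|)$ is essentially forced on us, and propagating the linear-in-$t$ norm bound cleanly through the angle recursion (rather than a degraded geometric bound) is the delicate accounting step.
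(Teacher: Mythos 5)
Your proposal follows essentially the same route as the paper: induct on $t$, treat the student--student cross-terms in $\nabla_i$ as an $O(\epsilon_1^2\|\vv\|)$ perturbation thanks to the small-norm bound, show $\|\w_i\|$ grows by projecting $-\nabla_i$ onto $\overline{\w}_i$, derive a one-step contraction for $\sin^2(\theta_i/2)$ recentered at $\epsilon_1^2$, and deduce balance of the $h_i$ from the near-uniform growth $\approx\frac{\eta}{2}H$ after the angles have shrunk. Two points of bookkeeping are off, though they do not change the shape of the argument. First, the paper does not carry an additive error of order $\epsilon_1^2\eta$ through the recursion and then absorb it; instead it shows the contraction $\sin^2(\theta_i(t+1)/2)-\epsilon_1^2\le\left(1-\frac{\eta\|\vv\|}{12\|\w_i(t+1)\|}\right)\left(\sin^2(\theta_i(t)/2)-\epsilon_1^2\right)$ holds exactly, with $\epsilon_1^2$ as the shifted fixed point absorbing both the student--student perturbation and the $O(\eta^2)$ second-order term. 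This is what lets the iteration go through regardless of the sign of $\sin^2(\theta_i/2)-\epsilon_1^2$ and is the ``algebraic trick'' the paper flags. Second, the interval on which $\theta_i$ is not yet small is not of length $O(s_2/(\eta\|\vv\|))$; because the decay exponent is only $-1/24$, you need $\eta t/(s_2/\|\vv\|)\gtrsim\epsilon_1^{-48}$, which together with $s_2/\|\vv\|=O(C\epsilon_1^{48})$ gives an initial segment of length $\Theta(T_1)$ (the paper uses $T_1/50$). The balance claim survives anyway because $|h_i(T_1/50)|\le s_2+2\eta\|\vv\|T_1/50$ is still a small fraction of the increment $h_i(T_1)-h_i(T_1/50)=\Theta(\eta\|\vv\|T_1)$, but bounding $h_i$ at the end of the initial segment (not at $t=0$) is where the two-sided norm control is actually needed; this is also where the exponent $48$ enters, via the requirement $(1+\eta T_1/(s_2/\|\vv\|))^{-1/24}\le\epsilon_1^2$, rather than via absorbing the additive error.
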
 
% First we give some intuitions for the four conditions above.
\eqref{main1} gives upper and lower bounds for $\|\w_i\|$. \eqref{main2} is used to bound the dynamics of $\theta_i$. \eqref{maintheta_p1} shows that $\theta_i$ is small at the end of Phase 1, so the student neurons are approximately aligned with the teacher neuron. \eqref{main3} states that the student neurons' projections on the teacher neuron are balanced.
Now we briefly describe our proof ideas.

\noindent\textbf{Proof of \eqref{main1}. } Proving the upper bound of $\|\w_i\|$ is straightforward, since triangle inequality implies an upper bound of gradient norm $\|\nabla_i\| = O(\|\vv\| + \sum_i\|\w_i\|)$, and the increasing rate of $\|\w_i\|$ is bounded by $\eta\|\nabla_i\|$. Note that we use $\|\w_i\|$ to upper bound $\|\nabla_i\|$, and use $\|\nabla_i\|$ to upper bound $\|\w_i\|$, so the argument can proceed inductively.

Given with the upper bound, we know that $\|\w_i\|=O(\eta\|\vv\|t)=O(\epsilon_1^2\|\vv\|/n)$ is a small term. Then the gradient \eqref{gradient} can be rewritten as: 
\begin{equation}\label{gd p1}
    \nabla_i = -\frac{1}{2\pi}(\|\vv\|\sin\theta_i\overline{\w}_i+(\pi-\theta_i)\vv)+O(\epsilon_1^2\|\vv\|^2).
\end{equation}
With \eqref{gd p1}, we prove the lower bound $\|\w_i\|\geq s_1$ by showing that $\|\w_i\|$ monotonically increases.

\noindent\textbf{Proof of \eqref{main2}.}
The condition \eqref{main2} aims to show that $\theta_i$ would decrease. Our intuition is clear: Since in each GD iteration, the update of $\w_i$ (the inverse of gradient \eqref{gd p1}) is approximately a linear combination of $\overline{\w}_i$ and $\vv$, the angle between $\w_i$ and $\vv$ is going to decrease.

However, there is a technical difficulty when converting the above intuition into a rigorous proof, which is caused by the small perturbation term $O(\epsilon_1^2\|\vv\|^2)$ in \eqref{gd p1}. When $\theta_i$ is large, showing $\theta_i$ would decrease is easy since this term is negligible. But when $\theta_i$ is too small, the effects of this perturbation term on the dynamics of $\theta_i$ is no longer negligible. As a result, we cannot directly show that $\theta_i$ decreases \emph{monotonically}.
Instead, we prove a weaker condition on the dynamics of $\theta_i$ and perform an algebraic trick (See \eqref{trick} \eqref{anand} in Appendix \ref{Phase 1 of Global Convergence}). Define $\chi_i(t)\coloneqq \sin^2\left(\frac{\theta_i(t)}{2}\right)$. We have 
\begin{equation}\label{anima}
    \begin{split}
        &\chi_i(t)-\chi_i(t+1)
\geq \frac{\eta\|\vv\|}{12\|\w_i(t+1)\|}\left(\chi_i(t)-\epsilon_1^2 \right)\\
\To  &\chi_i(t+1)-\epsilon_1^2
\leq \left(1-\frac{\eta\|\vv\|}{12\|\w_i(t+1)\|}\right)\left(\chi_i(t)-\epsilon_1^2 \right).
    \end{split}
\end{equation}
 Note that \eqref{anima} holds regardless of the sign of $\chi_i(t)-\epsilon_1^2$, hence both cases of $\theta_i$ being large and $\theta_i$ being small are gracefully handled. Therefore we can apply \eqref{anima} iteratively to get $\chi_i(t+1)-\epsilon_1^2\leq \Pi_{t'=1}^{t+1}\left(1-\frac{\eta\|\vv\|}{12\|\w_i(t')\|}\right)\left(\chi_i(0)-\epsilon_1^2 \right)$ (even if $\chi_i(t)-\epsilon_1^2$ might be negative for some $t$). This bound, combined with algebraic calculations, yields \eqref{main2}.

\noindent\textbf{Proof of  \eqref{maintheta_p1}. }
Applying \eqref{main2} with $t=T_1$ and some basic algebraic calculations yields \eqref{maintheta_p1}.

\noindent\textbf{Proof of \eqref{main3}.}
To prove \eqref{main3}, we divide Phase 1 into two intervals: $[0, T_1/50]$ and $[T_1/50, T_1]$. We first show that $\theta_i$ remains small in the second interval: $[T_1/50, T_1]$. Given with $\theta_i$ being small, nice properties of the gradient implies that $h_i$ monotonically increases, and its increasing rate approximately equals $\frac{\eta}{2}H$ (see \eqref{hdynamics}), which is identical for all $i$. Therefore, the increases of $h_i$ in the second interval:  $h_i(T_1) - h_i(T_1/50)$ are balanced. Then we show that $h_i(T_1/50)$ is small compared to $h_i(T_1) - h_i(T_1/50)$. These two properties together shows that $h_i(T_1)$ are balanced.

\subsection{Phase 2}
Our second phase starts at time $T_1 + 1$ and ends at time $T_2$. The main theorem is as follows.

\begin{theorem}[Phase 2]\label{p2}
Suppose the initial condition in Lemma \ref{Initialization} holds. For $\forall \epsilon_2=O(1)$, set $\epsilon_1=O\left(\epsilon_2^{6}n^{-1/2}\right)$ in Theorem \ref{p1}, $\eta =O\left(\frac{\epsilon_1^2\sigma^2d}{\|\vv\|^2}\right)$ and $T_2=T_1+\left\lceil\frac{1}{n\eta}\ln\left(\frac{1}{36\epsilon_2}\right)\right\rceil$, 
then $\forall T_1\leq t\leq T_2$,
\begin{equation}\label{m2-1}
   h_i(t)\leq 2h_j(t), \forall i,j,
\end{equation}
\begin{equation}\label{m2-2}
   \left(1-\frac{n\eta}{2}\right)^{t-T_1}\|\vv\|+6\epsilon_2\|\vv\|\geq H(t) \geq \frac{2}{3}\left(1-\frac{n\eta}{2}\right)^{t-T_1}\|\vv\|-6\epsilon_2\|\vv\|\geq 18\epsilon_2\|\vv\|,
\end{equation}
\begin{equation}\label{m2-3}
    \frac{2\|\vv\|}{n}\geq h_i(t)\geq \frac{s_1}{2}, \forall i.
\end{equation}
\begin{equation}\label{m2-4}
    \theta_i(t)\leq \epsilon_2, \forall i.
\end{equation}
\end{theorem}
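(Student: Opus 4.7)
The plan is to prove \eqref{m2-1}--\eqref{m2-4} jointly by induction on $t$ from $T_1$ to $T_2$. At $t = T_1$ the base case follows from Theorem~\ref{p1}: \eqref{main3} gives \eqref{m2-1}; \eqref{maintheta_p1} combined with $4\epsilon_1 \ll \epsilon_2$ gives \eqref{m2-4}; the $t = T_1$ instance of \eqref{m2-2} holds because $h_i(T_1) = O(\epsilon_1^2\|\vv\|/n) \ll \|\vv\|/n$ forces $H(T_1)$ close to $\|\vv\|$; and for \eqref{m2-3}, $h_i(T_1) \geq \|\w_i(T_1)\|\cos\theta_i(T_1) \geq s_1/2$ by \eqref{main1} and \eqref{maintheta_p1}.

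For the inductive step, project the gradient \eqref{gradient} onto $\overline{\vv}$. Under the inductive hypotheses, $\theta_{ij}(t) \leq 2\epsilon_2$ by triangle inequality, so every bracketed term of $\nabla_i$ carries a factor $\sin\theta_i$, $\sin\theta_{ij}$, $\theta_i$, or $\theta_{ij}$ of size $O(\epsilon_2)$. Using $\|\w_j\| \leq (2/\cos\theta_j)\, h_j = O(\|\vv\|/n)$ (from \eqref{m2-3} and \eqref{m2-4}), we obtain
\begin{equation*}
\langle \overline{\vv}, \nabla_i \rangle = -\frac{H(t)}{2} + \xi_i(t), \qquad |\xi_i(t)| = O(\epsilon_2 \|\vv\|),
\end{equation*}
so $h_i(t+1) - h_i(t) = \eta H(t)/2 - \eta \xi_i(t)$. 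Summing over $i$ yields the recursion $H(t+1) = (1 - n\eta/2) H(t) + O(n\eta\epsilon_2\|\vv\|)$; unrolling it produces \eqref{m2-2}, with the $6\epsilon_2\|\vv\|$ slack absorbing the total additive perturbation.

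For the balance \eqref{m2-1}, I would track $\Delta_i(t) := h_i(t) - h_i(T_1)$. Since the dominant $\eta H(s)/2$ is $i$-independent, $|\Delta_i(t) - \Delta_j(t)| = \bigl|\eta\sum_{s=T_1}^{t-1}(\xi_j(s) - \xi_i(s))\bigr| = O(\epsilon_2\|\vv\|/n)$ after a geometric sum, while each $\Delta_i(t)$ reaches $\Theta(\|\vv\|/n)$ well before $T_2$. Combined with $h_i(T_1) = O(\epsilon_1^2\|\vv\|/n)$ being negligible, this preserves the ratio $h_i/h_j \in [1/2, 2]$. The upper bound $h_i \leq 2\|\vv\|/n$ in \eqref{m2-3} then follows from $\sum_i h_i \leq \|\vv\|$ together with \eqref{m2-1}; the lower bound $h_i \geq s_1/2$ is preserved because the lower bound in \eqref{m2-2} gives $H(t) \geq 18\epsilon_2\|\vv\|$, so $\langle\overline{\vv}, \nabla_i\rangle = -H/2 + \xi_i < 0$, and thus $h_i$ is monotonically nondecreasing from its base-case value.

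Property \eqref{m2-4} requires controlling the orthogonal part $\z_i := \w_i - h_i\overline{\vv}$. The projection of $\nabla_i$ orthogonal to $\overline{\vv}$ consists only of the bracketed terms in \eqref{gradient}, each of size $O(\epsilon_2\|\vv\|/n)$ after substituting $\|\w_j\| = O(\|\vv\|/n)$; over $T_2 - T_1 = O(\log(1/\epsilon_2)/(n\eta))$ steps the total orthogonal drift is $O(\epsilon_2\|\vv\|\log(1/\epsilon_2)/n^2)$, while $h_i \geq \Theta(\|\vv\|/n)$, so $\tan\theta_i = \|\z_i\|/h_i$ stays well within $\epsilon_2$. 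The main obstacle is precisely this angle control: it demands that $|\xi_i|$ be much smaller than $\eta H/2$ throughout Phase 2, and the stringent requirement $\epsilon_1 = O(\epsilon_2^6 n^{-1/2})$ in the statement is exactly what is needed to keep the cumulative perturbation small enough for the balance \eqref{m2-1} and the angle bound \eqref{m2-4} to self-sustain across all of Phase 2.
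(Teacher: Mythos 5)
Your base case and the arguments for \eqref{m2-1}--\eqref{m2-3} are essentially aligned with the paper, modulo that the paper proves \eqref{m2-1} by a cleaner pointwise inequality $\frac{\eta}{2}H(t) - Q_i(t)\leq 2\left(\frac{\eta}{2}H(t) - Q_j(t)\right)$ (using $|Q_i|\leq\frac{1}{3}\cdot\frac{\eta}{2}H(t)$, which is exactly what the lower bound $H(t)\geq 18\epsilon_2\|\vv\|$ in \eqref{m2-2} buys) rather than by accumulating $\Delta_i - \Delta_j$, where your sum actually loses a $\log(1/\epsilon_2)$ factor over $T_2 - T_1 = O(\log(1/\epsilon_2)/(n\eta))$ steps.

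The real gap is in \eqref{m2-4}, which the paper itself flags as the hardest part, and your proposed argument does not work. First, your statement that ``the projection of $\nabla_i$ orthogonal to $\overline{\vv}$ consists only of the bracketed terms'' is false: the term $\frac12\bigl(\sum_j\w_j - \vv\bigr)$ contributes an orthogonal component $\frac12\sum_j \z_j$, of magnitude up to $O(\epsilon_2\|\vv\|)$ (not $O(\epsilon_2\|\vv\|/n)$), and it dominates the bracketed contributions. Second, even with the corrected magnitude, the naive accumulated-drift bound gives $\eta\,(T_2-T_1)\,O(\epsilon_2\|\vv\|) = O(\epsilon_2\log(1/\epsilon_2)\|\vv\|/n)$; divided by $h_i=\Theta(\|\vv\|/n)$ this yields $\theta_i = O(\epsilon_2\log(1/\epsilon_2))$, which already violates the $\theta_i\leq\epsilon_2$ target by a log factor (and the situation is worse near $T_1$, where $h_i=O(\epsilon_1^2\|\vv\|/n)\ll\|\vv\|/n$, contrary to your $h_i\geq\Theta(\|\vv\|/n)$). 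Third, and more fundamentally, a bare triangle-inequality drift bound cannot capture the real mechanism: the dominant orthogonal force $-\frac12\sum_j\z_j$ does not diffuse $\z_i$ outward, it mostly cancels the component of $\z_i$ aligned with $\sum_j\z_j$, and this is exactly what must be proven. The paper instead computes the exact dynamics of $\cos\theta_i$, shows $\cos\theta_i(t+1)-\cos\theta_i(t)\geq -2\eta\sum_{j\neq i}\sin\theta_i(\sin\theta_i+\sin\theta_j) + I_2$ (so the bound for $\theta_i$ depends on other $\theta_j$), and then aggregates into the potential $V(t)=\sum_i\sin^2(\theta_i(t)/2)$, whose recursion $V(t+1)\leq(1+10n\eta)V(t)+O(n\eta^2\|\vv\|^2/s_1^2)$ closes on itself; the $(1+10n\eta)^{T_2-T_1}=\mathrm{poly}(1/\epsilon_2)$ blow-up is then absorbed by the strong seed $V(T_1)=O(n\epsilon_1^2)$ with $\epsilon_1 = O(\epsilon_2^6 n^{-1/2})$. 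You correctly guess that this strong requirement on $\epsilon_1$ is there to absorb something polynomial in $1/\epsilon_2$, but the potential-function mechanism that makes the induction close is the missing ingredient, and without it the argument does not go through.
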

% We first provide some intuitions for the above four conditions.
\eqref{m2-1} is the continuation of \eqref{main1}, which shows that the projections $h_i$ remain balanced in Phase 2. \eqref{m2-2} bounds the dynamics of $H(t)$. It shows that $H(t)$ exponentially decreases and gives upper and lower bounds. \eqref{m2-3} gives upper and lower bounds for $h_i$. \eqref{m2-4} shows that $\theta_i$ remains upper bounded by a small term $\epsilon_2$ in Phase 2.
Below we prove \eqref{m2-1} \eqref{m2-2} \eqref{m2-3} \eqref{m2-4} together inductively.

\noindent\textbf{Proof of \eqref{m2-1}. }
Similar to \eqref{main3}, note that \eqref{m2-4} guarantees that $ \theta_i$ is small, so we still have that, for $\forall i$, $h_i$ monotonically increases with rate approximately $\frac{\eta}{2}H$. Therefore, $h_i$ will remain balanced.

\noindent\textbf{Proof of \eqref{m2-2}. } To understand why we need the bound \eqref{m2-2}, note that the gradient \eqref{gradient} has the following property:
\begin{equation}\label{gd p2}
    \nabla_i = \frac{1}{2}\rr + O((n\max_{i}\|\w_i\|+\|\vv\|)\max_i \theta_i).
\end{equation}
By \eqref{m2-3} and \eqref{m2-4}, $\max_i \theta_i\leq \epsilon_2
$, and $\max_{i}\|\w_i\|=O(\|\vv\|/n)$. So the second term in \eqref{gd p2} can be bounded as ${O}((n\max_{i}\|\w_i\|+\|\vv\|)\max_i \theta_i)\leq {O}(\epsilon_2\|\vv\|)$. When ${O}(\epsilon_2\|\vv\|)$ is much smaller than the first term $\rr/2$ in \eqref{gd p2}, we have $\nabla_i\approx\rr/2$. Consequently, $\rr$ and $H = \langle \overline{\vv}, -\rr\rangle$ will decrease with an exponential rate. But this will end when $\rr$ becomes no larger than ${O}(\epsilon_2\|\vv\|)$ and the approximation $\nabla_i\approx\rr/2$ no longer holds, and that is the end of Phase 2. 

So $H$ should decrease (with exponential rate) to a small value, and it also should not be too small to ensure that $\|\rr/2\|\gg{O}(\epsilon_2\|\vv\|)$ (since $H=\langle \overline{\vv}, -\rr\rangle$). So we need to use \eqref{m2-2} to simultaneously upper and lower bound $H$. With the above intuition, proving \eqref{m2-2} is straightforward as: $\nabla_i \approx \rr/2 \To H(t+1) \approx (1-n\eta/2) H(t)\approx\cdots\approx (1-n\eta/2)^{t-T_1} H(T_1)$. 

It is worth noting that, we also need to handle a perturbation term when bounding the dynamics of $H(t)$, and we used the same trick as in proving \eqref{main2}.

\noindent\textbf{Proof of \eqref{m2-3}. } 
% \eqref{m2-3} is easy to prove. 
The left inequality can be derived from \eqref{m2-1} and \eqref{m2-2}. The right inequality can be derived from the monotonicity of $h_i$, \eqref{m2-4} and \eqref{main1}.

\noindent\textbf{Proof of \eqref{m2-4}. } This is the most difficult part in Theorem \ref{p2}.
% Like in in Phase 1, We need to bound $\theta_i$ once more, but 
Recall that in Phase 1 we used the gradient approximation \eqref{gd p1} to bound $\theta_i$, but \eqref{gd p1} relies on $\|\w_i\|$ being a small term, which only holds in phase 1.
So this time we use a totally different method to bound $\theta_i$.

First we calculate the dynamics of $\cos\theta_i$ and get (see the proof in Appendix \ref{Phase 2 of Global Convergence} for details): $\cos(\theta_i(t+1))-\cos(\theta_i(t))=I_1+I_2$, where term $I_1\geq -\frac{\eta}{2}\sum_{j\neq i} \sin\theta_i(t)\sin(\theta_i(t)+\theta_j(t))\frac{\|\w_j(t)\|}{\|\w_i(t+1)\|}$, and  term $I_2$ is a small perturbation term. The next step is to establish the condition \eqref{m2-1}, then use it to bound the term $\frac{\|\w_j(t)\|}{\|\w_i(t+1)\|}$ in $I_1$. Consequently, we have 
\begin{equation}\label{hurry}
   \cos(\theta_i(t+1))-\cos(\theta_i(t))=I_1+I_2
\geq -{2\eta}\sum_{j\neq i} \sin\theta_i(t)(\sin\theta_i(t)+\sin\theta_j(t)) + I_2.
\end{equation}

However, this is still not enough to prove the bound. The lower bound of the dynamics of $\cos\theta_i$ in \eqref{hurry} depends on $\theta_j$ where $j\neq i$. Since $\theta_j$ might be much larger then $\theta_i$, the increasing rate of $\theta_i$ still cannot be upper-bounded.

To solve this problem, our key idea is to consider all $\theta_i$'s together. Define a potential function $V(t)\coloneqq\sum_i \sin^2\left(\theta_i(t)/2\right)$, then we can sum the bound in \eqref{hurry} over all $i$'s to get an upper bound for the increasing rate of $V$. Although the bound for $\theta_i$ depends on other $\theta_j$'s, the bound for $V$ only depends on $V$ itself. Consequently, the dynamics of the potential function $V$ can be upper bounded, which yields the final upper bound \eqref{m2-4}.

\subsection{Phase 3}
\begin{theorem}[Phase 3]\label{p3}
Suppose the initial condition in Lemma \ref{Initialization} holds. If we set $\epsilon_2=O(n^{-14})$ in Theorem \ref{p2}, $\eta=O\left(\frac{1}{n^2}\right)$, then $\forall T\in \mathbf{N}$ we have
    \begin{equation}\label{implicit regularization}
        \frac{4\|\vv\|}{n}\geq \|\w_i(T+T_2)\|\geq \frac{\|\vv\|}{4n}~~~\text{and}~~~    L(T+T_2)\leq O\left(\frac{n^4\|\vv\|^2}{\left(\eta T \right)^3}\right).
    \end{equation}
% and
% \begin{equation}

% \end{equation}
\end{theorem}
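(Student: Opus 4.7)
The plan is to treat Phase 3 as a local convergence argument that runs a joint induction on $T$ establishing simultaneously two properties: an improved local gradient lower bound of the form $\|\nabla L(\w)\|\geq\Omega(L(\w)^{2/3}/(\mathrm{poly}(n)\|\vv\|^{1/3}))$, and the implicit-regularization statement $\|\w_i\|\in[\|\vv\|/(4n),\,4\|\vv\|/n]$. By Theorem~\ref{p2}, at time $T_2$ we already have $\theta_i\leq\epsilon_2=O(n^{-14})$, the projections $h_i$ balanced near $\|\vv\|/n$, $H(T_2)=O(\epsilon_2\|\vv\|)$, and hence by the closed form of $L$ the initial loss $L(T_2)=O(n\|\vv\|^2\epsilon_2^2)$ is extremely small, which comfortably seeds the induction.

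For the gradient lower bound I would follow the sharpened version of the argument of \cite{zhou2021local} previewed in Section~\ref{Technical Overview}. Given $\w$ in the good region I pick the explicit nearby minimum $\w_i^\ast\coloneqq(h_i+H/n)\overline{\vv}$; then $\sum_i\w_i^\ast=\vv$ so $\w^\ast$ is a global minimum, and $\|\w_i-\w_i^\ast\|^2=\|\z_i\|^2+(H/n)^2$, where $\z_i\coloneqq\w_i-h_i\overline{\vv}$ is the perpendicular component. The inner-product inequality $\sum_i\langle\nabla_i L(\w),\w_i-\w_i^\ast\rangle\geq L(\w)$ inherited from \cite{zhou2021local} together with Cauchy-Schwarz gives $\|\nabla L(\w)\|\cdot\|\w-\w^\ast\|\geq L(\w)$. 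The critical new step is to bound $\|\w-\w^\ast\|$ by $O(\sqrt{n}(L\|\vv\|)^{1/3})$, much sharper than the naive $O(\sqrt{L})$ one would get from quadratic Hessian arguments. Taylor-expanding the angular factor $g(\theta)=\sin\theta+(\pi-\theta)\cos\theta$ that governs the closed-form loss as $g(\theta)=\pi-\pi\theta^2/2+\theta^3/3+O(\theta^4)$, the quadratic contributions from all student-teacher and student-student cross terms cancel to leading order in the over-parameterized regime (because $\sum_i\w_i\approx\vv$), leaving the genuinely cubic lower bound $L\gtrsim\|\vv\|^{-1}\mathrm{poly}(n)^{-1}(\sum_i\|\z_i\|^2)^{3/2}+H^2$. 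Inverting yields the desired $\|\w-\w^\ast\|=O(\sqrt{n}(L\|\vv\|)^{1/3})$ and plugging in gives $\|\nabla L(\w)\|\geq\Omega(L^{2/3}/(\sqrt{n}\|\vv\|^{1/3}))$.

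Given this, both conclusions follow from standard discrete-time descent arguments. For the loss bound, the descent inequality $L(t+1)\leq L(t)-\tfrac{\eta}{2}\|\nabla L(t)\|^2\leq L(t)-c\eta L(t)^{4/3}/(n\|\vv\|^{2/3})$ combined with concavity of $x\mapsto x^{-1/3}$ yields $L(t+1)^{-1/3}-L(t)^{-1/3}\geq\Omega(\eta/(n\|\vv\|^{2/3}))$, which telescopes from $T_2$ to $T_2+T$ and rearranges, after tracking the $\mathrm{poly}(n)$ factors, to the stated $L(T_2+T)=O(n^4\|\vv\|^2/(\eta T)^3)$. For the norm bound, the companion concavity estimate $L(t)^{1/3}-L(t+1)^{1/3}\geq(L(t)-L(t+1))/(3L(t)^{2/3})\geq\Omega(\eta\|\nabla L(t)\|/(\sqrt{n}\|\vv\|^{1/3}))$ telescopes to $\sum_{t\geq T_2}\eta\|\nabla_{\w_i}L(t)\|\leq\sum_{t\geq T_2}\eta\|\nabla L(t)\|=O(\sqrt{n}\|\vv\|^{1/3}L(T_2)^{1/3})$, and the triangle inequality then controls $\|\w_i(T_2+T)-\w_i(T_2)\|$; with $\epsilon_2=O(n^{-14})$ this drift is negligible compared to the $\Omega(\|\vv\|/n)$ gap between $h_i(T_2)$ and the boundary values $\|\vv\|/(4n)$ and $4\|\vv\|/n$, closing the inductive step on the norm bound and handling the non-smoothness of $L$ at $\w_i=\bm{0}$ automatically.

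The main obstacle is the sharp gradient lower bound with the $L^{2/3}$ exponent: the naive argument of \cite{zhou2021local} only delivers $\|\nabla L\|\geq\Omega(L)$ and the corresponding suboptimal $T^{-1}$ rate. Obtaining $L^{2/3}$ requires both the explicit choice of $\w^\ast$ (which shifts each student by $H/n$ along $\overline{\vv}$ in addition to rotating it onto $\overline{\vv}$) and the cubic-in-angles expansion of the closed-form loss, which captures how the ReLU's kink turns a naively quadratic Hessian into a genuinely cubic loss in the perpendicular directions $\z_i$. Matching the $\Omega(T^{-3})$ lower bound of Theorem~\ref{Informal Lower Bound} confirms this is the correct scaling and hence the right exponent to target.
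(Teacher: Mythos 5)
Your overall strategy mirrors the paper's: a joint induction on $T$ that simultaneously maintains the norm bounds and the fast loss decay, built on (i) an improved gradient lower bound $\|\nabla L\|\gtrsim L^{2/3}$ obtained by picking a nearby global minimum $\w^\ast$ and applying Cauchy--Schwarz, (ii) the one-step descent inequality plus a telescoping estimate for $L^{-1/3}$, and (iii) a bound on total neuron drift $\sum_t\eta\|\nabla L(t)\|$ to keep $\|\w_i\|$ in the safe range and preserve local smoothness. Your choice $\w_i^\ast=(h_i+H/n)\overline{\vv}$ differs from the paper's $\w_i^\ast=\frac{h_i}{\sum_j h_j}\vv$, but both sum to $\vv$ and yield comparable $\|\w_i-\w_i^\ast\|$ estimates; and your concavity-telescoping $L(t)^{1/3}-L(t+1)^{1/3}\gtrsim\eta\|\nabla L(t)\|$ is in fact a cleaner route to the drift bound than the Cauchy--Schwarz argument in the paper's Lemma~\ref{locality}, while giving the same conclusion.

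The one step I would not accept as written is the claim that Taylor-expanding $g(\theta)=\sin\theta+(\pi-\theta)\cos\theta$ and invoking ``cancellation of quadratic contributions'' yields the cubic lower bound $L\gtrsim\mathrm{poly}(n)^{-1}\|\vv\|^{-1}\bigl(\sum_i\|\z_i\|^2\bigr)^{3/2}+H^2$. In the closed form \eqref{loss} the student--student angular terms and the student--teacher angular terms enter with \emph{opposite} signs, so the angular contribution is not obviously nonnegative, let alone comparable to $\theta_i^3$; moreover $\frac14\|\rr\|^2$ is itself quadratic in $\sum_i\z_i$, so the loss is not globally cubic in the $\z_i$'s. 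You would need to argue carefully that the signed cancellation leaves a nonnegative cubic remainder of the right size, and that argument is not supplied. The paper avoids this entirely: Lemma~\ref{theta} lower-bounds $L$ by integrating $\frac12(\w_i^\top\x)^2$ over the region $S_i=\{\x:\x^\top\w_i\geq0,\ \x^\top\vv<0\}$, on which the residual $R(\x)$ is manifestly nonnegative, giving $\|\w_i\|^2\theta_i^3\leq 30\pi L(\w)$ per neuron with no sign bookkeeping. That geometric per-neuron argument is the linchpin of the $L^{2/3}$ bound and is what your sketch is missing. A secondary, smaller omission is that your descent inequality $L(t+1)\leq L(t)-\frac{\eta}{2}\|\nabla L(t)\|^2$ tacitly uses the conditional smoothness estimate $\|\nabla^2 L\|=O(n^2)$ on the line segment from $\w(t)$ to $\w(t+1)$ (the paper's Lemma~\ref{smoothness}), which is exactly why the norm bound must be carried through the induction; it would be worth invoking this explicitly rather than leaving it implicit in the phrase ``handling the non-smoothness automatically.''
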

This is the desired $1/T^3$ convergence rate. 
Our analysis consists of two steps:\\
% \begin{enumerate}
1. Prove a gradient lower bound $\|\nabla L(\w)\|\geq \text{poly}(n^{-1}, \|\vv\|^{-1})L^{2/3}(\w)$.\\
2. Prove that the loss function is smooth and Lipschitz on the gradient trajectory. \\
% \end{enumerate}
Given these two properties, the convergence can be  established via the standard analysis for GD.
% (see [Lemma 1.2.3 in \cite{nesterov2018lectures}]).

\subsubsection{Step 1: Gradient Lower Bound. }
\begin{theorem}[Gradient Lower Bound]\label{gdbound}
    If for every student neuron we have $\frac{4\|\vv\|}{n}\geq \|\w_i\|\geq \frac{\|\vv\|}{4n}$, and 
    $L(\w)= O\left(\frac{\|\vv\|^2}{n^{14}}\right),$ then $\left\|\nabla_{\w}L(\w)\right\|\geq \Omega\left(\frac{{L^{2/3}(\w)}}{n^{2/3}\|\vv\|^{1/3}}\right).$

\end{theorem}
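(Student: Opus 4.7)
The plan is to adapt the Zhou et al.\ strategy of lower-bounding $\|\nabla L(\w)\|$ via an auxiliary global minimizer $\w^*$ and Cauchy--Schwarz, but with a carefully chosen $\w$-dependent $\w^*$ that keeps $\|\w-\w^*\|$ as small as possible. Decomposing $\w_i = h_i\overline{\vv}+\z_i$ with $\z_i\perp\vv$, I take $\w_i^* := h_i^*\overline{\vv}$ with $h_i^* := h_i+H/n$, so that $\sum_i h_i^* = \|\vv\|$ and each $h_i^*\geq 0$ (using $h_i \gtrsim \|\vv\|/n$ from Phase~2 and $|H|$ small in the low-loss regime). Since every $\w_i^*$ is a nonnegative multiple of $\overline{\vv}$, $\sum_i [{\w_i^*}^\top\x]_+ = [\vv^\top\x]_+$ pointwise, so $\w^*$ is a genuine global minimum.

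The first ingredient is a convexity-like inequality $\sum_i \langle \nabla_i L(\w), \w_i - \w_i^*\rangle \geq \Omega(L(\w))$. Starting from $\nabla_i L = \mathbb{E}_{\x}[(f-f^*)\x\,\mathbf{1}[\w_i^\top\x>0]]$ together with $(\w_i-\w_i^*)^\top\x\,\mathbf{1}[\w_i^\top\x>0] = [\w_i^\top\x]_+ - {\w_i^*}^\top\x\,\mathbf{1}[\w_i^\top\x>0]$, the sum telescopes to $2L(\w) - \mathbb{E}_{\x}[(f-f^*)\mathcal{E}(\x)]$, where $\mathcal{E}$ is supported only on the thin wedge where some student's ReLU disagrees with the teacher's. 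Since the hypothesis $L\leq O(\|\vv\|^2/n^{14})$ forces every $\theta_i$ to be small (by a standard single-neuron quadratic lower bound on $L$), Cauchy--Schwarz absorbs the error term and gives the desired inequality.

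The second ingredient is $\|\w - \w^*\|^2 \leq O(L^{2/3}\|\vv\|^{2/3}\mathrm{poly}(n))$. Since $\w_i - \w_i^* = \z_i - (H/n)\overline{\vv}$, writing $\bar{\z} := \frac{1}{n}\sum_i \z_i$ gives
\[
\|\w - \w^*\|^2 = \frac{H^2}{n} + n\|\bar{\z}\|^2 + \sum_i\|\z_i - \bar{\z}\|^2.
\]
Expanding the closed form of $L$ around aligned configurations yields the quadratic lower bound $L \gtrsim H^2 + \|\sum_i\z_i\|^2$ at leading order, which bounds the first two terms by $O(L/n)$. For the deviations I will establish a cubic lower bound $L \gtrsim \|\z_i - \z_j\|^3/(\mathrm{poly}(n)\|\vv\|)$, giving $\sum_i\|\z_i-\bar{\z}\|^2 \leq n\max_{i,j}\|\z_i-\z_j\|^2 \leq O(L^{2/3}\|\vv\|^{2/3}\mathrm{poly}(n))$. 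Combining the two ingredients through Cauchy--Schwarz yields $L \lesssim \|\nabla L\|\cdot\|\w-\w^*\|$, and therefore the claimed $\|\nabla L\| \geq \Omega(L^{2/3}/(n^{2/3}\|\vv\|^{1/3}))$.

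The main obstacle is the cubic lower bound $L \gtrsim \|\z_i-\z_j\|^3/\|\vv\|$ itself. Exact-parameterized theory only supplies a quadratic lower bound in the angle (one-point strong convexity), which is what caps the analysis of Zhou et al.\ at $\|\nabla L\|\gtrsim L$. The key observation I will exploit is that once the radial constraint $\sum_i h_i = \|\vv\|$ is matched, a purely tangential perturbation of the $\z_i$'s is invisible to both first and second order: the aligned linearization of $\sum_i[\w_i^\top\x]_+$ already reproduces $[\vv^\top\x]_+$ exactly, so the leading nonvanishing contribution comes from $O(\|\z_i-\z_j\|/h_i)$-thick boundary layers where different students' ReLU patterns disagree, and integrating against the Gaussian in those layers produces a cubic rather than quadratic dependence. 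Pinning down this exponent exactly---not a weaker $4$ or the generic $2$---is what upgrades the $O(1/T)$ bound to the tight $O(1/T^3)$ that matches Theorem~\ref{Informal Lower Bound}.
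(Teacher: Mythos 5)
Your overall strategy is essentially the paper's: pick a $\w$-dependent global minimizer $\w^*$ along $\overline{\vv}$, prove a descent-direction inequality $\sum_i\langle\nabla_i L, \w_i-\w_i^*\rangle \gtrsim L(\w)$ via a residual decomposition, show $\|\w-\w^*\|$ is $O(L^{1/3})$ rather than $O(1)$, and close with Cauchy--Schwarz. The cosmetic differences---you use the additive correction $h_i^*=h_i+H/n$ where the paper uses the multiplicative $\w_i^*=\frac{h_i}{\sum_j h_j}\vv$, and you apply Cauchy--Schwarz on the concatenated vector $\w-\w^*$ whereas the paper sums $\|\nabla_i L\|\cdot\|\w_i-\w_i^*\|$ block by block---do not change the substance.

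The one real gap is that the crucial cubic inequality you call the ``main obstacle'' is asserted with an intuition but not proved, and the intuition as stated points in a misleading direction. You argue the cubic dependence arises from boundary layers where \emph{different students'} ReLU patterns disagree, framed as a bound $L\gtrsim\|\z_i-\z_j\|^3/(\mathrm{poly}(n)\|\vv\|)$. In fact the clean route (Lemma~\ref{theta} in the paper) is a \emph{single-neuron} bound, obtained by integrating $R(\x)^2$ over the wedge $S_i=\{\x:\w_i^\top\x\ge 0>\vv^\top\x\}$, where the pointwise inequality $R(\x)\ge[\w_i^\top\x]_+\ge 0$ holds because all other ReLU terms are nonnegative there. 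This yields $L\ge\Omega(\|\w_i\|^2\theta_i^3)$, hence $\|\z_i\|=\|\w_i\|\sin\theta_i\lesssim(L\|\w_i\|)^{1/3}$, from which your pairwise statement follows by the triangle inequality. A direct argument over student-vs-student disagreement regions does not have this sign structure: on $\{\w_i^\top\x\ge 0>\w_j^\top\x\}$ the residual $R(\x)$ can take either sign, so $R^2$ cannot be lower-bounded by $(\w_i^\top\x)^2$ there without extra work. (As a sanity check, when all $\z_i$ coincide your pairwise bound is vacuous even though $L$ can be large, which underscores that the bound you actually need concerns each $\|\z_i\|$ individually, not the spread alone---though your decomposition handles the mean via the separate $\|\sum_i\z_i\|$ term, so the endpoint is correct.) Once Lemma~\ref{theta} is supplied, the rest of your argument goes through, with the poly($n$) factors working out as long as one keeps $\|\rr\|^2\le O(n^2 L)$ rather than the stronger $L\gtrsim\|\rr\|^2$ you wrote, and recovers the stated $\Omega(L^{2/3}/(n^{2/3}\|\vv\|^{1/3}))$ bound.
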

%Theorem \ref{gdbound} states that, if $\|\w_i\|=\Theta(\|\vv\|/n)$ and loss is sufficiently small, then the gradient norm is lower bounded by $poly(n^{-1}, \|\vv\|^{-1})L^{2/3}(\w))$, which is a special case of the  Lojasiewicz condition. 
%Its full proof can be found in Appendix \ref{Proofs for Gradient Lower Bound}.
As stated in Section \ref{Technical Overview}, this theorem is an improved version of Theorem 3 in \cite{zhou2021local}, improving the dependency of $L$ from $\|\nabla_{\w} L(\w)\|\geq \Omega(L(\w))$ to $\|\nabla_{\w} L(\w)\|\geq \Omega(L^{2/3}(\w))$.
% \noindent\textbf{Improving the Gradient Lower Bound. }
Below we introduce our idea of improving the bound. 
% First we introduce a lemma.

\begin{lemma}[Gradient Projection Bound]\label{gd_projection}
    Suppose $\w_1^*, \w_2^*, \ldots, \w_n^*$ is a global minimum of loss function $L$. Define $ \theta_{\max}:=\max_{i\in[n]} \theta_i$, then 
    \begin{equation}\label{up}
        \sum_{i=1}^n\left\langle\frac{\partial}{\partial \w_i}L(\w), \w_i-\w_i^*\right\rangle\geq 2L(\w)-O\left(\theta_{\max}^2\|\rr\|\cdot\|\vv\|\right).
    \end{equation}
\end{lemma}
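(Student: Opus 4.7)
The plan is to pick a specific global minimum $\{\w_i^*\}$ tailored to the current iterate $\w$ so that $\|\w_i-\w_i^*\|$ is small, then rewrite the inner product as a Gaussian expectation, cleanly extract the $2L(\w)$ term, and bound the residual using 2D Gaussian geometry on the sign-disagreement sets.

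\textbf{Setup and the clean identity.} As already noted in the paper, every global minimum of $L$ has the form $\w_i^* = h_i^*\overline{\vv}$ with $h_i^*\geq 0$ and $\sum_i h_i^* = \|\vv\|$. I would take $h_i^* := h_i + H/n$, which makes $\{\w_i^*\}$ a valid global minimum (positivity $h_i^*\geq 0$ holds in the Phase~3 regime where $h_i\gtrsim \|\vv\|/n$ and $|H|$ is small) and yields the small residual
\[
\w_i - \w_i^* \;=\; \z_i \;-\; (H/n)\overline{\vv}, \qquad \|\w_i-\w_i^*\|^2 \leq \|\w_i\|^2\sin^2\theta_i + H^2/n^2.
\]
Writing $\nabla_i=\mathbb E_\x[R(\x)\sigma_i(\x)\x]$ with $\sigma_i := \mathbf 1[\w_i^\top\x\geq 0]$ and letting $\sigma_\vv := \mathbf 1[\vv^\top\x\geq 0]$, the global-minimum identity gives $[\vv^\top\x]_+ = \sigma_\vv\sum_i(\w_i^*)^\top\x$ almost surely. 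A direct substitution yields
\[
\sum_i\sigma_i(\x)(\w_i-\w_i^*)^\top\x \;=\; R(\x) + Y(\x),\qquad Y(\x):=\sum_i(\sigma_\vv-\sigma_i)(\w_i^*)^\top\x,
\]
and a sign check on the four cases of $(\sigma_i,\sigma_\vv)$ shows $Y(\x)\geq 0$ pointwise, with the explicit form $Y(\x)=\sum_i h_i^*\,\mathbf 1_{\mathcal A_i}|\overline{\vv}^\top\x|$ where $\mathcal A_i := \{\sigma_i\neq \sigma_\vv\}$. Multiplying by $R(\x)$ and taking expectation gives the clean identity
\[
\sum_i\langle\nabla_i,\w_i-\w_i^*\rangle \;=\; 2L(\w) + \mathbb E[R(\x)Y(\x)].
\]

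\textbf{Bounding the residual; the main obstacle.} It remains to show $|\mathbb E[R(\x)Y(\x)]| = O(\theta_{\max}^2\|\rr\|\|\vv\|)$. I would decompose $R(\x)=\sigma_\vv\,\rr^\top\x + E_1(\x)$, where $E_1$ collects mismatch terms and is supported on $\bigcup_i\mathcal A_i$ with $|E_1(\x)|\leq \sum_i\mathbf 1_{\mathcal A_i}|(\w_i-\w_i^*)^\top\x|$. Cauchy--Schwarz together with $\Pr[\mathcal A_i] = \theta_i/\pi$ and the 2D-Gaussian estimate $\mathbb E[\mathbf 1_{\mathcal A_i}(\overline{\vv}^\top\x)^2] = O(\theta_i^3)$ (valid because $|\overline{\vv}^\top\x|$ vanishes on the boundary of $\mathcal A_i$) handles $\mathbb E[E_1 Y]$. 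For the main piece $\mathbb E[\sigma_\vv\,\rr^\top\x\cdot Y(\x)]$, a sign analysis reduces $\sigma_\vv Y$ to $\sum_i\mathbf 1_{B_i}h_i^*\,\overline{\vv}^\top\x$ with $B_i:=\{\sigma_i=0,\sigma_\vv=1\}$. Decomposing $\rr=-H\overline{\vv}+\rr_\perp$ and working in the 2D plane spanned by $\w_i$ and $\vv$, explicit Gaussian integration gives
\[
\bigl|\mathbb E[(\rr^\top\x)(\overline{\vv}^\top\x)\mathbf 1_{B_i}]\bigr| \;=\; O\bigl((|H|\theta_i+|\rho_i|)\theta_i^2\bigr),
\]
where $\rho_i$ is the component of $\rr_\perp$ in that plane. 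Summing these with $h_i^* = O(\|\vv\|/n)$, $|H|\leq\|\rr\|$, and the Cauchy--Schwarz bound $\sum_i|\rho_i|\leq\sqrt{n}\|\rr\|$ delivers the desired $O(\theta_{\max}^2\|\rr\|\|\vv\|)$ residual bound. The hard part is the last step: the 2D wedge integrals must be computed carefully, keeping the factor $|\overline{\vv}^\top\x|$ that vanishes on the boundary of $B_i$, since losing this cancellation would only give $\theta_{\max}^{3/2}$ and hence a Cauchy--Schwarz bound of the form $\sqrt{L}\cdot(\cdots)$ — producing $\|\nabla L\|\gtrsim L^{1/2}$ instead of the crucial $L^{2/3}$.
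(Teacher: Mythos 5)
Your proposal follows the same strategy as the paper's proof: pass to a global minimum of the form $\w_i^*=h_i^*\overline{\vv}$, derive the clean identity $\sum_i\langle\nabla_i,\w_i-\w_i^*\rangle = 2L(\w)+\mathbb{E}[R(\x)Y(\x)]$ with $Y\geq 0$ pointwise, split $R = R_1 + R_2$ with $R_1(\x)=\sigma_\vv\,\rr^\top\x$, and bound the $R_1$-contribution by a 2D-Gaussian computation over the sign-disagreement wedges $\mathcal A_i$. Two remarks on the details.

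First, you are doing unnecessary work on the mismatch term: your $E_1(\x)$ equals $R_2(\x)=\sum_j\w_j^\top\x\,(\sigma_j-\sigma_\vv)$, which is \emph{pointwise nonnegative} (each summand has a consistent sign), so $\mathbb{E}[E_1 Y]\geq 0$ and the term can simply be dropped from a lower bound; the paper does exactly this, and the Cauchy--Schwarz estimate you develop for $\mathbb{E}[E_1Y]$ is not needed. Second, your refinement of the main term, decomposing $\rr=-H\overline{\vv}+\rho_i\uu_i+\rr''$ in the plane of $(\w_i,\vv)$ so that the $\rr''$ piece integrates to zero and the wedge integrals give $O(|H|\theta_i^3+|\rho_i|\theta_i^2)$, is actually the careful way to render what the paper writes more loosely: the paper's intermediate identity $\mathbb{E}[\|\x\|^2\mathbb{1}_{\mathcal A_i}]=2\theta_i/\pi$ is the 2D-projected second moment, and applying it with the crude bound $|\rr^\top\x|\leq\|\rr\|\|\x\|$ as literally written would smuggle in a factor of $d$; one must project out the components of $\rr$ orthogonal to the 2D plane as you do, and your worry that a naive Cauchy--Schwarz only yields $\theta_i^{3/2}$ is exactly the right concern. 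One small slip: $\sum_i|\rho_i|\leq\sqrt n\,\|\rr\|$ is false in general, since the $\uu_i$ need not be orthogonal; the correct Cauchy--Schwarz instance is $\sum_i h_i^*|\rho_i|\leq\bigl(\sum_i (h_i^*)^2\bigr)^{1/2}\bigl(\sum_i\rho_i^2\bigr)^{1/2}\leq O(\|\vv\|/\sqrt n)\cdot\sqrt n\,\|\rr\|=O(\|\vv\|\,\|\rr\|)$ (or just use $\sum_i|\rho_i|\leq n\|\rr\|$), which still gives the claimed $O(\theta_{\max}^2\|\rr\|\,\|\vv\|)$.
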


Lemma \ref{gd_projection} uses the idea of ``descent direction'' from Lemma C.1 in \cite{zhou2021local}. The idea is to pick a global minimum $\w_1^*, \w_2^*, \ldots, \w_n^*$ and lower bound the projection of gradient on the direction $\w_i-\w_i^*$. 
% This direction is called  ``descent direction'' by [\cite{zhou2021local}] since it intuitively approximates the direction of gradient descent.
Recall that \citet{zhou2021local} made artificial modifications of the network architecture for technical reasons, e.g., they used the absolute value activation $x\to|x|$ instead of ReLU. Therefore, their proof cannot be directly applied to our lemma. However, we show that their idea still works in our setting, and modified their proof to prove Lemma \ref{gd_projection} in Appendix \ref{Proofs for Gradient Lower Bound}.

With Lemma \ref{gd_projection} and several technical lemmas (Lemma \ref{theta}, \ref{r} in Appendix \ref{Proofs for Gradient Lower Bound}), it is easy show that the last term $O\left(\theta_{\max}^2\|\rr\|\cdot\|\vv\|\right)$ in \eqref{up} is small, so
 $\sum_{i=1}^n\left\langle\frac{\partial}{\partial \w_i}L(\w), \w_i-\w_i^*\right\rangle\geq L(\w).$ Then we need to upper bound $\|\w_i-\w_i^*\|$, and that is the step where we make the improvement. In \cite{zhou2021local}, they picked an \emph{arbitrary} global minimum $\{\w_i^*\}_{i=1}^n$ and treated the term $\|\w_i-\w_i^*\|$ as constantly large. Consequently, their gradient lower bound scale with $L^{-1}$, yielding a final convergence rate of $L(\w(T)) \leq  O(T^{-1})$. In contrast, our key observation is that we can pick a \emph{specific} global minimum $\{\w_i^*\}_{i=1}^n$ that depends on $\{\w_i\}_{i=1}^n$.
%  to make the term $\|\w_i-\w_i^*\|$ small.
Specifically, we define 
\[\forall i\in[n], \w_i^*\coloneqq\frac{h_i}{\sum_j h_j}\vv.
\]
Then Lemma \ref{w-w*} shows that $\|\w_i-\w_i^*\|\leq O(L^{1/3}(\w))$ is a small term rather than a constant term. Finally, direct application of Cauchy inequality yields the improved bound Theorem \ref{gdbound}.

\subsubsection{Step 2: Smoothness and Lipschitzness}\label{Step2: Smoothness and Lipschitzness}

The aim of step 2 is to show the smoothness and Lipschitzness of $L$. However, one can see from \eqref{gradient} that $L$ is neither Lipschitz nor smooth. The problem of non-Lipschitzness is easy to address, since \eqref{gradient} implies that $\|\nabla L\|$ is upper bounded by $\|\w_i\|$, and $\|\w_i\|$ is upper bounded by $L(\w)$.
However, the non-smoothness property of $L$ is hard to handle. By the closed form expression of $\nabla^2 L$ (see \eqref{Hessian}), one can see that $\|\nabla^2 L\|$ scales with $\frac{\|\vv\|}{\|\w_i\|}$. Then $\|\nabla^2 L\|\to \infty$ as $\|\w_i\|\to 0$.
%To circumvent this problem, [\cite{zhou2021local}] modified the architecture of the student network, setting the second layer weight of the $i^{th}$ neuron as $\|\w_i\|$. With their absolute value activation, the student network becomes $f(\x) = \sum_{i=1}^n \|\w_i\| |\w_i^\top\x|$, and the loss function becomes smooth (see Lemma 12 in [\cite{zhou2021local}]).
%In this paper, we use a different approach to solve this problem and avoids such apparently undesirable modifications. 

As stated in Section \ref{Technical Overview}, our idea of solving this problem is to show that GD implicitly regularizes $\w_i$ such that $\|\w_i\|$ is always lower and upper bounded, namely \eqref{implicit regularization} in Theorem \ref{p3}. This property ensures the smoothness of $L$ on GD trajectory (see Lemma \ref{smoothness} for details).

\noindent\textbf{Implicit Regularization of Student Neurons. } Next we describe our idea of proving the implicit regularization condition \eqref{implicit regularization}.
It is not hard to give $\|\w_i(T_2)\|$  lower and upper bounds (see Lemma \ref{p3init}). Therefore, we only need to show that the student neurons do not move very far in phase 3. In other words, we wish to bound $\sum_{t=T_2} ^T \eta\|\nabla L(\w(t))\|$ for $\forall T> T_2$. The intuition is very clear: in phase 3, the loss being small implies that the decrease of loss is small. Since the move of student neurons results in the decrease of loss, the change of $\|\w_i\|$ should also be small. However, the following subtlety emerges when constructing a rigorous proof.
 
 \noindent\textbf{The Importance of the Improved Gradient Lower Bound. }We want to emphasize that our improved gradient lower bound (Theorem \ref{gdbound}) is crucial for bounding the movement of student neurons $\sum_{t=T_2} ^T \eta\|\nabla L(\w(t))\|$.  
 There is an intuitive explanation for this: The weaker  bound $\|\nabla L(\w(t))\| \sim L(\w(t))$ implies the rate $L(\w(T))\sim \frac{1}{T}$ (i.e., the rate in \cite{zhou2021local}). Then $\|\nabla L(\w(t))\| \sim L(\w(t)) \sim \frac{1}{T}$ and $\sum_{t=T_2} ^T \eta\|\nabla L(\w(t))\|\sim \sum_{t=T_2} ^T \frac{1}{t}$. But the infinite sum $\sum_{t=T_2} ^\infty \frac{1}{t}$ diverges, so we cannot derive any meaningful bound.

 On the other hand, the improved gradient lower bound $\|\nabla L(\w(t))\| \sim L^{2/3}(\w(t))$ implies the convergence rate $L(\w(T))\sim\frac{1}{T^3} \To\|\nabla L(\w(t))\| \sim L^{2/3}(\w(t)) \sim \frac{1}{T^2} \To \sum_{t=T_2} ^T \eta\|\nabla L(\w(t))\|\sim \sum_{t=T_2} ^T \frac{1}{t^2}$, which is finite.
%  This time the infinite sum $\sum_{t=T_2} ^\infty \frac{1}{t^2}$ converges, a bound of $\sum_{t=T_2} ^T \eta\|\nabla L(\w(t))\|$ can thus be established. 
 See Lemma \ref{locality} for the rigorous argument.
%  A corresponding rigorous argument can be found in 

\subsection{Main Theorem}
Now we are ready to state and prove the formal version of Theorem \ref{informal global convergence}.

\begin{theorem}[Global Convergence]\label{global_convergence}
    For $\forall \delta >0$, if $d=\Omega(\log(n/\delta))$, $\sigma= O\left(n^{-4226}d^{-1/2}\|\vv\|\right),$ $\eta =O\left(\frac{\sigma^2d}{n^{169}\|\vv\|^2}\right)$, then there exists $T_2 = O\left(\frac{\log n}{n\eta}\right)$ such that with probability at least $1-\delta$ over the initialization, for any $T\in\mathbf{N}$,
    $L(\w(T+T_2))\leq O\left(\frac{n^4\|\vv\|^2}{\left(\eta T \right)^3}\right). $
\end{theorem}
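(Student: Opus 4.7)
The plan is to obtain Theorem \ref{global_convergence} essentially as a corollary by chaining together the three phase theorems \ref{p1}, \ref{p2}, \ref{p3} under a single self-consistent choice of parameters. First, I would invoke Lemma \ref{Initialization}: with probability at least $1-\delta$ over the Gaussian initialization, the norm and angle conditions \eqref{Init norm condition} hold. Everything thereafter is deterministic. The remainder of the argument is just to verify that the hypotheses of each phase theorem are satisfied by the $\sigma$ and $\eta$ in the statement, and that the time horizons match up so that the output of Phase $k$ serves as the input of Phase $k+1$.

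The key bookkeeping step is to calibrate the constants $\epsilon_1,\epsilon_2,\sigma,\eta$ backwards from Phase 3. Theorem \ref{p3} demands $\epsilon_2 = O(n^{-14})$ and $\eta = O(n^{-2})$. Plugging $\epsilon_2 = O(n^{-14})$ into Theorem \ref{p2} forces $\epsilon_1 = O(\epsilon_2^{6} n^{-1/2}) = O(n^{-84.5})$ and $\eta = O(\epsilon_1^{2}\sigma^{2}d/\|\vv\|^{2}) = O(\sigma^{2}d/(n^{169}\|\vv\|^{2}))$, which matches the $\eta$ in the theorem statement and is stricter than the $O(n^{-2})$ constraint from Phase 3. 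Feeding $\epsilon_1 = O(n^{-84.5})$ into Theorem \ref{p1} (with $C=O(\epsilon_1^{2}/n)$) gives $\sigma = O(C\,\epsilon_1^{48}\,d^{-1/2}\|\vv\|) = O(\epsilon_1^{50}\,n^{-1}\,d^{-1/2}\|\vv\|) = O(n^{-4226}\,d^{-1/2}\|\vv\|)$, again matching the hypothesis. One must verify along the way that the Phase 1 requirement $\eta = O(nC\sigma\sqrt{d}/\|\vv\|)$ is implied by the (much smaller) Phase 2 choice of $\eta$; this is the only compatibility condition that is not automatic, and a short direct comparison of the two polynomial bounds settles it.

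For the time horizons, Theorem \ref{p1} gives $T_1 = C/\eta = O(\epsilon_1^{2}/(n\eta))$ and Theorem \ref{p2} gives $T_2 - T_1 = \lceil \tfrac{1}{n\eta}\ln(1/(36\epsilon_2))\rceil = O(\log n/(n\eta))$, so $T_2 = O(\log n/(n\eta))$ as claimed. The conclusions \eqref{main1}--\eqref{main3} of Theorem \ref{p1} supply the Phase 2 hypothesis; the conclusions \eqref{m2-3}--\eqref{m2-4} of Theorem \ref{p2} supply exactly the Phase 3 initial conditions $\tfrac{2\|\vv\|}{n} \ge \|\w_i(T_2)\| \ge s_1/2$ and small $\theta_i$, which propagate to the norm sandwich $\tfrac{4\|\vv\|}{n} \ge \|\w_i(T+T_2)\| \ge \tfrac{\|\vv\|}{4n}$ and the loss bound $L(\w(T+T_2)) \le O(n^{4}\|\vv\|^{2}/(\eta T)^{3})$ in Theorem \ref{p3}. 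This is precisely the advertised rate.

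The main obstacle in writing this up is not conceptual but arithmetic: one must track the chain of polynomial exponents $(\epsilon_2 \mapsto \epsilon_1 \mapsto \sigma \mapsto \eta)$ carefully enough to confirm that the single clean statement $\sigma = O(n^{-4226}d^{-1/2}\|\vv\|)$ and $\eta = O(\sigma^{2}d/(n^{169}\|\vv\|^{2}))$ simultaneously satisfies every smallness requirement imposed in Theorems \ref{p1}, \ref{p2}, and \ref{p3}. Once those bookkeeping inequalities are verified and the failure probability is absorbed into the single $1-\delta$ event from Lemma \ref{Initialization}, the conclusion is a direct quotation of \eqref{implicit regularization}.
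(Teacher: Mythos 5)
Your proposal follows the paper's proof exactly: invoke Lemma \ref{Initialization} for the $1-\delta$ initialization event, then chain Theorems \ref{p1}, \ref{p2}, \ref{p3} with the backward parameter calibration $\epsilon_2 = O(n^{-14})$, $\epsilon_1 = O(n^{-84.5})$, $C = O(n^{-170})$, $\sigma = O(n^{-4226}d^{-1/2}\|\vv\|)$, $\eta = O(\sigma^2 d/(n^{169}\|\vv\|^2))$, which is precisely the assignment in Appendix \ref{valuation}. The bookkeeping, the observation that the Phase 2 step-size constraint dominates those of Phases 1 and 3, and the resulting $T_2 = O(\log n/(n\eta))$ all match the paper's argument.
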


To combine three phases of our analysis together, the last step is to assign values to the parameters in Theorem \ref{p1}, \ref{p2}, \ref{p3} ($\epsilon_2, \epsilon_1, C, \sigma, \eta, T_1, T_2$) such that the previous phase satisfies the requirements of the next phase. 
% For example, we need to set $\epsilon_2$ in phase 2 to be $O(n^{-14})$ to satisfy the requirements of Theorem \ref{p3} in phase 3.
For a complete list of the values, we refer the readers to Appendix \ref{valuation}.
With the parameter valuations in Appendix \ref{valuation}, combining the initialization condition (Lemma \ref{Initialization}) and three phases of our analysis (Theorem \ref{p1}, \ref{p2}, \ref{p3}) together proves Theorem \ref{global_convergence} immediately.

\begin{remark}
Careful readers might notice that, if there exists $i$ such that $\w_i = \bm{0}$, then $L$ is not differentiable and gradient descent is not well-defined. However, such a corner case has been naturally excluded in our previous analysis. (See Appendix \ref{Non-degeneracy of Student Neurons} for a detailed discussion.)
\end{remark}

\section{Proof Overview: Convergence Rate Lower Bound} \label{Proof Sketch: Convergence Rate Lower Bound}
In this section,  we provide a general overview for the convergence rate lower bound. Full proofs of all theorems can be found in Appendix \ref{Lower Bound of the Convergence Rate}.
We consider the gradient flow (gradient descent with infinitesimal step size):
\[\frac{\partial \w(t)}{\partial t}=-\frac{\partial L(\w(t))}{\partial \w}, \forall t \geq 0,\]
 while keeping other settings (network architecture, initialization scheme, etc.) unchanged.

 To understand why over-parameterization causes a significant change of the convergence rate, we first investigate several toy cases.
 
\subsection{Case study}\label{Case Study}

\noindent\textbf{Toy Case 1. } Set $n=2$, $\w_1(0) = \lambda_1(0)\vv + \lambda_2(0) \vv^{\bot}, \w_2(0) = \lambda_1(0)\vv - \lambda_2(0) \vv^{\bot}$, where $\lambda_1(0), \lambda_2(0)>0$, $\vv^{\bot}$ is a vector orthogonal with $\vv$ such that $\|\vv^\bot\|=\|\vv\|$. Then $\w_1(0)$ and $\w_2(0)$ are reflection symmetric with respect to $\vv$ (See Figure \ref{Toy case 1}). Consider gradient descent with step size $\eta$ initialized from $(\w_1(0), \w_2(0))$. It is easy to see that the symmetry of $\w_1$ and $\w_2$ is preserved in GD update, so for $ t = 0, 1, 2,\ldots$ there exists $\lambda_1(t), \lambda_2(t)$ such that  $\w_1(t) = \lambda_1(t)\vv + \lambda_2(t) \vv^{\bot}, \w_2(t) = \lambda_1(t)\vv - \lambda_2(t) \vv^{\bot}$. Since $\theta_1(t)=\theta_2(t), \forall t\in\mathbf{N}$, we denote $\theta \coloneqq \theta_1 = \theta_2$. 
Then gradient \eqref{gradient} has the form
\[\begin{split}
    \nabla_1=&\left(\lambda_1 -\frac{1}{2}\right)\vv+\frac{1}{2\pi}\left[\left(\|\w_2\|\sin(2\theta)-\|\vv\|\sin\theta\right)\overline{\w}_1-2\theta\w_2+\theta\vv\right]\\
    =&\left(\lambda_1-\frac{1}{2}+\frac{1}{2\pi}\left(\left(\sin(2\theta)-\frac{\|\vv\|}{\|\w_1\|}\sin\theta\right)\lambda_1-\theta(2\lambda_1-1)\right)\right)\vv\\
    &+\frac{1}{2\pi}\left(2\theta+\sin(2\theta)-\frac{\|\vv\|}{\|\w_1\|}\sin\theta\right)\lambda_2\vv^{\bot}\\
    =&\left(\lambda_1-\frac{1}{2}\right)\left(1-\frac{\theta}{\pi}+\frac{\sin(2\theta)}{\lambda_1}\right)\vv+\frac{1}{2\pi}\left(2\theta+\frac{\lambda_1 - 1/2}{\lambda_1}\sin(2\theta)\right)\lambda_2\vv^{\bot},
\end{split}\]
where the last equality is because $\sin(2\theta)-\frac{\|\vv\|}{\|\w_1\|}\sin\theta = \sin(2\theta)-\frac{\|\vv\|}{\lambda_1\|\vv\|/\cos\theta}\sin\theta =\frac{\sin(2\theta)}{\lambda_1}\left(\lambda_1-\frac{1}{2}\right)$. A similar expression can be computed for $\nabla_2$.

Then we can write out the dynamics of $\lambda_1$ and $\lambda_2$ as
\begin{equation}\label{toy1}
    \lambda_1(t+1) - \frac{1}{2} =  \left(\lambda_1(t)-\frac{1}{2}\right)\left(1-\eta\left(1-\frac{\theta(t)}{\pi}+\frac{\sin(2\theta(t))}{\lambda_1(t)}\right)\right),
\end{equation}
\begin{equation}\label{toy2}
    \lambda_2(t+1) = \lambda_2(t) \left(1 - \frac{\eta}{2\pi}\left(2\theta+\frac{\lambda_1 - 1/2}{\lambda_1}\sin(2\theta)\right)\right).
\end{equation}

Since $\theta = o(1)$, $\lambda_1$ is a constant term, $1-\frac{\theta(t)}{\pi}+\frac{\sin(2\theta(t))}{\lambda_1(t)}\approx 1$, then \eqref{toy1} implies \footnote{Here we use the $\approx$ sign to omit higher order terms.}  $ \lambda_1(t+1) - \frac{1}{2} \approx  \left(\lambda_1(t)-\frac{1}{2}\right)(1-\eta)$. This indicates that $\lambda_1$ converges to $\frac{1}{2}$ exponentially fast. So $\lambda_1 - 1/2 = o(1)\To 2\theta+\frac{\lambda_1 - 1/2}{\lambda_1}\sin(2\theta)\approx 2\theta\approx 2\tan\theta =2\frac{\lambda_2}{\lambda_1}\approx \lambda_2$. Then \eqref{toy2} can be rewritten as $\lambda_2(t+1) \approx \lambda_2(t) \left(1 - 
 \frac{\eta}{2\pi}\lambda_2(t)\right).$ This indicates that $\lambda_2$ converges to $0$ with rate $\lambda_2(t)\sim t^{-1}$. 
 
 Finally, we can compute the loss with \eqref{loss} as $L(\w)=\Theta\left((2\lambda_1 -1)^2+(\sin\theta-\theta\cos\theta)\right)\|\vv\|^2$. Since $(\sin\theta-\theta\cos\theta)\sim \theta^3 \sim \lambda_2^3\sim t^{-3}$, we know that the convergence rate is $L(\w(t))\sim t^{-3}$.

 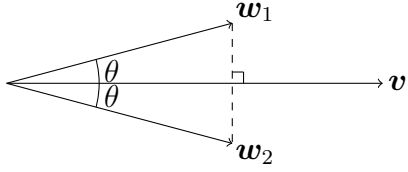
\begin{figure}[t]
\begin{center}
\begin{tikzpicture}
    \node  (1) at (5.2, 0) {$\vv$};
    \node (2) at (3.3, 0.95) {$\w_1$};
    \node (3) at (3.3, -0.95) {$\w_2$};
    \node (4) at (1.4, 0.17) {$\theta$};
    \node (5) at (1.4, -0.17) {$\theta$};

    \draw[->] (0,0) -- (5, 0);
    \draw[->] (0,0) -- (3, 0.8);
    \draw[->] (0,0) -- (3, -0.8);
    \draw[dashed] (3, 0.8) -- (3, -0.8);
    \draw[] (3.15,0) -- (3.15, 0.15);
    \draw[] (3.15,0.15) -- (3, 0.15);

    \path[clip] (3, 0.8) -- (3, -0.8) -- (0,0) --cycle;
    \node[circle, draw=black, minimum size=70pt] at (0,0) (circ) {};
\end{tikzpicture}
\end{center}
    \caption{Toy Case 1}
    \label{Toy case 1}
\end{figure}

\noindent\textbf{Toy Case 2. }
Let $n\geq 2$. We consider the case where all student neurons are parallel with the teacher neuron: $\w_1 = \lambda_1 \vv, \ldots, \w_n = \lambda_n\vv$, where $  \lambda_1,\ldots,\lambda_n \in\mathbf{R}^+$. Then the gradient \eqref{gradient} becomes $\nabla_i = \frac{1}{2}(\sum_j\w_j-\vv)$. One can easily see that $\sum_{i\in[n]} \lambda_i$ converges exponentially fast to $0$, which means that the convergence rate in this toy case is actually linear.

\noindent\textbf{Toy Case 3. }
Let $n\geq 2$. We consider the case where all student neurons are equal: $\w_1=\w_2=\ldots=\w_n$. Then the gradient \eqref{gradient} becomes $\nabla_i =\frac{1}{2}(n\w_i-\vv)+\frac{1}{2\pi}\left[-\|\vv\|\sin\theta_i\overline{\w}_i+\theta_i\vv\right]$. One can see that the gradient in this case is just $n$ times the gradient in the single student neuron case where the student neuron is $\w_i$ and the teacher neuron is $\vv/n$. So the training process is actually equivalent with learning one teacher neuron $\|\vv\|/n$ with one student neuron, with the step size $\eta$ being multiplied by a factor of $n$. So in this toy case, the loss also have linear convergence.

\subsection{Non-degeneracy}

Toy case 1 above already implies that the convergence rate given by Theorem \ref{global_convergence} is \emph{worst case optimal}. However, our ultimate goal is to prove an \emph{average case} lower bound for the convergence rate: Theorem \ref{lower bound main theorem}. One can see that there is a huge gap between the worst case optimality and the average case optimality. Proving the latter is much more difficult since toy case $2$ and $3$ exhibits fast-converging initialization points that break the $\Omega(T^{-3})$ lower bound.

Therefore, we need to utilize the property of random initialization to show our lower bound of $\Omega\left(1/T^3\right)$. Our idea is to show the lower bound holds as long as the initialization is non-degenerate.

To formalize the above idea, we first define a few important terms.
For $\forall i\in[n]$, define $\z_i:=\w_i-\langle\w_i, \overline{\vv}\rangle\overline{\vv}$ as the projection of $\w_i$ onto the orthogonal complement of $\vv$. Define $Z(t)=\sum_{1\leq i<j\leq n} \|\z_i(t)-\z_j(t)\|$.
Define $Q^+(t):=\{i\in[n]| \z_i(t)\neq \bm{0}\}$ as the index set containing all $i$ with $\z_i$ nonzero at time $t$. For $i,j\in Q^+$, define $\kappa_{ij}:=\theta(\z_i, \z_j)$ as the angle between $\z_i$ and $\z_j$. Define $\kappa_{\max}(t):=\max_{i,j\in Q^+(t)} \kappa_{ij}(t)$ as the maximum angle between $\z_i$ and $\z_j$.

\begin{definition}[Non-degeneracy]\label{non-degenerate}
When $n\geq 2$, we say the initialization is non-degenerate if the following two conditions are satisfied.
% \begin{itemize}
    % \item 
    (1) All $\z_i$'s are nonzero: $\forall i\in[n]$, $\z_i(0)\neq 0$.
    % \item 
    (2) $\z_i$'s are not parallel: $\kappa_{\max}(0)>0$.
% \end{itemize}
\end{definition}
Since $\z_i$'s are initialized with a Gaussian distribution, the initialization is only degenerate on a set with Lebesgue measure zero, so the probability of the initialization being non-degenerate is $1$.
Now we are ready to state the formal version of Theorem \ref{Informal Lower Bound} whose proof is in Appendix \ref{Proof of Main Theorem: lower bound}.
\begin{theorem}[Convergence Rate Lower Bound] \label{lower bound main theorem}
Suppose the network is over-parameterized, i.e., $n\geq 2$. Consider gradient flow: $\frac{\partial \w(t)}{\partial t} = -\frac{\partial L(\w(t))}{\partial\w}.$ For $\forall \delta >0$, if the initialization is non-degenerate, $d=\Omega(\log(n/\delta))$, $\sigma= O\left(n^{-4226}d^{-1/2}\|\vv\|\right)$, then there exists $T_2 = O\left(\frac{\log n}{n}\right)$ such that with probability at least $1-\delta$, for $\forall t\geq T_2$  we have
\[L(\w(t))^{-1/3}\leq O\left(\frac{n^{17/3}}{\kappa_{\max}^2(0)\|\vv\|^{2/3}}\right)(t-T_2)+\gamma,\]
where $\gamma\in\mathbf{R}^+$ is a constant that does not depend on $t$.
\end{theorem}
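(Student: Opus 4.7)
The plan is to prove the $t^{-3}$ lower bound via the two-step strategy sketched in Section~\ref{Technical Overview}. Let $Z(t) = \sum_{1\leq i<j\leq n}\|\z_i(t) - \z_j(t)\|$ be the pairwise-spread potential. I will establish: (a) in the local regime $t \geq T_2$, the loss satisfies $L(\w(t)) \geq \Omega\bigl(Z(t)^3 / (n^5 \|\vv\|)\bigr)$; and (b) under a non-degenerate initialization, $Z(t) \geq \Omega\bigl(\kappa_{\max}(0) / (\mathrm{poly}(n)\cdot t)\bigr)$ for $t \geq T_2$. Combining these yields $L^{-1/3} \leq O\bigl(n^{17/3}/(\kappa_{\max}^2(0)\|\vv\|^{2/3})\bigr)(t-T_2) + \gamma$ as required.

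To set up the local regime, I first invoke the gradient-flow analogues of Theorems~\ref{p1} and~\ref{p2} (obtained by taking $\eta \to 0$ in the discrete analysis) to conclude that by $T_2 = O(\log n / n)$, all $\theta_i \leq \epsilon_2 \ll 1$, the $h_i$'s are balanced near $\|\vv\|/n$, and $\|\w_i\| = \Theta(\|\vv\|/n)$. An auxiliary fact that must be shown here is that the angular spread of the $\z_i$'s is preserved through Phases~$1$--$2$, i.e.\ $\kappa_{\max}(T_2) = \Omega(\kappa_{\max}(0))$; this follows from a Gr\"onwall-type bound since the dominant drift on each $\w_i$ is radial (toward $\vv$), while tangential rotation of $\overline{\z}_i$ is driven by lower-order couplings controlled by the already-bounded $\theta_i$. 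For step (a), I expand the closed-form loss~\eqref{loss} around the global minimum: after subtracting the radial contribution (small because $H$ and every $\theta_i$ are small), the leading angular residual arises from the differences $\z_i - \z_j$. Testing the response $R(\x)$ against inputs of the form $\x = \alpha\,\overline{\vv} + \beta\,\overline{(\z_i - \z_j)}$ for the pair $(i,j)$ maximizing $\|\z_i - \z_j\|$, the ReLU activations of $\w_i$ and $\w_j$ disagree on an angular wedge of Gaussian measure $\Theta(\|\z_i - \z_j\|/\|\vv\|)$, each disagreement of magnitude $O(\|\z_i - \z_j\|)$, contributing $\Omega(\|\z_i - \z_j\|^3/\|\vv\|)$ to $\mathbb{E}_\x[R^2(\x)]$; generalizing to the full sum and absorbing $n^5$ into the constant yields step (a).

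For step (b), project the flow onto $\vv^{\perp}$: with $P = I - \overline{\vv}\,\overline{\vv}^\top$, one has $\dot{\z}_i = -P\nabla_i$. From~\eqref{gradient}, the common term $\tfrac{1}{2}P\rr = \tfrac{1}{2}\sum_k \z_k$ is independent of $i$ and cancels in $\dot{\z}_i - \dot{\z}_j$, while $P\vv = 0$ kills the $\theta_i\,\vv$ term. Expanding the remaining contributions $-\|\vv\|\sin\theta_i\,\overline{\w}_i$, $\sum_{k\neq i}\|\w_k\|\sin\theta_{ik}\,\overline{\w}_i$, and $-\sum_{k\neq i}\theta_{ik}\w_k$ around the aligned configuration (using $\theta_i \approx \|\z_i\|/h_i$ and $\theta_{ik} \approx \|\z_i - \z_k\|/h_i$), the linear-in-$\{\z_k-\z_l\}$ parts cancel via identities such as $\sin\theta_i = \theta_i + O(\theta_i^3)$, leaving only quadratic remainders. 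This yields $|dZ/dt| \leq O(\mathrm{poly}(n)\, Z^2/\|\vv\|)$, which integrates to $Z(t)^{-1} \leq O(\mathrm{poly}(n)\,(t-T_2)/\|\vv\|) + Z(T_2)^{-1}$, i.e.\ $Z(t) = \Omega(\kappa_{\max}(0)/t)$.

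The main obstacle is step (b): establishing the pairwise cancellation with constants tight enough to preserve the $\kappa_{\max}^2(0)$ factor in the final bound (rather than some larger power $\kappa_{\max}^p(0)$). The gradient~\eqref{gradient} contains several competing $i$-dependent pieces whose linear parts in $\{\z_k - \z_l\}$ must cancel exactly, and one must track the quadratic remainder uniformly across all $(i,j)$ pairs while simultaneously controlling the slow tangential rotation of the $\overline{\z}_k$'s so that the pair maximizing $\|\z_i - \z_j\|$ at time $t$ is well-approximated by the extremal pair at initialization. Handling these coupled estimates over the infinite time horizon, rather than on a bounded window, is the core technical work of the proof.
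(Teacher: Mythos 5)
Your high-level two-step strategy (bound $L$ below by $Z^3/(n^5\|\vv\|)$, then bound $Z$ below by an $\Omega(1/t)$ rate) matches the paper's, and your step (a) is essentially the same computation as Lemma~\ref{theta} combined with $\|\w_i\| = \Theta(\|\vv\|/n)$. However, there is a genuine gap in step (b), and it is precisely where you say "the core technical work" lies.

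Your cancellation argument for $\dot\z_i - \dot\z_j$ does not give a bound that is quadratic in the pairwise differences $\{\z_k-\z_l\}$. The common radial drift $-\tfrac{1}{2}\sum_k\z_k$ cancels, but the remaining $i$-dependent corrections scale as $O(n^2\max_k\|\z_k\|^2/\|\vv\|)$ rather than $O(\text{poly}(n)Z^2/\|\vv\|)$: the coefficient $a_i = \sum_{k\neq i}(\|\w_k\|/\|\w_i\|)\sin\theta_{ik} - (\|\vv\|/\|\w_i\|)\sin\theta_i$ contains the term $(\|\vv\|/\|\w_i\|)\sin\theta_i\sim n^2\|\z_i\|/\|\vv\|$, which is linear in $\|\z_i\|$ and cannot be absorbed into $Z$. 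This is exactly why toy case~3 (all $\z_i$ identical) is a genuine counterexample: there $Z\equiv 0$ while $\max_k\|\z_k\|$ is macroscopic, so any bound of the form $|dZ/dt|\lesssim Z^2$ alone would not correctly exclude it. The paper's Lemma~\ref{partial Z>} also only gives $\dot Z\geq -O(n^2\|\vv\|\theta_{\max}^2)$, i.e.\ a bound involving $\max_k\|\z_k\|^2$, and the conversion from $\max_k\|\z_k\|$ to $Z$ relies on Lemma~\ref{Z>}: $Z(t)\geq\Omega(\kappa_{\max}(0)\max_i\|\z_i(t)\|)$, which in turn needs $\kappa_{\max}(t)\geq\kappa_{\max}(0)/3$ uniformly for all $t\geq 0$.

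Your mechanism for controlling $\kappa_{\max}$ — a Gr\"onwall estimate through Phases~1--2 — would at best preserve it up to time $T_2$, and the exponential-in-time loss factor in a Gr\"onwall bound makes it unusable over the infinite horizon of Phase~3. The idea you are missing is the paper's automatic separation lemma (Lemma~\ref{max_kappa}): when $\kappa_{ij}=\kappa_{\max}<\pi/2$, the flow satisfies $\tfrac{\partial}{\partial t}\cos\kappa_{ij}\leq -\tfrac{\pi-\theta_{ij}}{\pi}(1-\cos^2\kappa_{ij})<0$, i.e.\ the gradient flow actively pushes the extremal pair of $\overline{\z}_i$'s apart whenever they get too close. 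This sign condition — obtained by observing that the cross-coupling term $I_2$ in the dynamics of $\cos\kappa_{ij}$ is automatically nonpositive for the maximizing pair — is what gives the uniform-in-$t$ lower bound $\kappa_{\max}(t)\geq\kappa_{\max}(0)/3$ (Lemma~\ref{separation}). Without this self-reinforcing separation property, there is no way to rule out the slow collapse of the $\z_i$'s toward a common direction, and the $\kappa_{\max}^2(0)$ dependence in the final bound cannot be established.
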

% The proof for this theorem can be found in Appendix \ref{Proof of Main Theorem: lower bound}.
\begin{remark}
The bound in Theorem \ref{lower bound main theorem} depends on $1/\kappa^{-2}_{\max}(0)$. Such a dependence is reasonable since we have shown that there would be counter-examples if $\kappa_{\max}(0)=0$ (See toy case 3 in Section \ref{Case Study}).
\end{remark}

\subsection{Proof Sketch}

Our key idea of proving Theorem \ref{lower bound main theorem} is to consider the potential function \[Z(t)=\sum_{i<j} \|\z_i(t)-\z_j(t)\| ~~~\text{where}~~~\z_i(t):=\w_i(t)-\langle\w_i(t), \overline{\vv}\rangle\overline{\vv}.\]
With $Z(t)$, our proof consists of three steps:\\
1. Show that with the non-degeneracy condition, $\kappa_{\max}(t)$ is lower bounded. (Lemma \ref{separation})\\
2. Show that when $\kappa_{\max}$ is lower bounded by a positive constant, $ \frac{\partial}{\partial t} Z(t)$ can be lower bounded by $Z^2(t)$ (See \eqref{z dynamics} in Appendix \ref{Proof of Main Theorem: lower bound}), so the convergence rate of $Z(t)$ is at most $Z(t) \sim t^{-1}$.\\
3. Use $Z(t)$ to lower bound $L(\w(t))$: $L(\w(t))\geq \Omega\left(\frac{Z^3(t)}{n^5\|\vv\|}\right).$ (\eqref{Z vs L} in Appendix \ref{Proof of Main Theorem: lower bound}).

\begin{remark}
    The potential function $Z(t)$ provides two implications.\\
$\bullet$ It explains why the convergence rate is different for $n=1$ and $n\geq 2$. For $n\geq 2$, our analysis implies that the slow convergence rate of $Z$ ($Z(t) \sim t^{-1}$) induces the slow convergence rate of $L(t)\sim t^{-3}$. When $n=1$, $Z$ is always zero, so $L$ converges with linear rate. Intuitively, this is because optimizing the difference between student neurons is hard, which is a phenomenon that only exists in the over-parameterized case.\\
$\bullet$ It explains why the convergence rates in the two counter-examples (toy case 2 and 3 in Section~\ref{Case Study}) are linear. In these two cases, the potential function $Z$ degenerates to $0$.
\end{remark}

We need several technical properties of the gradient flow trajectory.
The first one is the implicit regularization condition: \eqref{implicit regularization} in Theorem \ref{p3}, and  we use its gradient flow version (see Theorem \ref{p3flow} for details). We also need Corollary \ref{w>0 gd flow} and Lemma \ref{z>0} to exclude the corner cases when $\w_i = \bm{0}$ and $\z_i(t) = \bm{0}$, where $\kappa_{\max}$ is not well-defined. The proofs  are deferred to Appendix \ref{Implicit Regularization: Gradient Flow Version}.

Lower bounding $\kappa_{\max}$ is the most non-trivial step. We need to use the following lemma.

\begin{lemma}[Automatic Separation of $\z_i$]\label{max_kappa}
If there exists $i, j$ such that $\kappa_{ij}(t)=\kappa_{\max}(t) < \frac{\pi}{2}$, then $\cos\kappa_{ij}(t)$ is well-defined in an open neighborhood of $t$, differentiable at $t$, and
    \begin{equation}\label{shirley}
        \frac{\partial}{\partial t} \cos\kappa_{ij}(t)\leq -\frac{\pi-\theta_{ij}(t)}{\pi}(1-\cos\kappa_{ij}^2(t)).
    \end{equation}
\end{lemma}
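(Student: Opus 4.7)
The plan is to track $\cos\kappa_{ij}(t)=\langle\overline{\z}_i(t),\overline{\z}_j(t)\rangle$ directly from the gradient-flow equation for $\z_i$. Applying the orthogonal projection onto $\vv^{\perp}$ to $-\nabla_i$ (with $\nabla_i$ as in \eqref{gradient}) and using that this projection annihilates $\vv$ and sends $\overline{\w}_i$ to $\z_i/\|\w_i\|$, one obtains
\[\dot{\z}_i \;=\; \alpha_i\z_i + \sum_{k\neq i}\beta_{ik}\,\z_k,\qquad \beta_{ik}\;\coloneqq\;-\frac{\pi-\theta_{ik}}{2\pi}\in\bigl[-\tfrac{1}{2},0\bigr],\]
where $\alpha_i$ is a scalar that will cancel in the sequel. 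The only structural facts I need about $\beta_{ik}$ are its symmetry $\beta_{ik}=\beta_{ki}$ and uniform sign $\beta_{ik}\leq 0$; the sign is a consequence of $\theta_{ik}\leq\pi$.

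Differentiating $\cos\kappa_{ij}$ by the quotient rule, the $\alpha_i\z_i$ and $\alpha_j\z_j$ contributions cancel because they merely rescale $\z_i,\z_j$ radially. The remaining terms split into a \emph{diagonal} piece (the $k=j$ summand of $\dot{\z}_i$ and the $k=i$ summand of $\dot{\z}_j$) and an \emph{off-diagonal} piece ($k\neq i,j$). A short computation collapses the diagonal piece, using $\beta_{ij}=\beta_{ji}$, into
\[\beta_{ij}\,(1-\cos^2\kappa_{ij})\left(\frac{\|\z_j\|}{\|\z_i\|}+\frac{\|\z_i\|}{\|\z_j\|}\right),\]
and by AM--GM the bracketed ratio is at least $2$, so this piece is bounded above by $2\beta_{ij}\sin^2\kappa_{ij}=-\frac{\pi-\theta_{ij}}{\pi}(1-\cos^2\kappa_{ij})$---already the target bound \eqref{shirley}. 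It therefore suffices to show the off-diagonal piece is non-positive.

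After substituting $\langle\z_k,\z_\ell\rangle=\|\z_k\|\|\z_\ell\|\cos\kappa_{k\ell}$, the off-diagonal piece reads
\[\sum_{k\neq i,j}\|\z_k\|\left[\frac{\beta_{ik}\,(\cos\kappa_{kj}-\cos\kappa_{ij}\cos\kappa_{ki})}{\|\z_i\|}+\frac{\beta_{jk}\,(\cos\kappa_{ki}-\cos\kappa_{ij}\cos\kappa_{kj})}{\|\z_j\|}\right].\]
This is where the maximality hypothesis enters: because $\kappa_{ij}=\kappa_{\max}<\pi/2$ we have $\kappa_{ki},\kappa_{kj}\in[0,\kappa_{ij}]\subset[0,\pi/2)$ for every $k$, whence $\cos\kappa_{kj}\geq\cos\kappa_{ij}>0$ and $\cos\kappa_{ki}\in(0,1]$, and hence
\[\cos\kappa_{kj}-\cos\kappa_{ij}\cos\kappa_{ki}\;\geq\;\cos\kappa_{ij}\,(1-\cos\kappa_{ki})\;\geq\; 0,\]
with the symmetric bound $\cos\kappa_{ki}-\cos\kappa_{ij}\cos\kappa_{kj}\geq 0$ holding the same way. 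Multiplying by $\beta_{ik},\beta_{jk}\leq 0$ makes every summand non-positive, completing the argument. I expect this ``interferer'' step to be the main obstacle: without the hypothesis $\kappa_{ij}<\pi/2$ the factor $\cos\kappa_{ki}$ could be negative and the inequality $\cos\kappa_{ij}\cos\kappa_{ki}\leq\cos\kappa_{ij}$ used above would fail. Finally, the well-definedness of $\cos\kappa_{ij}$ in an open neighborhood of $t$ and its differentiability at $t$ reduce to the continuity of $\z_i,\z_j$ together with $\z_i(t),\z_j(t)\neq\bm{0}$ (Lemma \ref{z>0}), plus the smoothness of the gradient flow, which by Corollary \ref{w>0 gd flow} is guaranteed as long as every $\w_k\neq\bm{0}$.
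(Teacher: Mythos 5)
Your proof is correct and follows essentially the same route as the paper's: compute the derivative of $\cos\kappa_{ij}$ from the dynamics \eqref{dynamics-z}, split it into a diagonal piece (bounded via $\frac{\|\z_i\|}{\|\z_j\|}+\frac{\|\z_j\|}{\|\z_i\|}\ge 2$ to give exactly the target) and an off-diagonal piece, and show the latter is non-positive using $\kappa_{ij}=\kappa_{\max}<\pi/2$ to make each cosine difference non-negative. One small citation slip: the nonvanishing of $\z_i(t),\z_j(t)$ is not supplied by Lemma \ref{z>0} (which only guarantees \emph{some} $\z_k(t)\neq\bm 0$) but is already built into the hypothesis, since $\kappa_{ij}(t)$ and $\kappa_{\max}(t)$ are only defined for indices in $Q^+(t)$.
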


Lemma \ref{max_kappa} states that, when the vectors $\z_i$ are too close in direction, gradient flow will automatically separate them, which immediately implies a lower bound of $\kappa_{\max}$ (See Theorem \ref{separation}). Its proof idea is also interesting: we can easily compute the dynamics of $\cos\kappa_{ij}$, which splits into two terms $I_1$ and $I_2$ (see \eqref{kappa dynamics} for them). $I_1$ is a simple term that can be handled easily, but the second term $I_2$ is very complicated and seems intractable. Our key observation is that, although $I_2$ is hard to bound for general $i,j$, it is always non-positive if we pick the pair of $i,j$ such that $\kappa_{ij}=\kappa_{\max}$, and that property implies Lemma \ref{max_kappa} via some routine computations.
\begin{remark}
We note that in toy case 3 in Appendix~\ref{Case Study}, all $\z_i$'s remain parallel and will not be separated. This is because the bound \eqref{shirley} in Lemma \ref{max_kappa} implies that the initial condition $\kappa_{ij} = 0,\forall i,j$ is unstable. To see this, consider the ordinary differential equation $\dot{x} = -\tilde{C}(1-x^2)$ where $\tilde{C}>0$ is a constant. The initial condition $x(0) = 1$ induces the solution $x(t) \equiv 1$, which corresponds to toy case 3. But this initial condition is unstable since any perturbation of $x(0)$ results in solution  $x(t)=\frac{1-\exp(2\tilde{C}t+c_0)}{1+\exp(2\tilde{C}t+c_0)}$, which implies an exponential increase of the perturbation, hence the separation of $\z_i$. 
\end{remark}

Given with a lower bound of $\kappa_{\max}(t)$, and the implicit regularization property in Theorem \ref{p3flow}, step 2 and step 3 can be proved with some geometric lemmas See Lemma \ref{partial Z>}, Lemma \ref{Z>} and the proof of Theorem \ref{lower bound main theorem} in Appendix \ref{Lower Bound of the Convergence Rate}. Combining three steps together finishes our proof.

% Acknowledgments---Will not appear in anonymized version

\bibliography{ref}

\newpage
\appendix

\section{Closed Form Expressions for \texorpdfstring{$L$}{TEXT} and \texorpdfstring{$\nabla L$}{TEXT}}
\label{Closed Form Expressions}

In this section, we present closed forms of $L$ and $\nabla L$, as computed in  \cite{safran18spurious}.

\noindent\textbf{Closed Form of $L(\w)$.}
\[L(\w)=\frac{1}{2}\sum_{i, j=1}^n \Upsilon(\bm{w}_i, \bm{w}_j)-\sum_{i=1}^n \Upsilon(\bm{w}_i, \bm{v})+\frac{1}{2}\Upsilon(\bm{v}, \bm{v}),\]
where
\[\begin{split}
    \Upsilon(\bm{w}, \bm{v})&=\mathbb{E}_{\bm{x}\sim\mathcal{N}(\bm{0}, I)})\left[\left[\bm{w}^\top \bm{x}\right]_+\left[\bm{v}^\top \bm{x}\right]_+\right]\\
    &=\frac{1}{2\pi}\|\bm{w}\|~\|\bm{v}\|(\sin(\theta_{\bm{w}, \bm{v}})+(\pi-\theta_{\bm{w}, \bm{v}})\cos(\theta_{\bm{w}, \bm{v}})).
\end{split}\]

Rearranging terms yields
\begin{equation}\label{loss}
    L(\w)=\frac{1}{4}\|\sum_i \w_i - \bm{v}\|^2+\frac{1}{2\pi}\left[\sum_{i< j}(\sin\theta_{ij}-\theta_{ij}\cos\theta_{ij})\|\w_i\|~\|\w_j\|-\sum_i(\sin\theta_i-\theta_i\cos\theta_i)\|\w_i\|~\|\bm{v}\|\right].
\end{equation}

\noindent\textbf{Closed Form of $\nabla L(\w)$.}
When $\w_i\neq \bm{0}, \forall i\in[n]$, \citet{safran18spurious} showed that the loss function is differentiable and the gradient is given by
\begin{equation*}
    \begin{split}
    \nabla_i&=\mathbb{E}_{\bm{x}\sim\mathcal{N}(\bm{0}, I)}\left[\left(\sum_{j=1}^n\left[\bm{w}_j^\top \bm{x}\right]_+-\left[\bm{v}^\top \bm{x}\right]_+\right)\mathbb{1}\left\{\w_i^\top \x\geq 0\right\}\x\right]\\    
    &=\frac{1}{2}\left(\sum_j \w_j -\bm{v}\right)+\frac{1}{2\pi}\left[\left(\sum_{j\neq i}\|\w_j\|\sin\theta_{ij}-\|\vv\|\sin\theta_i\right)\overline{\w}_i-\sum_{j\neq i} \theta_{ij} \w_j +\theta_i\vv\right].
\end{split}
\end{equation*}

\section{Global Convergence: phase 1}\label{Phase 1 of Global Convergence}
\begin{reptheorem}{p1}
Suppose the initial condition in Lemma \ref{Initialization} holds. For any $\epsilon_1 = O(1), (\epsilon_1>0
)$, there exists $C=O\left(\frac{\epsilon_1^2}{n}\right)$ such that for any $\sigma=O\left(C\epsilon_1^{48}d^{-1/2}\|\vv\|\right)$ and $\eta=O\left(\frac{nC\sigma\sqrt{d}}{\|\vv\|}\right)$, by setting $T_1\coloneqq \frac{C}{\eta},$\footnote{Here we set $\eta$ such that $T_1=C/\eta\in\mathbf{N}$.} the following holds for $\forall 1\leq i\leq n, 0\leq t\leq T_1$: 
         \begin{equation}\label{1}
             s_1\leq \|\w_i(t)\|\leq s_2+2\eta\|\vv\|t, 
         \end{equation}
         \begin{equation}\label{2}
             \sin^2\left(\frac{\theta_i(t)}{2}\right)-\epsilon_1^2\leq\left(1+\frac{\eta t}{s_2/\|\vv\|}\right)^{-1/24}\left(\sin^2\left(\frac{\theta_i(0)}{2}\right)-\epsilon_1^2\right).
         \end{equation}
Consequently, at the end of Phase 1, we have
    \begin{equation}\label{theta_p1}
        \forall i\in [n], \theta_i(T_1)\leq 4\epsilon_1,
    \end{equation}
and
     \begin{equation}\label{3}
         h_i(T_1)\leq 2h_j(T_1), \forall i,j\in[n].
     \end{equation}
\end{reptheorem}

\begin{proof}
By Lemma \ref{Initialization}, \eqref{1} and \eqref{2} holds for $t=0$, and we have $s_1\leq \|\w_i(0)\|\leq s_2, \forall i$.

Now we show with induction that \eqref{1} and \eqref{2} holds for $\forall t\leq T_1$.

    For $t<T_1$, assume \eqref{1} and \eqref{2} holds for $0, 1, \ldots, t$, we prove the case of $t+1$.
    
    First note that  \eqref{2} holds for $0, 1, \ldots, t$ implies that $\forall t'\leq t$, $\sin^2(\theta_i(t')/2)\leq \max\{\sin^2(\theta_i(0)/2),\epsilon_1^2\}\leq\sin^2(\pi/3)\To \theta_i(t')\leq 2\pi/3$.
    
 \noindent   \textbf{Proof of the right inequality of \eqref{1}.} Consider $\forall 0\leq t'\leq t$, note that $\|\w_i(t')\|\geq s_1>0, \forall i$ implies that, for any $i,j$, the gradient $\nabla_i(t')$ and the angles $\theta_i(t'), \theta_{ij}(t')$ are  well-defined.
    
    Note that $s_2= 2\sigma \sqrt{d}  =O\left((\eta T_1)\epsilon_1^{48}\|\vv\|\right)\leq \eta T_1\|\vv\|$, so $\forall 0\leq t'\leq t$ we have 
    \begin{equation}\label{|wi|}
        \|\w_i(t')\|\leq s_2+2\eta \|\vv\|t'\leq s_2+2\eta \|\vv\|T_1\leq 3C \|\vv\|=O\left({\epsilon_1^2/n}\right)\|\vv\|\leq \frac{\|\vv\|}{3n}.
    \end{equation}
    
    By triangle inequality, for $\forall i\in[n], 0\leq t'\leq t$, 
    \begin{equation}\label{gdnorm}
        \begin{split}
        \|\nabla_i(t')\|&\leq\frac{1}{2}\left(\sum_j \left\|\w_j(t') \right\|+\|\vv\|\right)+\frac{1}{2\pi}\left[\sum_{j\neq i}\|\w_j(t')\|+\|\vv\|+\sum_{j\neq i} \pi \|\w_j(t')\| +\pi\|\vv\|\right]\\
        &\leq\frac{1}{2}\left(\frac{1}{3}+1\right)\|\vv\|+\frac{1}{2\pi}\left(\frac{1}{3}+1+\frac{\pi}{3}+\pi\right)\|\vv\|\leq 2\|\vv\|.
    \end{split}
    \end{equation}

    Then $\|\w_i(t+1)\|$ can be upper-bounded as \[\|\w_i(t+1)\|=\|\w_i(0)-\sum_{t'=0}^t \eta\nabla_i(t')\|\leq \|\w_i(0)\|+\sum_{t'=0}^t \eta\|\nabla_i(t')\|\leq s_2+2\eta(t+1)\|\vv\|.\]
    
\noindent \textbf{Proof of the left inequality of \eqref{1}.} Next we show that $\|\w_i(t+1)\|\geq\|\w_i(t)\|\geq s_1$.
Note that 
\[\|\w_i(t+1)\|^2-\|\w_i(t)\|^2=\|\w_i(t)-\eta\nabla_i(t)\|^2-\|\w_i(t)\|^2=-2\eta\langle\w_i(t),\nabla_i(t)\rangle+\eta^2\|\nabla_i(t)\|^2,\]
    so to show $\|\w_i(t+1)\|\geq \|\w_i(t)\|$, we only need to prove that $\langle\overline{\w}_i(t),\nabla_i(t)\rangle<0$ (note that by the induction hypothesis we have $\|\w_i(t)\|>0$, therefore $\overline{\w}_i(t)$ is well-defined):
    \[\begin{split}
        &\langle\overline{\w}_i(t),\nabla_i(t)\rangle\\
        =&-\frac{(\pi-\theta_i(t))\left\langle\overline{\w}_i(t),\vv\right\rangle+\left\langle\overline{\w}_i(t),\|\vv\|\sin\theta_i(t)\overline{\w}_i(t)\right\rangle}{2\pi}\\
        &+\sum_j \frac{\left\langle\overline{\w}_i(t),(\pi-\theta_{ij}(t))\w_j(t)\right\rangle}{2\pi}+\frac{\left\langle\overline{\w}_i(t),\left(\sum_{j\neq i}\|\w_j(t)\|\sin\theta_{ij}(t)\right)\overline{\w}_i(t)\right\rangle}{2\pi}\\
        =&-\frac{(\pi-\theta_i(t))\cos\theta_i(t)+\sin\theta_i(t)}{2\pi}\|\vv\|
        +\sum_j\frac{(\pi-\theta_{ij}(t))\cos\theta_{ij}(t)+\sin\theta_{ij}(t)}{2\pi}\|\w_j(t)\|\\
        \overset{\eqref{|wi|}}{\leq}&-\frac{(\pi-\theta_i(t))\cos\theta_i(t)+\sin\theta_i(t)}{2\pi}\|\vv\|
        +O\left(\epsilon_1^2\right)\|\vv\|\\
        \leq& -\frac{1/12}{2\pi}\|\vv\|+O\left(\epsilon_1^2\right)\|\vv\|\\
        <&0.
    \end{split}\]

The reason for the second to last inequality is that, it is easy to verify by taking derivatives that the expression $(\pi-\theta)\cos\theta+\sin\theta$ monotonically decreases on the interval $[0, \pi]$, and the induction hypothesis implies $\theta_i(t)\leq 2\pi/3$, therefore $(\pi-\theta_i(t))\cos\theta_i(t)+\sin\theta_i(t)\geq (\pi-2\pi/3)\cos(2\pi/3)+\sin(2\pi/3)>1/12.$

Then we have $\|\w_i(t+1)\|\geq\|\w_i(t)\|\geq s_1$.
    
\noindent    \textbf{Proof of \eqref{2}.}  First we calculate the dynamics of $\cos\theta_i$.
    \begin{equation}
        \begin{split}\label{cos-theta}
        &\cos(\theta_i(t+1))-\cos(\theta_i(t))\\
        =&\left\langle\overline{\w}_i(t+1),\overline{\vv}\right\rangle-\left\langle\overline{\w}_i(t),\overline{\vv}\right\rangle\\
        =&\frac{\|\w_i(t)\|\left\langle{\w}_i(t+1),\overline{\vv}\right\rangle-\|\w_i(t+1)\|\left\langle{\w}_i(t),\overline{\vv}\right\rangle}{\|\w_i(t+1)\|\cdot\|\w_i(t)\|}\\
        =&\frac{\left\langle{\w}_i(t),\overline{\vv}\right\rangle\left(\|\w_i(t)\|-\|\w_i(t+1)\|\right)-\eta\|\w_i(t)\|\left\langle\nabla_i(t),\overline{\vv}\right\rangle}{\|\w_i(t+1)\|\cdot\|\w_i(t)\|}\\
        =&\frac{\left\langle{\w}_i(t),\overline{\vv}\right\rangle\frac{\|\w_i(t)\|^2-\|\w_i(t)-\eta\nabla_i(t)\|^2}{\|\w_i(t)\|+\|\w_i(t+1)\|}-\eta\|\w_i(t)\|\left\langle\nabla_i(t),\overline{\vv}\right\rangle}{\|\w_i(t+1)\|\cdot\|\w_i(t)\|}\\
        =&\frac{1}{\|\w_i(t+1)\|}
        \left[\left\langle\overline{\w}_i(t),\overline{\vv}\right\rangle
        \frac{2\eta\left\langle\w_i(t), \nabla_i(t)\right\rangle-\eta^2\|\nabla_i(t)\|^2}{\|\w_i(t)\|+\|\w_i(t+1)\|}
        -\eta\left\langle\nabla_i(t),\overline{\vv}\right\rangle\right]\\
        =&\frac{1}{\|\w_i(t+1)\|}
        \biggl[\eta\left\langle\overline{\w}_i(t),\overline{\vv}\right\rangle
        \left\langle\overline{\w}_i(t), \nabla_i(t)\right\rangle
        +\left\langle\overline{\w}_i(t),\overline{\vv}\right\rangle
        \left(\frac{2\eta\left\langle\w_i(t), \nabla_i(t)\right\rangle}{\|\w_i(t)\|+\|\w_i(t+1)\|}-\frac{2\eta\left\langle\w_i(t), \nabla_i(t)\right\rangle}{2\|\w_i(t)\|}\right)\\
        &-\eta^2\left\langle\overline{\w}_i(t),\overline{\vv}\right\rangle
        \frac{\|\nabla_i(t)\|^2}{\|\w_i(t)\|+\|\w_i(t+1)\|}
        -\eta\left\langle\nabla_i(t),\overline{\vv}\right\rangle\biggr]\\
        =&\underbrace{\frac{\eta}{\|\w_i(t+1)\|}
        \left\langle\left\langle\overline{\w}_i(t),\overline{\vv}\right\rangle\overline{\w}_i(t)-\overline{\vv}, \nabla_i(t)\right\rangle}_{I_1}\\
        &+\underbrace{\frac{\eta\left\langle\overline{\w}_i(t),\overline{\vv}\right\rangle}{\|\w_i(t+1)\|}\biggl[
        \frac{\left\langle\overline{\w}_i(t), \nabla_i(t)\right\rangle\left(\|\w_i(t)\|-\|\w_i(t+1)\|\right)}{\|\w_i(t)\|+\|\w_i(t+1)\|}
        -\eta
        \frac{\|\nabla_i(t)\|^2}{\|\w_i(t)\|+\|\w_i(t+1)\|}\biggr]}_{I_2}.
    \end{split}
    \end{equation}

For the first term $I_1$, note that the vector $\left\langle\overline{\w}_i(t),\overline{\vv}\right\rangle\overline{\w}_i(t)-\overline{\vv}$ is orthogonal with $\w_i$, therefore,

\begin{equation}\label{T1}
    \begin{split}
    I_1&=\frac{\eta}{\|\w_i(t+1)\|}
\left\langle\left\langle\overline{\w}_i(t),\overline{\vv}\right\rangle\overline{\w}_i(t)-\overline{\vv}, \frac{1}{2\pi}\left[\sum_{j\neq i}(\pi-\theta_{ij}(t))\w_j(t)-(\pi-\theta_i(t))\vv\right]\right\rangle\\
&=\frac{\eta}{2\pi\|\w_i(t+1)\|}\left[(\pi-\theta_i(t))\sin^2\theta_i(t)\|\vv\|-\sum_{j\neq i}(\pi-\theta_{ij}(t))\left(\cos\theta_j(t)-\cos\theta_i(t)\cos\theta_{ij}(t)\right)\|\w_j(t)\|\right]\\
&\geq \frac{\eta}{2\pi\|\w_i(t+1)\|}\left[(\pi-\theta_i(t))\sin^2\theta_i(t)\|\vv\|-n\pi\cdot 2\|\w_j(t)\|\right]\\
&\overset{\eqref{|wi|}}{\geq} \frac{\eta}{2\pi\|\w_i(t+1)\|}\left[(\pi-\theta_i(t))\sin^2\theta_i(t)\|\vv\|-n\pi\cdot 2O(C)\|\vv\|\right]\\
&\geq \frac{\eta\|\vv\|}{2\pi\|\w_i(t+1)\|}\left[\frac{\pi}{3}\sin^2\theta_i(t)-O\left(nC\right)\right],
\end{split}
\end{equation}
where the last inequality is because $\theta_i(t)\leq\pi/3$.

The second term $I_2$ is a small perturbation term, which can be lower bounded as:
\begin{equation}\label{T2bound}
    \begin{split}
    &I_2\geq -\frac{\eta}{\|\w_i(t+1)\|}\left[\frac{\|\nabla_i(t)\|\cdot\|\eta\nabla_i(t)\|}{2s_1}+\eta\frac{\|\nabla_i(t)\|^2}{2s_1}\right]\\
    &=-\frac{\eta^2}{s_1\|\w_i(t+1)\|}\|\nabla_i(t)\|^2\overset{\eqref{gdnorm}}{\geq}-\frac{4\eta^2}{s_1\|\w_i(t+1)\|}\|\vv\|^2.
\end{split}
\end{equation}
Combining both terms together, we get
\[\begin{split}
    \cos(\theta_i(t+1))-\cos(\theta_i(t))=I_1+I_2
&\geq\frac{\eta\|\vv\|}{2\pi\|\w_i(t+1)\|}\left[\frac{\pi}{3}\sin^2\theta_i(t)-O\left(nC\right)-8\pi\eta\frac{\|\vv\|}{s_1}\right]\\
&\geq\frac{\eta\|\vv\|}{6\|\w_i(t+1)\|}\left[\sin^2\theta_i(t)-O\left(nC\right)\right],
\end{split}\]
where the last inequality is because $\eta=O\left(\frac{nC\sigma\sqrt{d}}{\|\vv\|}\right) \To 8\pi\eta\frac{\|\vv\|}{s_1}=O\left(nC\right)$.

Therefore, we have \begin{equation}\label{trick}
    \begin{split}
    &\sin^2\left(\frac{\theta_i(t)}{2}\right)-\sin^2\left(\frac{\theta_i(t+1)}{2}\right)
    =\frac{\cos(\theta_i(t+1))-\cos(\theta_i(t))}{2}\\
&\geq\frac{\eta\|\vv\|}{12\|\w_i(t+1)\|}\left[\sin^2\theta_i(t)-O\left(nC\right)\right]
\geq \frac{\eta\|\vv\|}{12\|\w_i(t+1)\|}\left[\sin^2\left(\frac{\theta_i(t)}{2}\right)-\epsilon_1^2 \right],
\end{split}
\end{equation} 
where the last inequality is because $\cos(\theta_{i}(t)/2)\geq \cos(\pi/3)=1/2\To \sin\theta_i(t)=2\sin(\theta_i(t)/2)\cos(\theta_{i}(t)/2)\geq \sin(\theta_i(t)/2)$, and $C= O(\epsilon_1^2/n)\To O(nC)\leq \epsilon_1^2.$

Then we have
\begin{equation*}
    \begin{split}
    \sin^2\left(\frac{\theta_i(t+1)}{2}\right)-\epsilon_1^2
    &\leq \sin^2\left(\frac{\theta_i(t)}{2}\right)-\frac{\eta\|\vv\|}{12\|\w_i(t+1)\|}\left[\sin^2\left(\frac{\theta_i(t)}{2}\right)-\epsilon_1^2\right]-\epsilon_1^2\\
    &=\left(1-\frac{\eta\|\vv\|}{12\|\w_i(t+1)\|}\right)\left(\sin^2\left(\frac{\theta_i(t)}{2}\right)-\epsilon_1^2\right)\\
    &\leq \left(1-\frac{\eta}{12\left(s_2/\|\vv\|+2\eta (t+1)\right)}\right)\left(\sin^2\left(\frac{\theta_i(t)}{2}\right)-\epsilon_1^2\right).
\end{split}
\end{equation*}

For the same reason, for any $t'\in\{0,1,\ldots, t\}$ we have
\begin{equation}\label{anand}
    \sin^2\left(\frac{\theta_i(t'+1)}{2}\right)-\epsilon_1^2
    \leq \left(1-\frac{\eta}{12\left(s_2/\|\vv\|+2\eta (t'+1)\right)}\right)\left(\sin^2\left(\frac{\theta_i(t')}{2}\right)-\epsilon_1^2\right).
\end{equation}

 $\sin^2\left(\frac{\theta_i(t')}{2}\right)-\epsilon_1^2$ can both be positive or negative, but \eqref{anand} always holds regardless of its sign. Since $1-\frac{\eta}{12\left(s_2/\|\vv\|+2\eta (t'+1)\right)}$ is always positive, and multiplying both sides of an inequality by a positive number does not change the direction of the inequality, we can iteratively apply \eqref{anand} and get
\[\begin{split}
    \sin^2\left(\frac{\theta_i(t+1)}{2}\right)-\epsilon_1^2
    \leq& \prod_{u=1}^{t+1}\left(1-\frac{\eta}{12\left(s_2/\|\vv\|+2\eta u\right)}\right)\left(\sin^2\left(\frac{\theta_i(0)}{2}\right)-\epsilon_1^2\right)\\
    \leq& \prod_{u=1}^{t+1}\exp\left(-\frac{\eta}{12\left(s_2/\|\vv\|+2\eta u\right)}\right)\left(\sin^2\left(\frac{\theta_i(0)}{2}\right)-\epsilon_1^2\right)\\
    %\leq& \exp\left(\sum_{u=1}^{t+1}-\frac{\eta}{12\left(s_2/\|\vv\|+2\eta u\right)}\right)\left(\sin^2\left(\frac{\theta_i(0)}{2}\right)-\epsilon_1^2\right)\\
     \leq& \exp\left(\int_{u=1}^{t+2}-\frac{\eta}{12\left(s_2/\|\vv\|+2\eta u\right)}du\right)\left(\sin^2\left(\frac{\theta_i(0)}{2}\right)-\epsilon_1^2\right)\\
     =&\exp\left(-\frac{1}{24}\ln\left(\frac{s_2+(t+2)2\eta\|\vv\|}{s_2+2\eta\|\vv\|}\right)\right)\left(\sin^2\left(\frac{\theta_i(0)}{2}\right)-\epsilon_1^2\right)\\
     \leq& \left(1+\frac{\eta (t+1)}{s_2/\|\vv\|}\right)^{-1/24}\left(\sin^2\left(\frac{\theta_i(0)}{2}\right)-\epsilon_1^2\right),
\end{split}\]
where the last inequality is because $2\eta\|\vv\|\leq s_1\leq s_2$. (Note that, by Lemma \ref{Initialization}, $\sin^2\left(\frac{\theta_i(0)}{2}\right)-\epsilon_1^2$ is always positive.)

\noindent \textbf{Proof of \eqref{theta_p1}.}
W.L.O.G., suppose $T_1/50\in\mathbb{N}$.
By \eqref{2}, for $\forall t\in[T_1/50, T_1]$ we have that
\begin{equation}\label{V(t1)}
    \sin^2\left(\frac{\theta_i(t)}{2}\right)-\epsilon_1^2\leq\left(1+\frac{\eta t}{s_2/\|\vv\|}\right)^{-1/24}\left(\sin^2\left(\frac{\theta_i(0)}{2}\right)-\epsilon_1^2\right)\leq \left(\frac{\eta T_1/50}{O\left((\eta T_1)\epsilon_1^{48}\right)}\right)^{-1/24}\leq \epsilon_1^2.
\end{equation}

Since $\epsilon_1=O(1)$ is a sufficiently small constant, we have
\[\forall t\in[T_1/50, T_1], \sin\left(\frac{\theta_i(t)}{2}\right)\leq \sqrt{2}\epsilon_1
\To \forall t\in[T_1/50, T_1], \theta_i(t)\leq 4\epsilon_1.\]
This implies \eqref{theta_p1} immediately.

\noindent \textbf{Proof of \eqref{3}.}
Consider $\forall t\in[T_1/50, T_1]$. The dynamics of $h_i$ is given by 
\begin{equation}\label{hdynamics}
    \begin{split}
    &h_i(t+1)-h_i(t)=-\eta\langle\nabla_i(t),\overline{\vv}\rangle\\
    =&\underbrace{\frac{\eta}{2}{\left(\|\bm{v}\|-\sum_j h_j(t) \right)}}_{\frac{\eta}{2}H(t)}\\
    &-\underbrace{\frac{\eta}{2\pi}\left[\left(\sum_{j\neq i}\|\w_j(t)\|\sin\theta_{ij}(t)-\|\vv\|\sin\theta_i(t)\right)\cos\theta_i(t)-\sum_{j\neq i} \theta_{ij}(t) h_j(t)     +\theta_i(t)\|\vv\|\right]}_{Q_i(t)}.
\end{split}
\end{equation}
The first term is just $\frac{\eta}{2}H(t)$. Denote the second term with $Q_i(t)$. Then $h_i(t+1)=h_i(t)+\frac{\eta}{2}H(t)-Q_i(t)$.

$H(t)$ can be lower bounded as
\begin{equation}\label{T3}
    H(t)=\|\bm{v}\|-\sum_j h_j(t)\geq \|\bm{v}\|-\sum_j \|\w_j(t)\| \overset{\eqref{|wi|}}{\geq} \|\vv\|-n\cdot \frac{\|\vv\|}{3n}=\frac{2\|\vv\|}{3}.
\end{equation}

On the other hand, the second term $Q_i(t)$ is a small perturbation term, whose norm can be upper bounded by
\begin{equation}\label{Qnorm}
    \begin{split}
    |Q_i(t)|
    \leq &\frac{\eta}{2\pi}\left[n\cdot\frac{\|\vv\|}{3n}\theta_{ij}(t)+\|\vv\|\theta_i(t)+n\cdot\frac{\|\vv\|}{3n}\theta_{ij}(t)+\theta_i(t)\|\vv\|\right]\\
    \leq& \frac{\eta}{2\pi}\left[\frac{\|\vv\|}{3}8\epsilon_1+\|\vv\|4\epsilon_1+\frac{\|\vv\|}{3}8\epsilon_1+4\epsilon_1\|\vv\|\right]\\
    \leq&3\eta\|\vv\|\epsilon_1
    \leq9\epsilon_1\frac{\eta}{2}H(t)\leq 0.1 \frac{\eta}{2}H(t),
\end{split}
\end{equation}
where the second inequality is because $\theta_{ij}(t)\leq \theta_i(t)+\theta_j(t)\leq 8\epsilon_1$.

Therefore
\[0.3\eta\|\vv\|\leq 0.9\frac{\eta}{2}H(t)\leq h_i(t+1)-h_i(t)=\frac{\eta}{2}H(t)-Q_i(t)\leq 1.1\frac{\eta}{2}H(t), \forall t\in[T_1/50,T_1].\]
Then we have the following bound, which shows that, $\forall i$, $h_i(T_1)-h_i(T_1/50)$ approximately equals to $\sum_{t=T_1/50}^{T_1-1}\frac{\eta}{2}H(t)$:
\begin{equation}\label{u1}
    0.9\left(\sum_{t=T_1/50}^{T_1-1}\frac{\eta}{2}H(t)\right)
\leq h_i(T_1)-h_i\left(T_1/50\right)=\sum_{t=T_1/50}^{T_1-1}\left(h_i(t+1)-h_i(t)\right)
\leq 1.1 \left(\sum_{t=T_1/50}^{T_1-1}\frac{\eta}{2}H(t)\right).
\end{equation}

The next bounds shows that, $\forall i$, $|h_i(T_1/50)|$ is small comparing to $\sum_{t=T_1/50}^{T_1-1}\frac{\eta}{2}H(t)$:
\begin{equation}\label{u2}
    |h_i(T_1/50)|\leq \|\w_i(T_1/50)\|\overset{\eqref{1}}{\leq} s_2+2\eta\|v\|\frac{T_1}{50}\leq \frac{1}{20}\eta\|\vv\|T_1
\overset{\eqref{T3}}{\leq} 0.2 \left(\sum_{t=T_1/50}^{T_1-1}\frac{\eta}{2}H(t)\right).
\end{equation}

\eqref{u1} and \eqref{u2} jointly yields
\[0<0.7\left(\sum_{t=T_1/50}^{T_1-1}\frac{\eta}{2}H(t)\right)
\leq h_i(T_1)
\leq 1.3 \left(\sum_{t=T_1/50}^{T_1-1}\frac{\eta}{2}H(t)\right), \forall i,\]
 which implies \eqref{3} immediately.
    
\end{proof}

\section{Global Convergence: phase 2} \label{Phase 2 of Global Convergence}

\begin{reptheorem}{p2}
Suppose the initial condition in Lemma \ref{Initialization} holds. For $\forall \epsilon_2=O(1)$, set $\epsilon_1=O\left(\epsilon_2^{6}n^{-1/2}\right)$ in Theorem \ref{p1}, $\eta =O\left(\frac{\epsilon_1^2\sigma^2d}{\|\vv\|^2}\right)$ and $T_2=T_1+\left\lceil\frac{1}{n\eta}\ln\left(\frac{1}{36\epsilon_2}\right)\right\rceil$, 
then $\forall T_1\leq t\leq T_2$,
\begin{equation}\label{2-1}
   h_i(t)\leq 2h_j(t), \forall i,j,
\end{equation}
\begin{equation}\label{2-2}
   \left(1-\frac{n\eta}{2}\right)^{t-T_1}\|\vv\|+6\epsilon_2\|\vv\|\geq H(t) \geq \frac{2}{3}\left(1-\frac{n\eta}{2}\right)^{t-T_1}\|\vv\|-6\epsilon_2\|\vv\|\geq 18\epsilon_2\|\vv\|,
\end{equation}
\begin{equation}\label{2-3}
    \frac{2\|\vv\|}{n}\geq h_i(t)\geq \frac{s_1}{2}, \forall i.
\end{equation}
\begin{equation}\label{2-4}
    \theta_i(t)\leq \epsilon_2, \forall i.
\end{equation}
\end{reptheorem}

\begin{proof}
    
    We prove \eqref{2-1}, \eqref{2-2}, \eqref{2-3} and \eqref{2-4} together inductively. First we show the induction base holds.
    
    Note that Theorem \ref{p1} directly implies \eqref{2-1} and \eqref{2-4} for $t=T_1$. For \eqref{2-3}, by \eqref{|wi|} we have $h_i(T_1)\leq \|\w_i(T_1)\|\leq \frac{2\|\vv\|}{n}$, and by \eqref{1} we have $h_i(T_1)=\|\w_i(T_1)\|\cos\theta_i(T_1)\geq \|\w_i(T_1)\|/2\geq s_1/2$. For \eqref{2-2}, note that $0\leq h_i(T_1)\leq \|\w_i\|\overset{\eqref{|wi|}}{\leq}\|\vv\|/(3n)~\To~\|\vv\|\geq H(T_1)\geq 2\|\vv\|/3$.
    
    Now suppose \eqref{2-1}, \eqref{2-2} \eqref{2-3} and \eqref{2-4} holds for $T_1, T_1+1,\ldots, t$, next we show the case of $t+1$.
    
\noindent    \textbf{Proof of \eqref{2-1}.}
    First note that due to $\theta_i(t)\leq \epsilon_2$ and $ \frac{2\|\vv\|}{n}\geq h_i(t)\geq \frac{s_1}{2}$ we have 
    \begin{equation}\label{wnorm2}
        \frac{3\|\vv\|}{n}\geq\frac{h_i(t)}{\cos\epsilon_2}\geq\frac{h_i(t)}{\cos\theta_i(t)}=\|\w_i(t)\|\geq h_i\geq\frac{s_1}{2}, \forall i.
    \end{equation}
    
    As computed in \eqref{hdynamics}, $h_i(t+1)=h_i(t)+\frac{\eta}{2}H(t)-Q_i(t), \forall i$.
    
    Note that $\theta_{ij}(t)\leq \theta_i(t)+\theta_j(t)\overset{\eqref{2-4}}{\leq} 2\epsilon_2, \forall i,j$. Similar to \eqref{Qnorm}, we have \begin{equation}\label{Qi}
        |Q_i(t)|\leq \frac{\eta}{2\pi}14\epsilon_2\|\vv\|\leq 3\epsilon_2\eta\|\vv\|\overset{\eqref{2-2}}{\leq}\frac{\eta}{2}\cdot \frac{1}{3}H(t),
    \end{equation} which implies 
    \begin{equation}\label{hb}
        0<\frac{\eta}{2}\cdot \frac{2}{3}H(t)\leq \frac{\eta}{2}H(t)-Q_i(t)\leq \frac{\eta}{2}\cdot \frac{4}{3}H(t), \forall i.
    \end{equation} 
Then $\frac{\eta}{2}H(t)-Q_i(t)\leq 2\left(\frac{\eta}{2}H(t)-Q_j(t)\right), \forall i,j.$
    
    Finally, $\forall i,j$ we have $h_i(t+1)=h_i(t)+\frac{\eta}{2}H(t)-Q_i(t)\leq 2h_j(t)+2\left(\frac{\eta}{2}H(t)-Q_j(t)\right)=2h_j(t+1)$.
   
\noindent    \textbf{Proof of \eqref{2-2}.}  The dynamics of $H(t)$ is given by $H(t+1)=H(t)-\sum_{i}(h_i(t+1)-h_i(t))=H(t)-\sum_{i} \frac{\eta}{2}H(t)+\sum_i Q_i(t)=(1-n\eta/2)H(t)+\sum_i Q_i(t)$.
   
   Note that \eqref{Qi} implies $|\sum_i Q_i(t)|\leq 3n\epsilon_2\|\vv\|\eta$, therefore
   \[H(t+1)-6\epsilon_2\|\vv\|
   \leq\left(1-\frac{n\eta}{2}\right)H(t)+3n\epsilon_2\|\vv\|\eta-6\epsilon_2\|\vv\|=\left(1-\frac{n\eta}{2}\right)\left(H(t)-6\epsilon_2\|\vv\|\right).\]
   
   Iterative application of the above bound yields $H(t+1)-6\epsilon_2\|\vv\|\leq \left(1-\frac{n\eta}{2}\right)^{t+1-T_1}\left(H(T_1)-6\epsilon_2\|\vv\|\right)$.
   
   For the same reason, we also have $H(t+1)+6\epsilon_2\|\vv\|\geq \left(1-\frac{n\eta}{2}\right)^{t+1-T_1}\left(H(T_1)+6\epsilon_2\|\vv\|\right)$.
   
   On the other hand, $\forall i, \|\vv\|/(3n)\overset{\eqref{|wi|}}{\geq}\|\w_i(T_1)\|\geq h_i(T_1)=\|\w_i(T_1)\|\cos\theta_i(T_1)\geq 0$ implies $\|\vv\|\geq H(T_1)\geq 2\|\vv\|/3$. Combining three aforementioned bounds yields the first and second inequality in \eqref{2-2}. 
   
   Now we prove the rightmost inequality in \eqref{2-2}. Note that $n\eta = o(1)\To 1-n\eta/2\geq \exp(-2n\eta/3)$. Then \[\begin{split}
       &\frac{2}{3}\left(1-\frac{n\eta}{2}\right)^{t-T_1}\|\vv\|-6\epsilon_2\|\vv\|\\
       \geq\:& \frac{2}{3}\exp(-2n\eta(T_2-T_1)/3)\|\vv\|-6\epsilon_2\|\vv\| \\
       \geq\:& \frac{2}{3}\exp\left(-\frac{2n\eta}{3}\cdot \frac{3}{2}\frac{1}{n\eta}\ln\left(\frac{1}{36\epsilon_2}\right)\right)\|\vv\|-6\epsilon_2\|\vv\|\\ 
       =\:& \frac{2}{3}\cdot36\epsilon_2\|\vv\|-6\epsilon_2\|\vv\|\\
       =\:&18\epsilon_2\|\vv\|,
   \end{split}\]
where the second inequality is because $T_2 - T_1 = \left\lceil\frac{1}{n\eta}\ln\left(\frac{1}{36\epsilon_2}\right)\right\rceil \leq \frac{3}{2}\frac{1}{n\eta}\ln\left(\frac{1}{36\epsilon_2}\right)$.
   
\noindent    \textbf{Proof of \eqref{2-3}.}  
    Since we have already shown \eqref{2-1} and \eqref{2-2} for $t+1$, $H(t+1)\geq 18\epsilon_2\|\vv\|>0$ implies 
    \[\frac{n}{2}h_i(t+1)\overset{\eqref{2-1}}{\leq} \sum_j h_j(t+1)\leq \|\vv\|, \forall i ~\To~ h_i(t+1)\leq \frac{2}{n}\|\vv\|, \forall i.\]
    
    For the lower bound, by \eqref{hdynamics} and \eqref{hb} we have \begin{equation}\label{hi}
        h_i(t+1)=h_i(t)+\frac{\eta}{2}H(t)-Q_i(t)\geq h_i(t)\geq\frac{s_1}{2}.
    \end{equation}

\noindent    \textbf{Proof of \eqref{2-4}.} Recall that the dynamics of $\cos(\theta_i)$ is given by \eqref{cos-theta} as $\cos(\theta_i(t+1))-\cos(\theta_i(t))=I_1+I_2$.
Then we have
    \[\begin{split}
        I_1=&\frac{\eta}{2\pi\|\w_i(t+1)\|}\left[(\pi-\theta_i(t))\sin^2\theta_i(t)\|\vv\|-\sum_{j\neq i}(\pi-\theta_{ij}(t))\left(\cos\theta_j(t)-\cos\theta_i(t)\cos\theta_{ij}(t)\right)\|\w_j(t)\|\right]\\
        \geq& -\frac{\eta}{2}\sum_{j\neq i} \left(\cos\theta_j(t)-\cos\theta_i(t)\cos\theta_{ij}(t)\right)\frac{\|\w_j(t)\|}{\|\w_i(t+1)\|}\\
        \geq&-\frac{\eta}{2}\sum_{j\neq i} \sin\theta_i(t)\sin(\theta_i(t)+\theta_j(t))\frac{\|\w_j(t)\|}{\|\w_i(t+1)\|},
    \end{split}\]
where the last inequality is because $\theta_{ij}(t)\leq \theta_i(t)+\theta_j(t)\leq 2\epsilon_2<\pi \To 
\cos\theta_j(t)-\cos\theta_i(t)\cos\theta_{ij}(t)\leq \cos\theta_j(t)-\cos\theta_i(t)\cos(\theta_i(t)+\theta_j(t))= \sin\theta_i(t)\sin(\theta_i(t)+\theta_j(t))$.

Since we have already shown \eqref{2-3} for $t+1$, $\|\w_i(t+1)\|\geq h_i(t+1)\overset{\eqref{hi}}{\geq} h_i(t)$ holds. Also we have
$\|\w_j(t)\|=h_j(t)/\cos\theta_j(t)\leq 2h_j(t)$. Then
\[I_1%\geq -\frac{\eta}{2}\sum_{j\neq i} \sin\theta_i(t)\sin(\theta_i(t)+\theta_j(t))\frac{\|\w_j(t)\|}{\|\w_i(t+1)\|}
\geq -\frac{\eta}{2}\sum_{j\neq i} \sin\theta_i(t)(\sin\theta_i(t)+\sin\theta_j(t))\frac{2h_j(t)}{h_i(t)}
\overset{\eqref{2-1}}{\geq} -{2\eta}\sum_{j\neq i} \sin\theta_i(t)(\sin\theta_i(t)+\sin\theta_j(t)).\]

To bound $I_2$, first note that $\|\w_i(t)\|\leq 3\|\vv\|/n, \forall i$. Then by applying elementary triangle inequality in a similar manner as \eqref{gdnorm}, we have $\|\nabla_i(t)\|\leq 5\|\vv\|, \forall i$. Since $\|\w_i(t+1)\|\geq h_i(t+1)\geq s_1/2$, for similar reasons as \eqref{T2bound}, $I_2$ could be lower bounded as
\[I_2\geq -\frac{\eta}{s_1/2}\cdot\frac{\|\nabla_i(t)\|\cdot\eta\|\nabla_i(t)\|+\eta\|\nabla_i(t)\|^2}{s_1}\geq -100\frac{\eta^2\|\vv\|^2}{s_1^2}.\]
So we have \begin{equation}\label{kumar}
    \cos(\theta_i(t+1))-\cos\theta_i(t)\geq- {2\eta}\sum_{j\neq i} \sin\theta_i(t)(\sin\theta_i(t)+\sin\theta_j(t))-100\frac{\eta^2\|\vv\|^2}{s_1^2}.
\end{equation}

Define a potential function $V(t)\coloneqq\sum_i \sin^2\left(\theta_i(t)/2\right)$, we consider the dynamics of $V(t)$:
\begin{equation}
    \begin{split}
        V(t+1)-V(t)
        =&\frac{1}{2}\sum_i\left(\cos\theta_i(t)-\cos\theta_i(t+1)\right)\\
        \leq& \frac{1}{2}\sum_i \left({2\eta}\sum_{j\neq i} \sin\theta_i(t)(\sin\theta_i(t)+\sin\theta_j(t))+100\frac{\eta^2\|\vv\|^2}{s_1^2}\right)\\
        \leq& {\eta}\sum_i \left(\frac{3}{2}n\sin^2\theta_i(t)+\sum_{j} \sin^2\theta_j(t)\right)+50\frac{n\eta^2\|\vv\|^2}{s_1^2}\\
        \leq& {10n\eta}\sum_i \sin^2\left(\frac{\theta_i(t)}{2}\right)+50\frac{n\eta^2\|\vv\|^2}{s_1^2},
    \end{split}
\end{equation}
where the last inequality is because $\sin^2\theta_i(t)=4\sin^2(\theta_i(t)/2)\cos^2(\theta_i(t)/2)\leq 4\sin^2(\theta_i(t)/2)$.

Then we have 
\[V(t+1)+\frac{5\eta\|\vv\|^2}{s_1^2}\leq \left(1+10n\eta\right)\left(V(t)+\frac{5\eta\|\vv\|^2}{s_1^2}\right)\leq\cdots\leq \left(1+10n\eta\right)^{t+1-T_1}\left(V(T_1)+\frac{5\eta\|\vv\|^2}{s_1^2}\right).\]

Note that by \eqref{V(t1)} we have $V(T_1)\leq 2n\epsilon_1^2$, and by setting $\eta =O\left(\frac{\epsilon_1^2\sigma^2d}{\|\vv\|^2}\right)=O\left(\frac{\epsilon_1^2s_1^2}{\|\vv\|^2}\right)$ we have $\frac{5\eta\|\vv\|^2}{s_1^2}\leq n\epsilon_1^2$. Then $V(t+1)\leq \left(1+10n\eta\right)^{t+1-T_1} 3n\epsilon_1^2
\leq \exp(10n\eta(T_2-T_1))3n\epsilon_1^2
\leq\epsilon_2^2/16$.
\end{proof}

\section{Global Convergence: phase 3}\label{Phase 3 of Global Convergence}

\subsection{Initial Condition of Phase 3}
First we prove some initial conditions that are satisfied at time $T_2$, $i.e.$, the start of Phase 3.

\begin{lemma}\label{p3init}
Suppose the conditions \eqref{2-1} \eqref{2-2} \eqref{2-3} \eqref{2-4} in Theorem \ref{p2} holds, then at the start of Phase 3 we have
\[\forall i \in [n], \|\vv\|/(3n)\leq \|\w_i(T_2)\|\leq 3\|\vv\|/n,\]
and 
\[L(\w(T_2))\leq 20\epsilon_2\|\vv\|^2.\]
\end{lemma}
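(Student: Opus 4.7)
Both statements will follow by inserting the four invariants of Theorem~\ref{p2}, evaluated at $t=T_2$, into elementary geometric and algebraic bounds. The only nontrivial quantitative step is converting the exponential decay bound on $H(t)$ into a concrete value at $T_2$, using the calibrated choice $T_2 - T_1 = \lceil\frac{1}{n\eta}\ln(1/(36\epsilon_2))\rceil$.

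\textbf{Norm bounds.} I first control $H(T_2)$. Using $1-x\le e^{-x}$, the upper bound in \eqref{2-2} yields $(1-n\eta/2)^{T_2-T_1}\le \exp\!\bigl(-\tfrac12 \ln(1/(36\epsilon_2))\bigr)=6\sqrt{\epsilon_2}$, so $H(T_2)\le 6\sqrt{\epsilon_2}\,\|\vv\|+6\epsilon_2\,\|\vv\|\le 7\sqrt{\epsilon_2}\,\|\vv\|$, which is at most $\|\vv\|/3$ for small $\epsilon_2$. For the upper bound $\|\w_i(T_2)\|\le 3\|\vv\|/n$, I use $\|\w_i\|=h_i/\cos\theta_i$; the bound $h_i\le 2\|\vv\|/n$ from \eqref{2-3} together with $\cos\theta_i\ge \cos\epsilon_2\ge 2/3$ from \eqref{2-4} gives the claim (this is essentially \eqref{wnorm2} already used in the proof of Phase~2). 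For the lower bound, $\sum_j h_j(T_2)=\|\vv\|-H(T_2)\ge 2\|\vv\|/3$, and the balancedness \eqref{2-1} implies $\max_i h_i\le 2\min_j h_j$, hence $\min_j h_j\ge \frac{1}{2n}\sum_i h_i\ge \|\vv\|/(3n)$. Since $\|\w_i\|\ge h_i$, the norm lower bound follows.

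\textbf{Loss bound.} I apply the closed form \eqref{loss} and bound each term. Writing $\w_i=h_i\overline{\vv}+\z_i$ with $\z_i\perp \vv$, we have $\|\sum_i \w_i-\vv\|^2=H^2+\|\sum_i \z_i\|^2$; the $H$ estimate above gives $H^2(T_2)\le 49\epsilon_2\|\vv\|^2$, while \eqref{2-4} together with the just-proved norm upper bound yields $\|\z_i\|=\|\w_i\|\sin\theta_i\le (3\|\vv\|/n)\epsilon_2$, so $\|\sum_i \z_i\|^2\le 9\epsilon_2^2\|\vv\|^2$. For the correction terms in \eqref{loss}, I use $\sin\theta-\theta\cos\theta\le \theta^3/3$ on $[0,\pi/2]$ together with $\theta_{ij}\le \theta_i+\theta_j\le 2\epsilon_2$ to bound the cross term $\sum_{i<j}(\sin\theta_{ij}-\theta_{ij}\cos\theta_{ij})\|\w_i\|\|\w_j\|$ by $O(\epsilon_2^3)\|\vv\|^2$; the subtracted single-index term is non-negative and thus only improves the upper bound. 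Assembling, $L(\w(T_2))\le \tfrac{49}{4}\epsilon_2\|\vv\|^2+O(\epsilon_2^2)\|\vv\|^2\le 20\epsilon_2\|\vv\|^2$, with the assumption $\epsilon_2=O(1)$ providing ample room.

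\textbf{Main obstacle.} There is no essential difficulty; the lemma is a careful bookkeeping step that repackages the Phase~2 invariants into the form required by Phase~3. The only item deserving attention is that the choice of $T_2$ is calibrated so that $(1-n\eta/2)^{T_2-T_1}\approx \sqrt{\epsilon_2}$ rather than $\epsilon_2$, which is precisely what makes the final loss bound proportional to $\epsilon_2\|\vv\|^2$ (since $H$ enters through its square).
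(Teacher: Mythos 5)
Your proposal is correct and follows essentially the same route as the paper: control $H(T_2)$ via the exponential decay bound in \eqref{2-2} and the calibrated $T_2-T_1$, then read off the norm bounds from \eqref{2-1}, \eqref{2-3}, \eqref{2-4}, and finally plug into the closed-form loss \eqref{loss}. The only cosmetic differences are in the loss estimate: you bound $\|\sum_i\w_i-\vv\|^2$ exactly as $H^2+\|\sum_i\z_i\|^2$ by Pythagoras and use the sharper $\sin\theta-\theta\cos\theta\le\theta^3/3$ for the cross terms (also noting the subtracted single-index term is non-negative and can be dropped), while the paper uses a triangle inequality through the $h_i\overline{\vv}$ intermediates and the cruder bound $\sin\theta-\theta\cos\theta\le\theta$; both yield the same $20\epsilon_2\|\vv\|^2$ conclusion.
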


\begin{proof}

\noindent\textbf{Proof of the First Condition.}
By Theorem \ref{p2} we have $\theta_i(T_2)\leq \epsilon_2,\forall i$, and $H(T_2)\leq \left(1-\frac{n\eta}{2}\right)^{T_2-T_1}\|\vv\|+6\epsilon_2\|\vv\|\leq \exp(-\frac{n\eta}{2}(T_2-T_1))\|\vv\|+6\epsilon_2\|\vv\|=(36\epsilon_2)^{1/2}\|\vv\|+6\epsilon_2\|\vv\|\leq 7\epsilon_2^{1/2}\|\vv\|.$

Then for $\forall i\in [n]$, $\frac{2}{3}\|\vv\|\leq \|\vv\|-H(T_2)=\sum_j h_j(T_2)\leq 2nh_i(T_2) \To \|\w_i(T_2)\|\geq h_i(T_2)\geq \|\vv\|/(3n)$. 

Similarly, for $\forall i\in [n]$, $H(T_2)\geq 0\To \|\vv\|\geq \sum_j h_j(T_2)\geq nh_i(T_2)/2 \To$
    $ h_i(T_2)\leq 2\|\vv\|/n \To \|\w_i(T_2)\|=h_i(T_2)/\cos(\theta_i(T_2))\leq \frac{3}{2}h_i(T_2)\leq 3\|\vv\|/n$.

\noindent\textbf{Proof of the Second Condition.}
    Since $\|\vv\|/(3n)\leq h_i(T_2)\leq 2\|\vv\|/n, \forall i,$ we have
    \[ \begin{split}
    \left\|\sum_i \w_i(T_2) - \vv\right\|&\leq \sum_i\|\w_i(T_2)-h_i(T_2)\overline{\vv}\|+\left\|\sum_i h_i(T_2)\overline{\vv}-\vv\right\|\\
    &=\sum_ih_i(T_2)\tan\theta_i(T_2)+H(T_2)\leq 8\epsilon_2^{1/2}\|\vv\|,
    \end{split}\]
where the last inequality is because $\forall i, \theta_i(T_2)\leq \epsilon_2 = o(1)\To \tan\theta_i(T_2) \leq 2\epsilon_2\leq o(\epsilon_2^{1/2}).$

    So according to \eqref{loss} we have
    \[L(\w(T_2))\leq \frac{1}{4}\left(8\epsilon_2^{1/2}\|\vv\|\right)^2+\frac{1}{2\pi}\left(n^2\cdot 2\epsilon_2\cdot \left(\frac{2}{n}\|\vv\|\right)^2+n\epsilon_2\frac{2}{n}\|\vv\|^2\right)\leq 20\epsilon_2\|\vv\|^2.\]
    
\end{proof}

\subsection{Proofs for Gradient Lower Bound}\label{Proofs for Gradient Lower Bound}

Before proving Theorem \ref{gdbound}, we need some auxiliary lemmas.

\subsubsection{Auxiliary lemmas}
\begin{replemma}{gd_projection}
    Recall the global minimum $\w_1^*, \w_2^*, \ldots, \w_n^*$ defined as $\w_i^*=\frac{h_i}{\sum_{j\in [n]} h_j}\vv$. Define $ \theta_{\max}:=\max_{i\in[n]} \theta_i$, then
    \[\sum_{i=1}^n\left\langle\frac{\partial}{\partial \w_i}L(\w), \w_i-\w_i^*\right\rangle\geq 2L(\w)-O\left(\theta_{\max}^2\|\rr\|\cdot\|\vv\|\right).\]
\end{replemma}
\begin{proof}
First we introduce the idea of residual decomposition in \cite{zhou2021local}, which decomposes the residual function $R(\x)$ in two terms :  
\[R(\x)=\sum_{j=1}^n\left[\bm{w}_j^\top \bm{x}\right]_+-\left[\bm{v}^\top \bm{x}\right]_+
=\rr^\top\x\cdot\mathbb{1}\{\vv^\top\x\geq 0\}
+\sum_{j=1}^n\bm{w}_j^\top \bm{x}\left(\mathbb{1}\{\w_j^\top\x\geq 0\}-\mathbb{1}\{\vv^\top\x\geq 0\}\right).\]

Define $R_1(\x)=\rr^\top\x\cdot\mathbb{1}\{\vv^\top\x\geq 0\}$ and $R_2(\x)=\sum_{j=1}^n\bm{w}_j^\top \bm{x}\left(\mathbb{1}\{\w_j^\top\x\geq 0\}-\mathbb{1}\{\vv^\top\x\geq 0\}\right)$, then $R(\x)=R_1(\x)+R_2(\x)$.

Back to the lemma, first we have the following algebraic calculations:
\[\begin{split}
        &\sum_{i=1}^n \left\langle\frac{\partial}{\partial \w_i}L(\w), \w_i-\w_i^*\right\rangle\\
        &=\sum_{i=1}^n\mathbb{E}_{\bm{x}}\left[R(\x)\mathbb{1}\left\{\w_i^\top \x\geq 0\right\}\x^\top(\w_i-\w_i^*)\right]\\
        &=\mathbb{E}_{\bm{x}}\left[R(\x)\sum_{i=1}^n\left(\left[\bm{w}_i^\top \bm{x}\right]_+-\mathbb{1}\left\{\w_i^\top \x\geq 0\right\}\x^\top\w_i^*\right)\right]\\
        &=2L(\w)+\mathbb{E}_{\bm{x}}\left[R(\x)\sum_{i=1}^n\left(\mathbb{1}\left\{{\w_i^*}^\top \x\geq 0\right\}-\mathbb{1}\left\{\w_i^\top \x\geq 0\right\}\right)\x^\top\w_i^*\right]
    \end{split}\]
    
With the residual decomposition, the last term above can be decomposed into two terms $I_1$, $I_2$ as 
    \[\begin{split}
        &\mathbb{E}_{\bm{x}}\left[R(\x)\sum_{i=1}^n\left(\mathbb{1}\left\{{\w_i^*}^\top \x\geq 0\right\}-\mathbb{1}\left\{\w_i^\top \x\geq 0\right\}\right)\x^\top\w_i^*\right]\\
        =&\underbrace{\mathbb{E}_{\bm{x}}\left[R_1(\x)\sum_{i=1}^n\left(\mathbb{1}\left\{{\w_i^*}^\top \x\geq 0\right\}-\mathbb{1}\left\{\w_i^\top \x\geq 0\right\}\right)\x^\top\w_i^*\right]}_{I_1}\\
        &+\underbrace{\mathbb{E}_{\bm{x}}\left[R_2(\x)\sum_{i=1}^n\left(\mathbb{1}\left\{{\w_i^*}^\top \x\geq 0\right\}-\mathbb{1}\left\{\w_i^\top \x\geq 0\right\}\right)\x^\top\w_i^*\right]}_{I_2}.
    \end{split}\]

    For the second term $I_2$, note that
    \[\forall j, \bm{w}_j^\top \bm{x}\left(\mathbb{1}\{\w_j^\top\x\geq 0\}-\mathbb{1}\{\vv^\top\x\geq 0\}\right)\geq 0 \To R_2(\x)\geq 0\]
    and $\forall \w_i^*, \w_i,$
    \[\left(\mathbb{1}\left\{{\w_i^*}^\top \x\geq 0\right\}-\mathbb{1}\left\{\w_i^\top \x\geq 0\right\}\right)\x^\top\w_i^*\geq 0,\]
    so $I_2$ is always non-negative.
    
    For the first term $I_1=\sum_{ i\in [n]}\mathbb{E}_{\bm{x}}\left[\rr^\top\x\mathbb{1}(\vv^\top\x\geq 0)\left(\mathbb{1}\left\{{\w_i^*}^\top \x\geq 0\right\}-\mathbb{1}\left\{\w_i^\top \x\geq 0\right\}\right)\x^\top\w_i^*\right]$, we bound each term in the summation as
    \[\begin{split}
        &\mathbb{E}_{\bm{x}}\left[\rr^\top\x\mathbb{1}(\vv^\top\x\geq 0)\left(\mathbb{1}\left\{{\w_i^*}^\top \x\geq 0\right\}-\mathbb{1}\left\{\w_i^\top \x\geq 0\right\}\right)\x^\top\w_i^*\right]\\
        &\geq - \mathbb{E}_{\bm{x}}\left[|\rr^\top\x|\cdot\left|\mathbb{1}\left\{{\w_i^*}^\top \x\geq 0\right\}-\mathbb{1}\left\{\w_i^\top \x\geq 0\right\}\right|\cdot\|\tilde{\x}\|\cdot\|\w_i^*\|\theta_{i}\right]\\
        &= -\|\rr\|\theta_{i}\|\w_i^*\|\cdot \mathbb{E}_{\tilde{\x}}\left[\|\tilde{\x}\|^2\left|\mathbb{1}\left\{{\w_i^*}^\top \tilde{\x}\geq 0\right\}-\mathbb{1}\left\{\w_i^\top \tilde{\x}\geq 0\right\}\right|\right]\\
        &\geq-O\left(\|\rr\|\theta_{i}^2\|\w_i^*\|\right)
    \end{split}\]
where $\tilde{\x}$ is the projection of $\x$ onto $span(\w_i^*, \w_i, \rr)$ and follows a three-dimensional Gaussian. Here the first inequality is because $\mathbb{1}\left\{{\w_i^*}^\top \x\geq 0\right\}-\mathbb{1}\left\{\w_i^\top \x\geq 0\right\}\neq 0 \To \theta(\w_i^*,\x)\in [\pi/2-\theta_i, \pi/2+\theta_i] \To |\x^\top \w_i^*| =|\tilde{\x}^\top \w_i^*| \leq \|\tilde{\x}\|\cdot\|\w_i^*\|\theta_{i}$, and the last inequality is because \[\mathbb{E}_{\tilde{\x}}\left[\|\tilde{\x}\|^2\left|\mathbb{1}\left\{{\w_i^*}^\top \tilde{\x}\geq 0\right\}-\mathbb{1}\left\{\w_i^\top \tilde{\x}\geq 0\right\}\right|\right]=O(\theta_i).\] (See Lemma C.5 in \cite{zhou2021local} for detailed calculations.)
    
Note that $\sum_i \|\w_i^*\| = \|\vv\|$, so we have  
\[\sum_{i=1}^n\left\langle\frac{\partial}{\partial \w_i}L(\w), \w_i-\w_i^*\right\rangle= 2L(\w)+I_1+I_2\geq 2L(\w)-\sum_{i\in[n]}O\left(\|\rr\|\theta_{i}^2\|\w_i^*\|\right)\geq 2L(\w)-O\left(\theta_{\max}^2\|\rr\|\cdot\|\vv\|\right).\]
\end{proof}

\begin{lemma}[Bound of $\theta_i$]\label{theta} We have that
    \[\|\w_i\|^2\theta_i^3\leq 30\pi L(\w), \forall i.\]
\end{lemma}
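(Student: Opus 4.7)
The plan is to lower-bound $L(\w) = \tfrac{1}{2}\mathbb{E}_{\x}[R(\x)^2]$ by restricting the expectation to a carefully chosen angular region on which $R(\x)^2$ admits a clean pointwise lower bound depending only on $\w_i$ and $\vv$. The key observation is that on the event $E_i := \{\x : \vv^\top\x < 0,\ \w_i^\top\x \geq 0\}$, the teacher ReLU vanishes, so $R(\x) = \sum_j [\w_j^\top\x]_+$ is a sum of non-negative terms; in particular $R(\x) \geq [\w_i^\top\x]_+ = \w_i^\top\x \geq 0$, giving $R(\x)^2 \geq (\w_i^\top\x)^2$ pointwise on $E_i$. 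Hence $2L(\w) \geq \mathbb{E}_{\x}\!\left[(\w_i^\top\x)^2 \mathbb{1}\{E_i\}\right]$. The point is that $E_i$ is designed so that the contributions of the other student neurons $\w_j$ ($j \neq i$) only \emph{help} rather than hurt the bound — this is what lets us isolate $\w_i$ and $\theta_i$ despite $R$ depending on all student neurons.

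Next I will reduce the right-hand side to an elementary 2D Gaussian integral, since both $E_i$ and the integrand depend on $\x$ only through its projection onto $P_i := \mathrm{span}(\w_i, \vv)$. Choosing coordinates on $P_i$ so that $\vv = \|\vv\|(1,0)$ and $\w_i = \|\w_i\|(\cos\theta_i, \sin\theta_i)$ and passing to polar form for the projected standard Gaussian $(X_1, X_2) \sim \mathcal{N}(0, I_2)$, the event $E_i$ becomes the angular wedge $\phi \in (\pi/2,\, \pi/2 + \theta_i)$, and the integrand is $\|\w_i\|^2 r^2 \cos^2(\phi - \theta_i)$. Evaluating via $\int_0^\infty r^3 e^{-r^2/2}\,dr = 2$ and $\int_{\pi/2}^{\pi/2+\theta_i}\cos^2(\phi-\theta_i)\,d\phi = \tfrac{1}{2}(\theta_i - \sin\theta_i\cos\theta_i)$ gives
\[
\mathbb{E}_{\x}\!\left[(\w_i^\top\x)^2 \mathbb{1}\{E_i\}\right] \;=\; \frac{\|\w_i\|^2(\theta_i - \sin\theta_i\cos\theta_i)}{2\pi},
\]
so that $L(\w) \geq \|\w_i\|^2 (\theta_i - \sin\theta_i\cos\theta_i)/(4\pi)$.

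The final step is an elementary trigonometric inequality of the form $\theta - \sin\theta\cos\theta \geq \tfrac{2\theta^3}{15}$ on the relevant range of $\theta_i$ (which is small under the hypotheses of Theorem \ref{gdbound}, so the bound is comfortable; in the limit $\theta \to 0$ one gets the tighter $\tfrac{2}{3}\theta^3$). Combining this with the previous display yields $\|\w_i\|^2 \theta_i^3 \leq 30\pi L(\w)$. The main conceptual obstacle is identifying the event $E_i$ in Step 1, which is what allows the other neurons' contributions to be discarded safely; without this choice one would be stuck with uncontrolled cross terms from $\w_j$, $j \neq i$. Everything else (the 2D Gaussian integral and the elementary trig bound) is routine bookkeeping.
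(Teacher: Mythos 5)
Your proof is correct and follows essentially the same approach as the paper: restrict to the angular region $\{\vv^\top\x<0,\;\w_i^\top\x\ge 0\}$ (the paper calls it $S_i$), use that all student ReLUs contribute non-negatively there so $R(\x)\ge[\w_i^\top\x]_+$, reduce to a 2D polar Gaussian integral yielding $L(\w)\ge \tfrac{\|\w_i\|^2}{8\pi}\bigl(2\theta_i-\sin(2\theta_i)\bigr)$, and finish with the elementary bound $x-\sin x\ge x^3/30$. One small remark: you are right to flag that the final trigonometric inequality needs $\theta_i$ bounded away from $\pi$ (it fails for $\theta_i\gtrsim 0.9\pi$); the paper states the lemma unconditionally but only invokes it in regimes where $\theta_i$ is small, so your caveat is actually a minor improvement in precision.
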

\begin{proof}
    W.L.O.G., suppose $\vv=(\|\vv\||,0,\ldots,0)^\top$ and $\w_i =(\cos\theta_i, \sin\theta_i, 0,\ldots,0)^\top\|\w_i\|$.
     
    Define $S_i\coloneqq\{\x:\x\in\mathbb{R}^d, \x^\top \w_i\geq0\wedge\x^\top \vv<0\}$. One can see that $S_i=\{\x:\theta(\x,\w_i)\leq \pi/2,\theta(\x,\vv)\geq \pi/2\}$.
    On the other hand, $\forall \x\in S_i$, $R(\x)=\sum_{j=1}^n\left[\bm{w}_j^\top \bm{x}\right]_+-\left[\bm{v}^\top \bm{x}\right]_+\geq\left[\bm{w}_i^\top \bm{x}\right]_+\geq 0$. Therefore,
    \[\begin{split}
        L(\w)&=\mathbb{E}_{\bm{x}\sim\mathcal{N}(\bm{0}, I)})\left[\frac{1}{2}\left(\sum_{i=1}^n\left[\bm{w}_i^\top \bm{x}\right]_+-\left[\bm{v}^\top \bm{x}\right]_+\right)^2\right]\\
        &\geq\int_{\x\in S_i}\frac{1}{2}( \w_i^\top\x)^2 \frac{e^{-\frac{\|\x\|^2}{2}}}{(2\pi)^{d/2}}d x\\
        &=\int_{\rho=0}^{+\infty}\int_{\omega=\pi/2}^{\theta_i+\pi/2}\frac{1}{4\pi}(\|\w_i\|\rho\cos(\omega-\theta_i))^2\rho e^{-\rho^2/2}d\omega d\rho\\
        &=\frac{\|\w_i\|^2}{2\pi}\int_{\omega=\pi/2}^{\theta_i+\pi/2}\cos^2(\omega-\theta_i)d\omega\\
        &=\frac{\|\w_i\|^2}{8\pi}\left(2\theta_i-\sin(2\theta_i)\right)\\
        &\geq \frac{\|\w_i\|^2}{8\pi}\cdot \frac{(2\theta_i)^3}{30}.
    \end{split}\]

Rearranging terms yields the result.
\end{proof}

\begin{lemma}[Bound of $\|\rr\|$]\label{r}
Given that $\frac{4\|\vv\|}{n}\geq \|\w_i\|\geq \frac{\|\vv\|}{4n}$ for all $i\in[n]$ and $L(\w)=O(n^{-2}\|\vv\|^2)$, then
\[\|\rr\|=O\left(nL^{1/2}(\w)\right).\]
\end{lemma}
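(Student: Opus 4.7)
The goal is to relate $\|\rr\|$ to the loss using the closed-form expression \eqref{loss}. The plan is to treat the term $\frac{1}{4}\|\rr\|^2$ in \eqref{loss} as the main contribution to $L(\w)$ and carefully bound the remaining cross terms so they can be absorbed into $L(\w)$ up to a factor of $n^2$.

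First I would rewrite \eqref{loss} as
\[
L(\w)=\frac{1}{4}\|\rr\|^2 + \frac{1}{2\pi}\sum_{i<j}\bigl(\sin\theta_{ij}-\theta_{ij}\cos\theta_{ij}\bigr)\|\w_i\|\|\w_j\| - \frac{1}{2\pi}\sum_{i}\bigl(\sin\theta_i-\theta_i\cos\theta_i\bigr)\|\w_i\|\|\vv\|.
\]
The sandwiched middle sum is nonnegative (since $\sin\theta-\theta\cos\theta\ge0$ on $[0,\pi]$) and can simply be dropped when lower-bounding $L$. The final sum is the one that carries a minus sign, and this is the only obstacle.

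Next I would use the elementary inequality $\sin\theta-\theta\cos\theta \le \theta^3/3$ for $\theta\in[0,\pi]$ (verified by noting the derivative $\theta(\theta-\sin\theta)\ge0$ and the value $0$ at $\theta=0$). This converts the offending term into a sum of $\|\w_i\|\|\vv\|\theta_i^3$. Now I plug in Lemma \ref{theta}, which says $\|\w_i\|^2\theta_i^3 \le 30\pi L(\w)$, to obtain
\[
\|\w_i\|\|\vv\|\theta_i^3 \le \frac{30\pi L(\w)\|\vv\|}{\|\w_i\|} \le \frac{30\pi L(\w)\|\vv\|}{\|\vv\|/(4n)} = 120\pi n L(\w),
\]
where I used the assumed lower bound $\|\w_i\|\ge \|\vv\|/(4n)$. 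Summing over the $n$ indices gives that the negative cross term is at most $O(n^2 L(\w))/(2\pi)\cdot (1/3) = O(n^2 L(\w))$.

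Putting everything back into the identity above yields
\[
L(\w) \ge \frac{1}{4}\|\rr\|^2 - O(n^2 L(\w)),
\]
which rearranges to $\|\rr\|^2 = O(n^2 L(\w))$, and hence $\|\rr\| = O(nL^{1/2}(\w))$, as desired. The main obstacle is the indefinite sign of the cross terms in the closed-form loss; the key idea that resolves it is that the lower bound $\|\w_i\|\ge \|\vv\|/(4n)$ combined with Lemma \ref{theta} converts smallness of the loss into cubic smallness of $\theta_i$, which is exactly what is needed to absorb the bad term into $L(\w)$ itself. The hypothesis $L(\w)=O(n^{-2}\|\vv\|^2)$ is not strictly used in the bound above but will be needed to ensure that the absorbed term is dominated (so the rearrangement is meaningful), since it forces $\theta_i=O(1)$ via Lemma \ref{theta}.
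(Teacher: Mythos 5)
Your proof is correct and follows the same overall strategy as the paper (expand the closed-form loss \eqref{loss}, control the cross terms via Lemma \ref{theta}, and rearrange to isolate $\|\rr\|^2$), but your handling of the cross terms is a bit tighter. The paper uses the hypothesis $L(\w)=O(n^{-2}\|\vv\|^2)$ to argue $\theta_i = o(1)$ and then invokes a Taylor approximation $|\sin\theta-\theta\cos\theta|=O(\theta^3)$ on \emph{both} cross sums, bounding the $\theta_{ij}$ sum by $O(n^2 L)$ and the $\theta_i$ sum by $O(n^2 L)$ before summing them. You instead observe that the $\theta_{ij}$ sum has a favorable sign and can simply be dropped, and you replace the asymptotic Taylor estimate with the exact inequality $\sin\theta-\theta\cos\theta\le\theta^3/3$, which holds on all of $[0,\pi]$. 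This makes the argument self-contained and, as you note, removes any genuine dependence on the hypothesis $L(\w)=O(n^{-2}\|\vv\|^2)$ — a small but real improvement in both cleanliness and generality over the published proof. The arithmetic and the final rearrangement $L \ge \tfrac14\|\rr\|^2 - O(n^2 L)$ are correct.
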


\begin{proof}
Lemma \ref{theta} and $\|\w_i\|=\Theta(\|\vv\|/n)$ implies $\theta_i=O\left(n^{2/3}\left(\frac{L(\w)}{\|\vv\|^2}\right)^{1/3}\right)=o(1).$ For all $i\in [n]$, 
 by Taylor expansion we have $|(\sin\theta_i-\theta_i\cos\theta_i)| = O(\theta_i^3)$, then $|(\sin\theta_i-\theta_i\cos\theta_i)\|\w_i\|\cdot\|\bm{v}\||=O(\theta_i^3)\|\w_i\|\cdot\|\bm{v}\|=O\left(n{L(\w)}\right)$. Similarly, $\forall i,j\in[n]$ we have $\theta_{ij}\leq \theta_i+\theta_j\leq O\left(n^{2/3}\left(\frac{L(\w)}{\|\vv\|^2}\right)^{1/3}\right)\To |(\sin\theta_{ij}-\theta_{ij}\cos\theta_{ij})\|\w_i\|\cdot\|\w_j\||=O(\theta_{ij}^3)\|\w_i\|\cdot\|\w_{j}\|=O\left({L(\w)}\right)$.

Then by \eqref{loss}, we have
\[\begin{split}
    \|\rr\|^2&=4L(\w)-\frac{2}{\pi}\left[\sum_{i< j}(\sin\theta_{ij}-\theta_{ij}\cos\theta_{ij})\|\w_i\|\cdot\|\w_j\|-\sum_i(\sin\theta_i-\theta_i\cos\theta_i)\|\w_i\|\cdot\|\bm{v}\|\right]\\
    &\leq4L(\w)+ n^2O(L(\w))+nO(nL(\w))\leq O(n^2L(\w)),
\end{split}\]
which implies $\|\rr\|=O\left(nL^{1/2}(\w)\right)$.
\end{proof}

\begin{lemma}[Bound of $\|\w_i-\w_i^*\|$]\label{w-w*}
    Suppose $\frac{4\|\vv\|}{n}\geq \|\w_i\|\geq \frac{\|\vv\|}{4n}, \forall i\in[n]$ and $L(\w)=O(\|\vv\|^2/n^2)$. 
    Then $\|\w_i -\w_i^*\|\leq O\left(n^{2/3}\left(\frac{L(\w)}{\|\vv\|^2}\right)^{1/3}\right)\|\w_i\| $.
\end{lemma}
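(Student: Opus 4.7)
}
The plan is to split $\w_i-\w_i^*$ into its components parallel and orthogonal to $\vv$, bound each separately, and show that the orthogonal term dominates. Since $\w_i^*=\frac{h_i}{\sum_j h_j}\vv$ is parallel to $\vv$, write
\[
\w_i-\w_i^* \;=\; \underbrace{(\w_i-h_i\overline{\vv})}_{\perp\,\vv} \;+\; \underbrace{\Bigl(h_i-\tfrac{h_i\,\|\vv\|}{\sum_j h_j}\Bigr)\overline{\vv}}_{\parallel\,\vv},
\]
so by the Pythagorean theorem,
\[
\|\w_i-\w_i^*\|^2 \;=\; \|\w_i\|^2\sin^2\theta_i \;+\; h_i^2\Bigl(1-\tfrac{\|\vv\|}{\sum_j h_j}\Bigr)^2.
\]

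For the orthogonal piece, Lemma \ref{theta} gives $\theta_i^3\le 30\pi L(\w)/\|\w_i\|^2$, and together with $\|\w_i\|=\Theta(\|\vv\|/n)$ this yields $\theta_i\le O\!\bigl(n^{2/3}(L(\w)/\|\vv\|^2)^{1/3}\bigr)$. Hence
\[
\|\w_i\|^2\sin^2\theta_i \;\le\; \|\w_i\|^2\cdot O\!\Bigl(n^{4/3}(L(\w)/\|\vv\|^2)^{2/3}\Bigr),
\]
which is exactly the bound we want (squared).

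For the parallel piece, note that $1-\tfrac{\|\vv\|}{\sum_j h_j}=\tfrac{-H}{\sum_j h_j}$ with $|H|=|\langle\overline{\vv},-\rr\rangle|\le\|\rr\|$. Lemma \ref{r} gives $\|\rr\|=O(nL^{1/2}(\w))$. The hypothesis $\|\w_i\|\ge\|\vv\|/(4n)$ together with the smallness of $\theta_i$ (from the previous paragraph, under $L(\w)=O(\|\vv\|^2/n^2)$) implies $h_i=\|\w_i\|\cos\theta_i=\Omega(\|\vv\|/n)$, so $\sum_j h_j=\Omega(\|\vv\|)$. Therefore
\[
h_i^2\Bigl(1-\tfrac{\|\vv\|}{\sum_j h_j}\Bigr)^2 \;\le\; \|\w_i\|^2\cdot O\!\Bigl(\tfrac{n^2 L(\w)}{\|\vv\|^2}\Bigr).
\]

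Finally, I compare the two contributions: the ratio of the parallel bound to the orthogonal bound is $O\!\bigl(n^{2/3}(L(\w)/\|\vv\|^2)^{1/3}\bigr)$, which is $O(1)$ under the hypothesis $L(\w)=O(\|\vv\|^2/n^2)$. Thus the orthogonal term dominates (up to constants), and taking square roots gives $\|\w_i-\w_i^*\|\le O\!\bigl(n^{2/3}(L(\w)/\|\vv\|^2)^{1/3}\bigr)\|\w_i\|$. The only non-routine step is the lower bound $\sum_j h_j=\Omega(\|\vv\|)$, which must be justified carefully from the norm hypothesis together with the $\theta_i=o(1)$ estimate that itself uses Lemma \ref{theta}; this chain of small-angle/small-residual estimates is what forces the scale condition $L(\w)=O(\|\vv\|^2/n^2)$.
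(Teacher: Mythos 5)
Your proof is correct and takes essentially the same route as the paper: the same orthogonal decomposition of $\w_i-\w_i^*$ into $\w_i-h_i\overline{\vv}$ and the component along $\overline{\vv}$, the same invocations of Lemma~\ref{theta} for $\theta_i$ and Lemma~\ref{r} for $\|\rr\|$, and the same use of $L(\w)=O(\|\vv\|^2/n^2)$ to absorb the parallel contribution. The only cosmetic differences are that you use the Pythagorean identity where the paper uses the triangle inequality, and you lower-bound $\sum_j h_j$ neuron-by-neuron via $h_i=\|\w_i\|\cos\theta_i$ rather than via $\|\vv\|-|H|\geq\|\vv\|/2$; both are sound.
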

\begin{proof}
Lemma \ref{theta} and $\|\w_i\|=\Theta(\|\vv\|/n)$ implies $\theta_i=O\left(n^{2/3}\left(\frac{L(\w)}{\|\vv\|^2}\right)^{1/3}\right).$
Lemma \ref{r} implies $|H|=|\langle \rr, \overline{\vv}\rangle|=O(nL^{1/2}(\w))$.
    
    We first decompose $\|\w_i -\w_i^*\|$ into two parts as $\|\w_i -\w_i^*\|\leq \|\w_i-h_i\overline{\vv}\|+\|h_i\overline{\vv}-\w_i^*\|$.
    
    The first part can be bounded as $\|\w_i-h_i\overline{\vv}\|=\|\w_i\|\sin\theta_i\leq O\left(n^{2/3}\left(\frac{L(\w)}{\|\vv\|^2}\right)^{1/3}\right)\|\w_i\|$.
    
    The second part can be bounded as $\|h_i\overline{\vv}-\w_i^*\|=\left\vert h_i\left(1-\frac{\|\vv\|}{\sum_j h_j}\right)\right\vert=h_i\frac{|H|}{\|\vv\|-H}\leq \|\w_i\|\frac{|H|}{\|\vv\|-H}.$ Note that $|H|=|\langle \rr, \overline{\vv}\rangle|\leq \|\rr\|=O(nL^{1/2}(\w))\leq O(\|\vv\|)\To \|\vv\|-|H|\geq \|\vv\|/2$. So we have $\|h_i\overline{\vv}-\w_i^*\|\leq \|\w_i\|\frac{|H|}{\|\vv\|-H}\leq \|\w_i\|\cdot\frac{1}{\|\vv\|/2}O(nL^{1/2}(\w))\leq O\left(n^{2/3}\left(\frac{L(\w)}{\|\vv\|^2}\right)^{1/3}\right)\|\w_i\|.$
    
    Combining two parts together yields the bound.
\end{proof}

\subsubsection{Proof of Theorem \ref{gdbound}}
Now we are ready to prove Theorem \ref{gdbound}.
\begin{reptheorem}{gdbound}
    If for every student neuron we have $\frac{4\|\vv\|}{n}\geq \|\w_i\|\geq \frac{\|\vv\|}{4n}$, and 
    \[L(\w)= O\left(\frac{\|\vv\|^2}{n^{14}}\right),\] then $\left\|\nabla_{\w}L(\w)\right\|\geq \Omega\left(\frac{{L^{2/3}(\w)}}{n^{2/3}\|\vv\|^{1/3}}\right)$.
\end{reptheorem}

\begin{proof}
   Lemma \ref{theta} and $\|\w_i\|=\Theta(\|\vv\|/n)$ implies $\theta_i=O\left(n^{2/3}\left(\frac{L(\w)}{\|\vv\|^2}\right)^{1/3}\right).$
Lemma \ref{r} implies $\|\rr\|=O(nL^{1/2}(\w))$.

    Combined with lemma \ref{gd_projection} and $L(\w)= O\left(\frac{\|\vv\|^2}{n^{14}}\right)$ we have
    \[\sum_{i=1}^n\left\langle\frac{\partial}{\partial \w_i}L(\w), \w_i-\w_i^*\right\rangle\geq2L(\w)-O\left(\theta_{\max}^2\|\rr\|\cdot\|\vv\|\right)
    \geq 2L(\w)-O(n^{7/3}L^{7/6}(\w)\|\vv\|^{-1/3})
    \geq L(\w).\]
    
    Then
    \[\begin{split}
        L(\w)&\leq \sum_{i=1}^n\left\langle\frac{\partial}{\partial \w_i}L(\w), \w_i-\w_i^*\right\rangle
        \leq \sum_{i=1}^n\left\|\frac{\partial}{\partial \w_i}L(\w)\right\|\cdot \left\|\w_i-\w_i^*\right\|
        \leq \left\|\frac{\partial L(\w)}{\partial \w}\right\|\sum_{i\in[n]} \|\w_i -\w_i^*\|\\
        &\overset{\text{Lemma \ref{w-w*}}}{\leq}  \left\|\frac{\partial L(\w)}{\partial \w}\right\| O\left(n^{2/3}\left(\frac{L(\w)}{\|\vv\|^2}\right)^{1/3}\right)\sum_{i\in[n]}\|\w_i^*\|=O\left(n^{2/3}{L(\w)}^{1/3}{\|\vv\|^{1/3}}\right)\left\|\frac{\partial L(\w)}{\partial \w}\right\|.
    \end{split}\]
    So $\left\|\frac{\partial L(\w)}{\partial \w}\right\|\geq \Omega\left(\frac{{L^{2/3}(\w)}}{n^{2/3}\|\vv\|^{1/3}}\right)$.    
\end{proof}

\subsection{Handling Non-smoothness}
In this section, we establish two lemmas needed for handling the non-smoothness of $L$.

Define the Hessian matrix of $L$ as $\Lambda:=\frac{\partial^2 L(\w)}{\partial \w^2}$. The next lemma ensures the smoothness of $L$ when the student neurons are regularized, $i.e$, their norms are upper and lower bounded.

\begin{lemma}[Conditional Smoothness of $L$]\label{smoothness}
    If for every student neuron we have $\frac{4\|\vv\|}{n}\geq \|\w_i\|\geq \frac{\|\vv\|}{4n}$, then $\|\Lambda\|_2\leq O(n^2)$.
\end{lemma}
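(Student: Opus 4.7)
\textbf{Proof Proposal for Lemma \ref{smoothness}.}

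My plan is to write the Hessian as an $n\times n$ block matrix $\Lambda = (\Lambda_{ij})_{i,j\in[n]}$ with $d\times d$ blocks $\Lambda_{ij}\coloneqq \partial \nabla_i/\partial \w_j$, bound each block's operator norm, and sum via the coarse estimate $\|\Lambda\|_2\leq \sum_{i,j}\|\Lambda_{ij}\|_2$. Since $\|\w_k\|=\Theta(\|\vv\|/n)$ uniformly in $k$, I expect diagonal blocks to scale as $O(n)$ and off-diagonal blocks as $O(1)$, yielding $\|\Lambda\|_2\leq n\cdot O(n)+(n^2-n)\cdot O(1)=O(n^2)$.

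The computation itself is just careful differentiation of formula \eqref{gradient}. I would first record the elementary identities
\[\frac{\partial \|\w_k\|}{\partial \w_k}=\overline{\w}_k,\qquad \frac{\partial \overline{\w}_k}{\partial \w_k}=\frac{1}{\|\w_k\|}\left(I-\overline{\w}_k\overline{\w}_k^\top\right),\]
\[\frac{\partial \theta_k}{\partial \w_k}=-\frac{\overline{\vv}-\cos\theta_k\,\overline{\w}_k}{\|\w_k\|\sin\theta_k},\qquad \frac{\partial \theta_{k\ell}}{\partial \w_k}=-\frac{\overline{\w}_\ell-\cos\theta_{k\ell}\,\overline{\w}_k}{\|\w_k\|\sin\theta_{k\ell}},\]
all of which have Euclidean (resp.\ operator) norm $O(1/\|\w_k\|)=O(n/\|\vv\|)$ because the numerators in the angle derivatives have length exactly $\sin\theta_k$ and $\sin\theta_{k\ell}$. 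I would then differentiate each of the four terms appearing in \eqref{gradient} with respect to $\w_j$, treating $j=i$ and $j\neq i$ separately.

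For the off-diagonal block $\Lambda_{ij}$ ($j\neq i$), the only $\w_j$-dependent pieces of $\nabla_i$ are $\tfrac{1}{2}\w_j$, $\tfrac{1}{2\pi}\|\w_j\|\sin\theta_{ij}\overline{\w}_i$, and $-\tfrac{1}{2\pi}\theta_{ij}\w_j$. Each derivative produces either a scaled identity or a rank-one matrix of the form $u\,v^\top$ with $\|u\|\|v\|=O(\|\w_j\|\cdot 1/\|\w_j\|)=O(1)$ or $O(\|\vv\|/n\cdot n/\|\vv\|)=O(1)$; summing these gives $\|\Lambda_{ij}\|_2=O(1)$. For the diagonal block $\Lambda_{ii}$, every one of the terms $(\sum_{j\neq i}\|\w_j\|\sin\theta_{ij}-\|\vv\|\sin\theta_i)\overline{\w}_i$, $-\sum_{j\neq i}\theta_{ij}\w_j$, and $\theta_i\vv$ generates contributions of the form ``coefficient $O(\|\vv\|)$ times derivative of angle/unit vector $O(n/\|\vv\|)$'', yielding rank-one matrices of operator norm $O(n)$; the sum over the $j\neq i$ index contributes at most $n$ such terms, giving $\|\Lambda_{ii}\|_2=O(n)$.

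The main obstacle is purely combinatorial bookkeeping: one has to verify that \emph{no} off-diagonal block inherits a factor of $1/\|\w_i\|$ (which would be $O(n/\|\vv\|)$), since all such terms arise only from differentiating objects built from $\w_i$'s own direction/normalization, which lie in $\Lambda_{ii}$. Once that is confirmed term by term in \eqref{gradient}, assembling the block norms gives
\[\|\Lambda\|_2\leq \sum_{i}\|\Lambda_{ii}\|_2+\sum_{i\neq j}\|\Lambda_{ij}\|_2\leq n\cdot O(n)+n(n-1)\cdot O(1)=O(n^2),\]
which is exactly the claimed bound.
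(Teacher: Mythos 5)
Your proof takes essentially the same route as the paper: decompose $\Lambda$ into $d\times d$ blocks, show $\|\Lambda_{ii}\|=O(n)$ and $\|\Lambda_{ij}\|=O(1)$ for $i\neq j$, and sum using $\|\Lambda\|_2\le\sum_{i,j}\|\Lambda_{ij}\|_2$. The only difference is that you re-derive the Hessian blocks by differentiating the gradient formula~\eqref{gradient} directly, whereas the paper simply quotes the closed-form block expression computed in \citet{safran20effects} (the $\zeta(\cdot,\cdot)$ form in~\eqref{Hessian}) and reads the bounds $\|\zeta(\w_i,\w_j)\|=O(1)$, $\|\zeta(\w_i,\vv)\|=O(n)$, $\|\Lambda_{i,j}\|\leq 1$ off that formula. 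Your derivation and the resulting estimates are correct and give the same answer; one small point of exposition is your phrase that the $j\neq i$ sum ``contributes at most $n$ such terms'' — to be clean you should say each summand in $\sum_{j\neq i}\theta_{ij}\w_j$ contributes an $O(1)$ block-derivative (since $\|\w_j\|\cdot\|\partial\theta_{ij}/\partial\w_i\|=O(1)$), and the whole $j$-sum therefore contributes $O(n)$, not that each is $O(n)$. Likewise the worry about ``inheriting a factor of $1/\|\w_i\|$'' is better phrased as making sure no off-diagonal block carries a coefficient of order $\|\vv\|$ (which would pair with the $1/\|\w_j\|=O(n/\|\vv\|)$ factor from an angle derivative to give $O(n)$): those $\|\vv\|$-scaled coefficients multiply only $\overline{\w}_i$, $\theta_i$, $\vv$, all functions of $\w_i$ alone, so they land in $\Lambda_{ii}$.
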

\begin{proof}
   When $\w_i\neq \bm{0}, \forall i$, \citet{safran20effects} has shown that $L(\w)$ is twice differentiable and computed the closed form expression of Hessian $\Lambda=\frac{\partial^2 L}{\partial \w^2}\in\mathbf{R}^{nd\times nd}$:
   \begin{equation}\label{Hessian}
       \Lambda=\begin{pmatrix}
       \Lambda_{1,1}&\cdots&\Lambda_{1,n}\\
       \vdots&&\vdots\\
       \Lambda_{n,1}&\cdots&\Lambda_{n,n}
   \end{pmatrix},
   \end{equation}
where $\Lambda_{i,j}\in \mathbb{R}^{d\times d}, i,j\in[n]$ are $d\times d$ matrices with the following forms:

The $i^{\text{th}}$ diagonal block matrix of $\Lambda$ is
\[\Lambda_{i,i}=\frac{1}{2}I+\sum_{j\neq i} \zeta(\w_i, \w_j)-\zeta(\w_i, \vv),\]
where 
\[\zeta(\w, \vv)=\frac{\sin\theta(\w,\vv)\|\vv\|}{2\pi\|\w\|}(I-\overline{\w}\overline{\w}^\top+\overline{\bm{n}}_{\bm{\vv,\w}}\overline{\bm{n}}_{\bm{\vv,\w}}^\top),\]
and ${\bm{n}}_{\bm{\vv,\w}}=\overline{\vv}-\cos\theta(\w,\vv)\overline{\w}$.

For $i\neq j$, the off-diagonal entry is
\[\Lambda_{i,j}=\frac{1}{2\pi}\left[(\pi-\theta(\w_i, \w_j))I+\overline{\bm{n}}_{\bm{\w_i,\w_j}}\overline{\w}_j^\top+\overline{\bm{n}}_{\bm{\w_j,\w_i}}\overline{\w}_i^\top\right].\]

Note that $\|\w_i\|=\Theta(\frac{\|\vv\|}{n}), \forall i$ implies $\frac{\|\w_j\|}{\|\w_i\|}=O(1), \forall i, j$ and $\frac{\|\vv\|}{\|\w_i\|}=O(n), \forall i$. Then $\|\zeta(\w_i,\w_j)\|\leq\frac{\|w_j\|}{2\pi\|\w_i\|}=O(1)$ and $\|\zeta(\w_i,\vv)\|\leq O(1)\frac{\|\vv\|}{\|\w_i\|}\leq O(n)$, so $\|\Lambda_{i,i}\|\leq\|\frac{1}{2}I\|+\sum_{j\neq i} \|\zeta(\w_i, \w_j)\|+\|\zeta(\w_i, \vv)\|\leq O(n)$.

Also note that $\|\Lambda_{i,j}\|\leq \frac{1}{2\pi}(\pi+1+1)\leq 1$ for all $i\neq j$.

Then $\|\Lambda(\w)\|\leq \sum_{i,j}\|\Lambda_{i,j}\|\leq nO(n)+(n^2-n)\leq O(n^2)$.
\end{proof}
The following lemma shows that each student neuron $\w_i$ will not move too far in the third phase.
\begin{lemma}[Bound of the Change of Neurons]\label{locality}
    If the initial loss at phase 3 is upper bounded by $L(\w(T_2))\leq C_l$, and there exists constant $C_s>0$ such that $L(\w(t+1))\leq L(\w(t))-\frac{\eta}{2}\|\nabla_W(t)\|^2\leq L(\w(t))-
    C_s\eta L^{4/3}(\w(t)), \forall T+T_2-1\geq t\geq T_2$, then
    \[L(T+T_2)\leq \frac{1}{(L(\w(T_2))^{-1/3}+C_s\eta T/3)^3},\]
    and
    \[ \sum_{t=0}^{T-1} \eta\|\nabla_W(t+T_2)\|\leq 8C_s^{-1/2}C_l^{1/3}.\]
\end{lemma}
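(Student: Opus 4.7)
The plan is to extract two consequences from the given chain of inequalities. First, the middle-to-right step $\frac{\eta}{2}\|\nabla_W(t)\|^2 \geq C_s\eta L^{4/3}(\w(t))$ yields the pointwise gradient lower bound $\|\nabla_W(t)\| \geq \sqrt{2C_s}\,L^{2/3}(\w(t))$. Second, the left-to-middle step is the standard descent inequality $L(t+1) \leq L(t) - \frac{\eta}{2}\|\nabla_W(t)\|^2$. The rest of the proof uses only these two facts together with the scalar inequality $L(t+1) \leq L(t) - C_s\eta L^{4/3}(t)$.

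For the first bound, I would run the classical $L^{-1/3}$ potential argument. Writing $a_t := L(\w(t+T_2))$, the recursion $a_{t+1} \leq a_t(1 - C_s\eta a_t^{1/3})$ and the elementary inequality $(1-x)^{-1/3} \geq 1 + x/3$ for $x \in [0,1)$ (valid by convexity of $(1-x)^{-1/3}$) give
\[
a_{t+1}^{-1/3} \;\geq\; a_t^{-1/3}\bigl(1 + \tfrac{1}{3}C_s\eta a_t^{1/3}\bigr) \;=\; a_t^{-1/3} + \tfrac{1}{3}C_s\eta.
\]
Telescoping over $t = 0, \ldots, T-1$ delivers $a_T^{-1/3} \geq a_0^{-1/3} + C_s\eta T/3$, which rearranges to the stated rate.

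For the second bound, the key manipulation is the identity $\eta\|\nabla_W(t)\| = \eta\|\nabla_W(t)\|^2 / \|\nabla_W(t)\|$. Combining the descent property with the gradient lower bound gives
\[
\eta\,\|\nabla_W(t+T_2)\| \;\leq\; \frac{2\bigl(L(t+T_2) - L(t+1+T_2)\bigr)}{\sqrt{2C_s}\,L^{2/3}(\w(t+T_2))}.
\]
I would then use the integral comparison: since $L \mapsto L^{-2/3}$ is decreasing, for any $L$ in the interval $[L(t+1+T_2), L(t+T_2)]$ we have $L^{-2/3} \geq L^{-2/3}(\w(t+T_2))$, so
\[
\bigl(L(t+T_2) - L(t+1+T_2)\bigr)\,L^{-2/3}(\w(t+T_2)) \;\leq\; \int_{L(t+1+T_2)}^{L(t+T_2)} L^{-2/3}\,dL \;=\; 3\bigl(L^{1/3}(\w(t+T_2)) - L^{1/3}(\w(t+1+T_2))\bigr).
\]
Summing over $t = 0, \ldots, T-1$ telescopes to $3L^{1/3}(\w(T_2)) \leq 3C_l^{1/3}$, and multiplying by the prefactor $2/\sqrt{2C_s} = \sqrt{2/C_s}$ produces a bound of $3\sqrt{2}\,C_s^{-1/2}C_l^{1/3}$, which is comfortably absorbed into the stated constant $8$.

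There is no deep obstacle here; the argument is essentially two applications of the same trick: converting a ``power of $L$'' descent inequality into a telescoping statement for the correct negative power of $L$. The only subtlety worth verifying is the integral comparison in the second part, where one must note that the discrete increment $(L(t)-L(t+1))/L^{2/3}(t)$ is a lower Riemann sum for $\int L^{-2/3}\,dL$ precisely because $L^{-2/3}$ is \emph{decreasing} on the interval, so evaluating at the larger endpoint gives the smaller value.
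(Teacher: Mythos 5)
Your proof is correct, and while Part 1 coincides with the paper's argument, your proof of Part 2 takes a genuinely different and arguably cleaner route. For Part 1, the paper performs the factorization of $\bigl(L(t)-C_s\eta L^{4/3}(t)\bigr)^{-1/3}$ by hand and then bounds the resulting bracket below by $\frac{1}{3}$; you shortcut this by invoking $(1-x)^{-1/3}\geq 1+x/3$, which is the same estimate. For Part 2, the paper applies Cauchy--Schwarz to $\bigl(\sum_t\|\nabla_W(T_2+t)\|\bigr)^2$ with weights $(l_1+C_s\eta t/3)^{\pm 2}$, and then needs Abel summation together with the already-proved rate bound $L(\w(T_2+t))\leq(l_1+C_s\eta t/3)^{-3}$ to control the weighted sum of losses. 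Your argument bypasses all of that: you write $\eta\|\nabla_W(t)\|=\eta\|\nabla_W(t)\|^2/\|\nabla_W(t)\|$, bound the numerator by $2(L(t)-L(t+1))$ via the descent inequality and the denominator from below by $\sqrt{2C_s}\,L^{2/3}(t)$, and then recognize $(L(t)-L(t+1))L^{-2/3}(t)$ as a lower Riemann sum for $\int_{L(t+1)}^{L(t)}L^{-2/3}\,dL = 3\bigl(L^{1/3}(t)-L^{1/3}(t+1)\bigr)$, which telescopes. This is shorter, does not rely on the convergence-rate bound from Part 1, and yields the sharper constant $3\sqrt{2}$ in place of the paper's $8$. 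The one small point to make explicit is that the division by $\|\nabla_W(t)\|$ is harmless: if $\|\nabla_W(t)\|=0$ then the hypothesis forces $L(t)=0$ and the summand vanishes.
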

\begin{proof}
    We bound the loss as
    \[\begin{split}
        &~~~~~~\frac{1}{L^{1/3}(\w (t+1))}\\
        &\geq\frac{1}{\left(L(\w(t))-C_s\eta L^{4/3}(\w(t))\right)^{1/3}}\\
        &=\frac{1}{L^{1/3}(\w(t))}
        \left(1+\frac{1-\left(1-C_s\eta L^{1/3}(\w(t))\right)^{1/3}}{\left(1-C_s\eta L^{1/3}(\w(t))\right)^{1/3}}\right)\\
        &=\frac{1}{L^{1/3}(\w(t))}
        \left(1+\frac{C_s\eta L^{1/3}(\w(t))}{\left(1-C_s\eta L^{1/3}(\w(t))\right)^{1/3}\left(1+\left(1-C_s\eta L^{1/3}(\w(t))\right)^{1/3}+\left(1-C_s\eta L^{1/3}(\w(t))\right)^{2/3}\right)}\right)\\
        &=\frac{1}{L^{1/3}(\w(t))}+C_s\eta\frac{1}{\left(1-C_s\eta L^{1/3}(\w(t))\right)^{1/3}+\left(1-C_s\eta L^{1/3}(\w(t))\right)^{2/3}+\left(1-C_s\eta L^{1/3}(\w(t))\right)}\\
        &\geq\frac{1}{L^{1/3}(\w(t))}+\frac{C_s\eta}{3}.
    \end{split}\]
    
    Therefore, ${L^{-1/3}(\w (t+T_2))}\geq {L^{-1/3}(\w (T_2))}+\frac{C_s\eta}{3}t, \forall T\geq t\geq 0.$ Let $l_1\coloneqq {L^{-1/3}(\w(T_2))}$, then $1/l_1^3\leq C_l$ and \[L(\w (T_2+t))\leq\frac{1}{(l_1+C_s\eta t/3)^3}, \forall t\leq T,\]
    this proves the first inequality.

    For the second inequality, note that
    \[L(\w(t+1))\leq L(\w(t))-\frac{\eta}{2}\|\nabla_W(t)\|^2\To \|\nabla_W(t)\|^2\leq \frac{2}{\eta}\left(L(\w(t))-L(\w(t+1))\right).\]
    
By Cauchy inequality, $\forall T>0,$
    \[\begin{split}
        &\left(\sum_{t=0}^{T-1} \|\nabla_W(T_2+t)\|\right)^2\\
        \leq&\left(\sum_{t=0}^{T-1} \frac{1}{(l_1+C_s\eta t/3)^2}\right)\left(\sum_{t=0}^{T-1} {(l_1+C_s\eta t/3)^2}\|\nabla_W(T_2+t)\|^2\right)\\
        \leq&\left(\sum_{t=0}^{T-1} \frac{1}{(l_1+C_s\eta t/3)^2}\right)\left(\sum_{t=0}^{T-1} {(l_1+C_s\eta t/3)^2}\frac{2}{\eta}\left(L(\w(T_2+t))-L(\w(T_2+t+1))\right)\right)\\
        \leq&\frac{2}{\eta}\left(\sum_{t=0}^{T-1} \frac{1}{(l_1+C_s\eta t/3)^2}\right)\left(l_1^2L(\w(T_2+T))+\sum_{t=1}^{T-1} \left((l_1+C_s\eta t/3)^2-{(l_1+C_s\eta (t-1)/3)^2}\right)L(\w(T_2+t))\right)\\
        \leq&\frac{2}{\eta}\left(\sum_{t=0}^{T-1} \frac{1}{(l_1+C_s\eta t/3)^2}\right)\left(l_1^2L(\w(T_2+T))+\frac{2C_s\eta}{3}\sum_{t=1}^{T-1} \left(l_1+C_s\eta t/3\right)L(\w(T_2+t))\right)\\
        \leq&\frac{2}{\eta}\left(\sum_{t=0}^{T-1} \frac{1}{(l_1+C_s\eta t/3)^2}\right)\left(\frac{1}{l_1}+\frac{2C_s\eta}{3}\sum_{t=1}^{T-1} \frac{1}{(l_1+C_s\eta t/3)^2}\right)\\
    \end{split}\]
    
    Note that 
    \[\sum_{t=0}^{T-1} \frac{1}{(l_1+C_s\eta t/3)^2}\leq \sum_{t=0}^{+\infty} \frac{1}{(l_1+C_s\eta t/3)^2}
    \leq\frac{3}{C_s\eta}\sum_{t=0}^{+\infty} \left(\frac{1}{l_1+C_s\eta (t-1)/3}-\frac{1}{l_1+C_s\eta t/3}\right)\leq \frac{6}{C_s\eta l_1}.\]
    
    Therefore,
    \[\left(\sum_{t=0}^{T-1} \|\nabla_W(T_2+t)\|\right)^2
    \leq\frac{2}{\eta}\left(\frac{6}{C_s\eta l_1}\right)\left(\frac{1}{l_1}+\frac{2C_s\eta}{3}\frac{6}{C_s\eta l_1}\right)\leq \frac{2}{\eta}\frac{6}{C_s\eta l_1}\frac{5}{l_1}=\frac{60}{C_s\eta^2l_1^2}.\]
    
    So we get
    \[\eta \sum_{t=0}^T \|\nabla(T_2+t)\|\leq \eta\sqrt{\frac{60}{C_s\eta^2l_1^2}}\leq 8C_s^{-1/2}C_l^{1/3}.\]
\end{proof}

\subsection{Proof of Theorem \ref{p3}}\label{proof of p3}

\begin{reptheorem}{p3}
Suppose the initial condition in Lemma \ref{Initialization} holds. If we set $\epsilon_2=O(n^{-14})$ in Theorem \ref{p2}, $\eta=O\left(\frac{1}{n^2}\right)$, then $\forall T\in \mathbf{N}$ we have
\begin{equation}\label{regularization}
        \frac{4\|\vv\|}{n}\geq \|\w_i(T+T_2)\|\geq \frac{\|\vv\|}{4n}, 
    \end{equation}
and
    \begin{equation}\label{loss_bound}
        L(T+T_2)\leq O\left(\frac{n^4\|\vv\|^2}{\left(\eta T \right)^3}\right).
    \end{equation}

\end{reptheorem}
\begin{proof}
Since we have set $\epsilon_2=O(n^{-14})$, by Lemma \ref{p3init} we have
$\frac{3\|\vv\|}{n}\geq \|\w_i(T_2)\|\geq \frac{\|\vv\|}{3n}, \forall i$, and $L(\w(T_2))\leq 20\epsilon_2\|\vv\|^2 = O\left(\frac{\|\vv\|^2}{n^{14}}\right)$.

To prove the theorem, we just need to prove \eqref{regularization} and a stronger version of \eqref{loss_bound}:
\begin{equation}\label{convergence}
        L(T+T_2)\leq \frac{1}{\left(L(T_2)^{-1/3}+\Omega\left(\frac{{1}}{n^{4/3}\|\vv\|^{2/3}}\right)\eta T \right)^3}.
    \end{equation}

We prove \eqref{regularization} and \eqref{convergence} together inductively.

The induction base holds for $T=0$ by Lemma \ref{p3init}.
    
    Now suppose  \eqref{regularization} \eqref{convergence} hold for $0,1,\ldots,T-1$, we show the case of $T$.
    
   First note that  \eqref{gradient} \eqref{regularization} and routine computation implies
    $\left\|\nabla_i(t+T_2)\right\|=O(\|\vv\|), \forall 0\leq t\leq T-1.$
    
    For $\forall T_2\leq t \leq  T+T_2-1$, since the induction condition together with lemma \ref{smoothness} guarantee the smoothness of $L$, the classical analysis of gradient descent can be applied ([\cite{nesterov2018lectures}], lemma 1.2.3) to bound the decrease of loss at time $t$ as
    \begin{equation}\label{9}
        \begin{split}
        L(\w(t+1))=&L(\w(t))+\left\langle\nabla_W(t), -\eta \nabla_W(t)\right\rangle+\\
        &\int_{\tau=0}^1(1-\tau)(-\eta \nabla_W(t))^\top\frac{\partial^2L}{\partial \w^2}(\w(t)-\tau\eta \nabla_W(t))(-\eta \nabla_W(t))\mathrm{d}\tau.
    \end{split}
    \end{equation}

For $\forall \tau\in[0,1]$, $\|\w_i(t)-\tau\eta \nabla_i(t)\|\geq \|\w_i(t)\|-\eta\|\nabla_i(t)\|\geq \frac{\|\vv\|}{4n}-\eta O(\|\vv\|)\geq \frac{\|\vv\|}{5n}$, similarly we have $\|\w_i(t)-\tau\eta \nabla_i(t)\|\leq \|\w_i(t)\|+\eta\|\nabla_i(t)\|\leq \frac{5\|\vv\|}{n}$. 
Then lemma \ref{smoothness} implies the smoothness of $L$ at $\w(t)-\tau\eta \nabla_W(t)$:
$\left\|\frac{\partial^2L}{\partial \w^2}(\w(t)-\tau\eta \nabla_W(t))\right\|\leq O\left(n^2\right)$.
Combined with gradient lower bound Theorem \ref{gdbound} (note that $L(\w(t))=O(\|\vv\|^2/n^{14})$), the dynamic of loss can be bounded as
\[\begin{split}
    L(\w(t))-L(\w(t+1))&\geq \eta\|\nabla_W(t)\|^2-\int_{\tau=0}^1(1-\tau)O\left(n^2\right)\|-\eta \nabla_W(t)\|^2\mathrm{d}\tau\\
    &\geq \frac{\eta}{2}\|\nabla_W(t)\|^2\\
    &\overset{\text{Theorem }\ref{gdbound}}{\geq}
    \Omega\left(\frac{{1}}{n^{4/3}\|\vv\|^{2/3}}\right)\eta L^{4/3}(\w(t)).
\end{split}\]

Set $C_s$ in Lemma \ref{locality} as $C_s= \Omega\left(\frac{{1}}{n^{4/3}\|\vv\|^{2/3}}\right)$. For $\forall T+T_2-1\geq t\geq T_2$, the above inequality implies that $L(\w(t+1))\leq L(\w(t))-\frac{\eta}{2}\|\nabla_W(t)\|^2\leq L(\w(t))-
    C_s\eta L^{4/3}(\w(t))$. So we can apply Lemma \ref{locality} here, which immediately implies \eqref{convergence}. 

For \eqref{regularization}, Lemma \ref{locality} yields
\[\begin{split}
\|\w_i(T+T_2)\|&\geq \|\w_i(T_2)\|-\sum_{t=0}^{T-1} \eta\|\nabla_W(t+T_2)\|\geq \frac{\|\vv\|}{3n}-8C_s^{-1/2}L(\w(T_2))^{1/3}\\
&=\frac{\|\vv\|}{3n}- O\left(n^{2/3}\|\vv\|^{1/3}\cdot \left(\frac{\|\vv\|^2}{n^{14}}\right)^{1/3}\right)\geq \frac{\|\vv\|}{4n},
\end{split}\]
similarly
\[\|\w_i(T+T_2)\|\leq \|\w_i(T_2)\|+\sum_{t=0}^{T-1} \eta\|\nabla_W(t+T_2)\|
\leq \frac{3\|\vv\|}{n}+O\left(n^{2/3}\|\vv\|^{1/3}\cdot \left(\frac{\|\vv\|^2}{n^{14}}\right)^{1/3}\right)\leq \frac{4\|\vv\|}{n}.\]
\end{proof}

\section{Supplementary Materials for Section \ref{Proof Sketch}}
\subsection{Proof of Lemma \ref{Initialization}}\label{proof of Initialization}
\begin{replemma}{Initialization}
Let $s_1 \coloneqq \frac{1}{2}\sigma \sqrt{d}, s_2\coloneqq 2\sigma \sqrt{d}$.    When $d=\Omega(\log(n/\delta))$, with probability at least $1-\delta$, the following properties holds:
    \begin{equation}\label{w(0)}
        \forall i\in[n], s_1\leq \|\w_i(0)\|\leq s_2,
    \end{equation}
    \begin{equation}\label{theta(0)}
        \forall i\in [n], \frac{\pi}{3}\leq \theta_i(0)\leq \frac{2\pi}{3}.
    \end{equation}
\end{replemma}

\begin{proof}
By concentration inequality of Gaussian (See Section 2 in \cite{Dasgupta13}), For $\forall i$ we have $\Pr\left[\|\w_i(0)\|<\frac{1}{2}\sigma\sqrt{d}\vee \|\w_i(0)\|>2\sigma\sqrt{d}\right]\leq \Pr\left[|\|\w_i(0)\|^2-\sigma^2d|>\frac{3}{4}\sigma^2d\right]\leq \exp\left(-\left(\frac{3}{4}\right)^2d/24\right)\leq \exp(-\Omega(\log(n/\delta)))\leq \frac{\delta}{3n}.$ By union bound, \eqref{w(0)} holds with probability at least $1-\delta/3$.

For \eqref{theta(0)}, note that for $\forall i\in[n]$, 
\[|\langle \w_i(0), \overline{\vv}\rangle| \leq \frac{1}{4}\sigma\sqrt{d} 
 \wedge \|\w_i(0)\| \geq \frac{1}{2}\sigma\sqrt{d} \To \frac{|\langle \w_i(0), \overline{\vv}\rangle|}{\|\w_i(0)\|}\leq \frac{1}{2} \To\frac{\pi}{3}\leq \theta_i(0)\leq \frac{2\pi}{3}. \]

By concentration inequality of Gaussian, $\Pr\left[|\langle \w_i(0), \overline{\vv}\rangle| > \frac{1}{4}\sigma\sqrt{d}\right]\leq 2\exp\left(-\frac{(\frac{1}{4}\sigma\sqrt{d})^2}{2\sigma^2}\right)\leq \frac{\delta}{3n}.$ 
Then $\Pr\left[\theta_i(0)<\frac{\pi}{3}\vee \theta_i(0) >\frac{2\pi}{3}\right]\leq \Pr\left[|\langle \w_i(0), \overline{\vv}\rangle| > \frac{1}{4}\sigma\sqrt{d}\right] + \Pr\left[\|\w_i(0)\| < \frac{1}{2}\sigma\sqrt{d}\right]\leq \frac{2\delta}{3n}.$ By union bound, \eqref{theta(0)} holds with probability at least $1-2\delta/3$. Applying union bound again finishes the proof.
\end{proof}

\subsection{Parameter Setting}\label{valuation}
In this section, we assign values to all intermediate parameters appeared in Theorem \ref{p1}, Theorem \ref{p2} and Theorem \ref{p3}, according to the requirements of these theorems.

\begin{itemize}
    \item First we set $\epsilon_2 = O(n^{-14})$ in Theorem \ref{p2} as required by Theorem \ref{p3}.

    \item Set $\epsilon_1 = O(\epsilon_2^6n^{-1/2})=O(n^{-84.5})$ in Theorem \ref{p1} as required by Theorem \ref{p2}.
    
    \item Set $C=O\left(\frac{\epsilon_1^2}{n}\right)=O(n^{-170})$ in Theorem \ref{p1}.
    
    \item Set $\sigma=O\left(C\epsilon_1^{48}d^{-1/2}\|\vv\|\right)=O\left(\epsilon_1^{50}d^{-1/2}\|\vv\|/n\right)=O\left(n^{-4226}d^{-1/2}\|\vv\|\right)$ in Theorem \ref{p1}.
    
    \item Set $\eta = O\left(\frac{\epsilon_1^2\sigma^2d}{\|\vv\|^2}\right)=O\left(\frac{\sigma^2d}{n^{169}\|\vv\|^2}\right)$ as required by Theorem \ref{p2}. (Note that in Phase 1 and 3, Theorem \ref{p1} and Theorem \ref{p3} also have requirements for $\eta$, but the bound in Theorem \ref{p2} is the tightest one.)
    
    \item Set $T_1=\frac{C}{\eta} = O\left(\frac{\epsilon_1^2}{n\eta}\right)$ in Theorem \ref{p1}. 
    
    \item Finally, set $T_2 = T_1 + \left\lceil\frac{1}{n\eta}\ln\left(\frac{1}{36\epsilon_2}\right)\right\rceil =O\left(\frac{\epsilon_1^2}{n\eta}\right) + O\left(\frac{\log(1/\epsilon_2)}{n\eta}\right)= O\left(\frac{\log(1/\epsilon_2)}{n\eta}\right)=O\left(\frac{\log n}{n\eta}\right)$ in Theorem \ref{p2}.
\end{itemize}

The dependence between these parameters is shown in Figure \ref{dependency graph}.
(We use arrows to indicate dependency, $e.g.$, the arrow from $\epsilon_1$ to $\epsilon_2$ indicates that $\epsilon_1$ depends on $\epsilon_2$.)

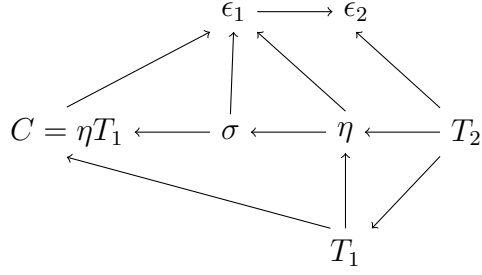
\begin{figure}[t]
\begin{center}
\begin{tikzpicture}
    \node (ep2) {$\epsilon_2$};
    \node (ep) [left =of ep2] {$\epsilon_1$};
    \node (et) [below left=of ep] {$C=\eta T_1$};
    \node (s2) [right=of et] {$\sigma$};
    \node (s1) [right=of s2] {$\eta$};
    \node (t0) [below=of s1] {$T_1$};
    \node (eta) [right=of s1] {$T_2$};

    \draw[->] (ep.east) -- (ep2.west);
    \draw[->] (s2.north) -- (ep.south);
    \draw[->] (et.north) -- ([xshift = 0mm]ep.south west);
    \draw[->] (s1.north) -- (ep.south east);
    \draw[->] (s2.west) -- (et.east);
    \draw[->] (s1.west) -- (s2.east);
    \draw[->] (eta.north west) -- ([xshift = 0mm]ep2.south);
    \draw[->] (eta.west) -- (s1.east);
    \draw[->] ([xshift = -2mm]t0.north) -- (et.south);
    \draw[->] (t0.north) -- (s1.south);
    \draw[->] (eta.south west) -- (t0.north east);
\end{tikzpicture}
\end{center}
    \caption{Parameter Dependency Graph}
    \label{dependency graph}
\end{figure}

\subsection{Non-degeneracy of Student Neurons}\label{Non-degeneracy of Student Neurons}
There is a technical issue in the convergence analysis: if one of the student neuron $\w_i$ is degenerate and $\w_i = \bm{0}$, the loss function $L(\w)$ is not differentiable, hence gradient descent is not well-defined. 

However, our proof shows that such case would not happen and the student neurons are always non-degenerate. Note that the student neuron's norm $\|\w_i\|$ is always lower-bounded in all three phases of our analysis (Phase 1: \eqref{1} in Theorem \ref{p1}, Phase 2: \eqref{2-3} in Theorem \ref{p2}, Phase 3: \eqref{regularization} in Theorem \ref{p3}). By these bounds we have the following corollary describing the non-degeneracy of student neurons.

\begin{corollary}\label{w>0}
If the initialization conditions in Lemma \ref{Initialization} hold, then for $\forall i\in [n], t\in \mathbf{N}$, $\|\w_i(t)\|> {0}$.
\end{corollary}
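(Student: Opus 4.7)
The plan is to derive Corollary~\ref{w>0} as a direct consequence of the three lower bounds on $\|\w_i(t)\|$ already established in the main convergence argument, by checking that the time intervals over which those bounds hold exhaust $\mathbf{N}$.

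Concretely, I would argue as follows. During Phase~1, estimate \eqref{1} in Theorem~\ref{p1} gives $\|\w_i(t)\| \geq s_1 = \tfrac{1}{2}\sigma\sqrt{d} > 0$ for every $i \in [n]$ and every $0 \leq t \leq T_1$. During Phase~2, estimate \eqref{2-3} in Theorem~\ref{p2} gives $h_i(t) \geq s_1/2 > 0$ for every $T_1 \leq t \leq T_2$, and since $\|\w_i(t)\| \geq h_i(t)$, the same strictly positive lower bound transfers to $\|\w_i(t)\|$. During Phase~3, estimate \eqref{regularization} in Theorem~\ref{p3} gives $\|\w_i(T+T_2)\| \geq \|\vv\|/(4n) > 0$ for every $T \in \mathbf{N}$. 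The union $[0,T_1] \cup [T_1,T_2] \cup [T_2,\infty)$ covers $\mathbf{N}$, so this yields the claim.

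The one subtlety I expect to be the main obstacle — or at least the one point that deserves explicit comment — is a potential circularity: gradient descent is only well-defined at step $t$ provided $\w_i(t) \neq \bm{0}$ for all $i$, and each of the three theorems is proved by induction on $t$ using the gradient formula \eqref{gradient}. I would address this by pointing out that the inductive step in each of Theorems~\ref{p1},~\ref{p2},~\ref{p3} is structured precisely to avoid this issue: the inductive hypothesis at time $t$ already contains a strictly positive lower bound on $\|\w_i(t)\|$ (respectively $s_1$, $s_1/2$, and $\|\vv\|/(4n)$), which guarantees that $\nabla_i(t)$ is well-defined via \eqref{gradient}, hence so is $\w_i(t+1) = \w_i(t) - \eta \nabla_i(t)$, and then the inductive step produces the same lower bound at time $t+1$. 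Thus the three theorems are internally consistent, and the corollary is a cost-free corollary of their joint statement. At the boundary times $T_1$ and $T_2$, both adjacent theorems give (possibly different) strictly positive lower bounds, so neither the choice of which interval to use nor the matching of values creates any issue.
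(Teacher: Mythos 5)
Your proof is correct and takes essentially the same approach as the paper: the corollary is read off from the strictly positive lower bounds on $\|\w_i(t)\|$ (resp.\ on $h_i(t) \le \|\w_i(t)\|$) established across the three phases in \eqref{1}, \eqref{2-3}, and \eqref{regularization}. Your explicit remark about the inductive structure warding off circularity is a worthwhile observation that the paper leaves implicit, but it does not change the argument.
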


\begin{remark}
Note that an assumption on the initialization condition such as the one in Corollary \ref{w>0} is necessary, otherwise  there would be counter-examples where all student neurons are degenerate. For example, $\forall c > 0$, if we set $\w_i(0)=-c\vv, \forall i$ and $\eta = \frac{2c}{1+nc}$, then straightforward calculation shows that $\forall i, \nabla_i(0)=-\frac{1+nc}{2}\vv \To \forall i, \w_i(1)= \bm{0}.$. 

\end{remark}

\section{Lower Bound of the Convergence Rate}\label{Lower Bound of the Convergence Rate}

\subsection{Preliminaries} \label{Implicit Regularization: Gradient Flow Version}
In this section, we do some technical preparations for proving Theorem \ref{lower bound main theorem}.

Taking $\eta\to 0$, we get the gradient flow version of Theorem \ref{p3}.
\begin{theorem}\label{p3flow}
Suppose gradient flow is initialized from a point $\w(0)$ where the conditions in Lemma \ref{Initialization} hold. If $\sigma= O\left(n^{-4226}d^{-1/2}\|\vv\|\right)$, then there exists $T_2 = O\left(\frac{\log n}{n}\right)$ such that $\forall T\in \mathbf{R}_{\geq 0}$ we have
\begin{equation}\label{precondition lower bound 1}
        \frac{4\|\vv\|}{n}\geq \|\w_i(T+T_2)\|\geq \frac{\|\vv\|}{4n}, 
    \end{equation}
and
    \begin{equation}\label{precondition lower bound 2}
        L(T+T_2)\leq O\left(\frac{n^4\|\vv\|^2}{T^3}\right).
    \end{equation}
\end{theorem}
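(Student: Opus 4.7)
The plan is to re-run the three-phase analysis of Theorems \ref{p1}, \ref{p2}, \ref{p3} in the continuous-time limit $\eta \to 0$. Under the identification $t_{\mathrm{cont}} = \eta\, t_{\mathrm{disc}}$, the phase~1 length $T_1 = C/\eta$ becomes the finite time $C$, the phase~2 length $\lceil \log(1/\epsilon_2)/(n\eta)\rceil$ becomes $\log(1/\epsilon_2)/n = O(\log n / n)$, and every perturbation term of order $O(\eta)$ or $O(\eta^2)$ that appears in the discrete analysis (such as $I_2$ in \eqref{cos-theta} or the discretization correction used in the trick \eqref{anima}) simply vanishes. Consequently the gradient-flow theorem follows by transcribing the discrete proof with difference inequalities replaced by differential inequalities; this is a strict simplification, not a new argument.

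For phase~1, the ODE $\dot{\w}_i = -\nabla_i L(\w)$ yields $\frac{d}{dt}\|\w_i\| \le \|\nabla_i\| = O(\|\vv\|)$ via the standard triangle-inequality bound \eqref{gdnorm}, and the same computation $\langle \overline{\w}_i,\nabla_i\rangle < 0$ as in the discrete case (now without the quadratic $\eta^2\|\nabla_i\|^2$ term) gives the monotone lower bound $\|\w_i(t)\|\ge s_1$. For angles, the continuous analog of \eqref{trick} becomes
\[\tfrac{d}{dt}\sin^2(\theta_i/2)\;\le\;-\tfrac{\|\vv\|}{12\|\w_i\|}\bigl(\sin^2(\theta_i/2)-\epsilon_1^2\bigr),\]
which integrates by Gr\"onwall to the same $(1+t\|\vv\|/s_2)^{-1/24}$ decay; balancing of the $h_i$'s follows as in the proof of \eqref{3}. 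For phase~2, the dynamics read $\dot h_i = \tfrac12 H - Q_i$ and $\dot H = -\tfrac{n}{2} H + \sum_i Q_i$ with $|Q_i|$ bounded exactly as in \eqref{Qi}, producing the exponential-decay window on $H$ and the bounds on $h_i$; the potential $V(t)=\sum_i \sin^2(\theta_i/2)$ satisfies $\dot V \le 10 n V + O(\|\vv\|^2/s_1^2)$, which integrates to $V(T_2) \le \epsilon_2^2/16$ exactly as in the discrete argument.

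For phase~3, Theorem \ref{gdbound} and Lemma \ref{smoothness} are pointwise algebraic statements about $L$ and carry over unchanged. The continuous-time descent $\frac{d}{dt}L(\w(t)) = -\|\nabla L(\w(t))\|^2 \le -C_s L^{4/3}(\w(t))$ with $C_s = \Omega(1/(n^{4/3}\|\vv\|^{2/3}))$ integrates to $L(T+T_2) \le (L(T_2)^{-1/3}+C_s T/3)^{-3}$, giving \eqref{precondition lower bound 2}. For the norm bound \eqref{precondition lower bound 1} I would use the continuous analog of Lemma \ref{locality}: by Cauchy--Schwarz, $\int_0^\infty \|\dot\w(s+T_2)\|\,ds \le \bigl(\int_0^\infty L^{-2/3}\bigr)^{1/2}\bigl(\int_0^\infty L^{2/3}\|\nabla L\|^2\bigr)^{1/2}$ with the weighting handled exactly as in the proof of Lemma \ref{locality}, yielding $\int_0^\infty \|\dot{\w}_i\|\,ds = O(L(T_2)^{1/3}/\sqrt{C_s}) = o(\|\vv\|/n)$, so each $\|\w_i(t)\|$ stays inside $[\|\vv\|/(4n),\,4\|\vv\|/n]$ given the initial values from (the flow version of) Lemma \ref{p3init}. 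The main obstacle is bookkeeping rather than a genuine new ingredient: one must verify that each ``$\eta$-times'' factor in the discrete estimates corresponds correctly to a ``$dt$'' factor in the flow, and that the non-degeneracy $\w_i(t)\neq \bm{0}$ is maintained throughout so that $\nabla L$ is well-defined along the trajectory, which follows directly from the phase-wise norm lower bounds $\|\w_i(t)\|\ge\min\{s_1,\|\vv\|/(4n)\}$.
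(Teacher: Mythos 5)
Your proposal takes exactly the same route as the paper: the paper dispatches Theorem \ref{p3flow} in a single sentence (``Taking $\eta\to 0$, we get the gradient flow version of Theorem \ref{p3}''), and you are simply expanding that sentence into an actual argument. Your phase-by-phase transcription is correct, and you rightly observe that the continuous-time version is a strict simplification: the discretization correction $I_2$ in \eqref{cos-theta}, the quadratic $\eta^2\|\nabla_i\|^2$ term in the norm recursion, the $O(\eta^2)$ perturbation in the $V$-recursion, and the smoothness/Hessian contribution to the descent inequality \eqref{9} all vanish, leaving clean differential inequalities that integrate by Gr\"onwall.

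One small technical slip in the phase-3 step: the Cauchy--Schwarz bound you wrote,
\[
\int_0^\infty \|\dot\w(s+T_2)\|\,ds \;\le\; \Bigl(\int_0^\infty L^{-2/3}\,ds\Bigr)^{1/2}\Bigl(\int_0^\infty L^{2/3}\|\nabla L\|^2\,ds\Bigr)^{1/2},
\]
is vacuous as stated because the first factor diverges: from $L(s+T_2)\le (l_1+C_s s/3)^{-3}$ one gets $L^{-2/3}\ge (l_1+C_s s/3)^2\sim s^2$, so $\int_0^\infty L^{-2/3}\,ds=\infty$. The fix is exactly what Lemma \ref{locality} actually does in the discrete case: use the deterministic, time-explicit weight $g(s)=l_1+C_s s/3$ (a lower bound for $L^{-1/3}$, not $L^{-1/3}$ itself), so that $\int g^{-2}\,ds=3/(C_s l_1)$ converges, and $\int g^2\|\nabla L\|^2\,ds=\int g^2(-\dot L)\,ds\le 3/l_1$ by integration by parts; the product then gives $\int_0^\infty\|\dot\w\|\,ds = O(C_s^{-1/2}L(T_2)^{1/3})$. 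You gestured at this with ``handled exactly as in the proof of Lemma \ref{locality},'' so you clearly had the right thing in mind, but the displayed inequality should be corrected so as not to mislead. You should also state explicitly that the flow is well-posed: $L$ is locally Lipschitz away from $\{\exists i:\w_i=\bm 0\}$, and the phase-wise lower bounds $\|\w_i(t)\|\ge\min\{s_1,\|\vv\|/(4n)\}$ keep the trajectory in that region, so existence and uniqueness follow by Picard--Lindel\"of for all $t\ge 0$.
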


Similarly, we also have the gradient flow version of Corollary \ref{w>0}.
\begin{corollary}\label{w>0 gd flow}
Given that the initial condition in Lemma \ref{Initialization} holds, then for $\forall i\in [n], t\in \mathbf{N}$, $\|\w_i(t)\|> {0}$.
\end{corollary}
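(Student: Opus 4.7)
\textbf{Proof proposal for Corollary \ref{w>0 gd flow}.} The plan is to piece together the three phase-wise lower bounds on $\|\w_i(t)\|$ that have already been established (phase~1: \eqref{1}; phase~2: \eqref{2-3}; phase~3: \eqref{precondition lower bound 1}), and observe that together they cover the whole half-line $[0,\infty)$ with a \emph{strictly positive} floor for each $\|\w_i(t)\|$. By Lemma \ref{Initialization}, at $t=0$ we have $\|\w_i(0)\|\geq s_1>0$ for every $i$, so there is a maximal interval $[0,T^{\max})$ on which gradient flow is well-defined (the loss $L$ is $C^1$ on the open set $\{\w:\w_i\neq\bm{0}\ \forall i\}$ thanks to the closed-form \eqref{gradient}, so Picard-Lindel\"of gives local existence and uniqueness as long as all $\w_i$ stay nonzero).

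Within $[0,T^{\max})$, I would invoke the gradient-flow analogues of Theorems \ref{p1} and \ref{p2}, which are obtained by taking $\eta\to 0$ in exactly the same way as Theorem \ref{p3flow} is obtained from Theorem \ref{p3} (every inequality used in those proofs either already holds for the continuous dynamics or has an infinitesimal-step limit). These yield, respectively, $\|\w_i(t)\|\geq s_1$ for $t\in[0,T_1]$ and $\|\w_i(t)\|\geq s_1/2$ for $t\in[T_1,T_2]$. For $t\geq T_2$ the already-stated Theorem \ref{p3flow} gives $\|\w_i(t)\|\geq \|\vv\|/(4n)$. In particular $\inf_{t\in[0,T^{\max})\cap[0,\infty)}\|\w_i(t)\|\geq \min\{s_1/2,\|\vv\|/(4n)\}>0$.

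The key point, and the only step requiring care, is closing the bootstrap: the phase theorems presuppose that gradient flow is defined on $[0,T^{\max})$, and I am using them to conclude that the trajectory stays in the smooth region, which is what makes gradient flow extendable. I would therefore argue by contradiction: suppose $T^{\max}<\infty$. Then by standard ODE theory some $\|\w_i(t)\|$ must tend to $0$ as $t\uparrow T^{\max}$ (otherwise the trajectory stays in a compact subset of the smooth region and can be extended). But the uniform lower bound just derived, valid throughout $[0,T^{\max})$, contradicts this. Hence $T^{\max}=\infty$ and the lower bound $\|\w_i(t)\|>0$ holds for all $t\geq 0$, completing the proof.

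The main (and essentially only) obstacle is this well-definedness/bootstrap issue — technically one has to make sure the phase bounds are not circular when applied to a flow whose very definition depends on $\w_i\neq\bm{0}$. The resolution via the maximal existence interval is standard, and the derivations of the phase bounds themselves are just the continuous limits of Appendices \ref{Phase 1 of Global Convergence}--\ref{Phase 3 of Global Convergence}, so no new estimate is needed.
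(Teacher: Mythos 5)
Your proposal is correct and follows the same underlying route as the paper: collect the lower bounds on $\|\w_i\|$ established across the three phases and conclude that $\|\w_i(t)\|$ never hits zero. The paper gives this corollary essentially no proof — it just writes ``Similarly, we also have the gradient flow version of Corollary \ref{w>0}'', which in turn merely references the phase-wise lower bounds \eqref{1}, \eqref{2-3}, \eqref{regularization}. Where you genuinely add something is in confronting the bootstrap/well-definedness issue head-on: gradient flow is only defined while all $\w_i\neq\bm{0}$, so invoking phase theorems that presuppose the flow exists looks circular, and you resolve this cleanly with a maximal-existence-interval and continuation argument. The paper silently glosses over this, so your version is more rigorous.

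One small imprecision in the escape step: when $T^{\max}<\infty$, the standard extension theorem guarantees only that the trajectory leaves every compact subset of the open domain $\{\w:\w_i\neq\bm{0}\ \forall i\}$, which could also occur via $\|\w_i\|\to\infty$, not only via some $\|\w_i\|\to 0$. You therefore also need the phase-wise \emph{upper} bounds on $\|\w_i(t)\|$ (\eqref{1}, \eqref{wnorm2}, \eqref{regularization}), which the analysis already supplies, to confine the trajectory to a compact subset of the smooth region and obtain the contradiction. This is a one-line fix, not a genuine gap, and everything else in the argument is sound.
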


\begin{lemma}\label{z>0}
    Given that the initial condition in Lemma \ref{Initialization} holds, and the initialization is non-degenerate, then $\forall t, \exists i\in [n], ~s.t.~ \z_i(t)\neq \bm{0}$.
\end{lemma}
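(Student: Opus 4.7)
The plan is to apply a Gronwall-type argument to the potential function $N(t) := \sum_{i=1}^n \|\z_i(t)\|^2$. Non-degeneracy guarantees $N(0)>0$ (condition~(1) of Definition~\ref{non-degenerate} forces every $\z_i(0)$ to be nonzero). If I can show that $N(t)$ decays at most exponentially fast, it will follow that $N(t)>0$ for every finite $t$, which is equivalent to the claim that at least one $\z_i(t)$ is nonzero.

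The first step is to derive the dynamics of $\z_i$ under gradient flow. Let $P := I - \overline{\vv}\,\overline{\vv}^{\top}$ denote the projection onto the orthogonal complement of $\vv$, so that $\z_i = P\w_i$ and $\dot{\z}_i = -P\nabla_i$. Since $P\vv=\bm{0}$, $P\w_j=\z_j$, and $P\overline{\w}_i=\z_i/\|\w_i\|$, applying $P$ to the closed-form gradient \eqref{gradient} kills every $\vv$-parallel contribution, leaving
\[
\dot{\z}_i \;=\; -\tfrac{1}{2}\sum_{j=1}^n \z_j \;-\; \tfrac{1}{2\pi}\Bigl(\sum_{j\neq i}\|\w_j\|\sin\theta_{ij} - \|\vv\|\sin\theta_i\Bigr)\tfrac{\z_i}{\|\w_i\|} \;+\; \tfrac{1}{2\pi}\sum_{j\neq i}\theta_{ij}\,\z_j.
\]
Each term on the right is a scalar multiple of some $\z_k$, so using the elementary bounds $|\sin\theta_{ij}|, |\sin\theta_i|\le 1$ and $|\theta_{ij}|\le\pi$, I obtain $\|\dot{\z}_i(t)\| \le C(t)\max_k\|\z_k(t)\|$ with
\[
C(t) \;:=\; n \;+\; \tfrac{(n-1)\max_k\|\w_k(t)\| + \|\vv\|}{2\pi\,\min_k\|\w_k(t)\|}.
\]

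The second step is to argue that $C(t)$ is uniformly bounded on $[0,\infty)$. For Phases 1 and 2, the gradient-flow analogues of Theorems~\ref{p1} and \ref{p2} give $\|\w_i(t)\|\ge s_1/2$, while for Phase 3, Theorem~\ref{p3flow} gives $\|\w_i(t)\|\ge\|\vv\|/(4n)$; the upper bound $\max_k\|\w_k(t)\|\le c_+<\infty$ follows similarly across the three phases. Thus there is a finite constant $C^\ast$ with $C(t)\le C^\ast$ for all $t\ge 0$, and the field is well-defined along the trajectory by Corollary~\ref{w>0 gd flow}.

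Finally, differentiating $N$ and applying Cauchy--Schwarz yields
\[
\dot N(t) \;=\; 2\sum_i\langle\z_i,\dot\z_i\rangle \;\ge\; -2C^\ast \sum_i\|\z_i\|\,\max_k\|\z_k\| \;\ge\; -2\sqrt{n}\,C^\ast N(t).
\]
Gronwall's inequality then gives $N(t)\ge N(0)\exp(-2\sqrt{n}\,C^\ast t)>0$ for every finite $t$, which is the desired conclusion. The main obstacle is really bookkeeping: producing the uniform lower bound on $\|\w_i(t)\|$ in continuous time across all three phases. This reduces to taking the $\eta\to 0$ limit of the discrete-time bounds already proved in Theorems~\ref{p1}~and~\ref{p2}, which preserves the strict positivity of the norms and hence the Lipschitzness of the field on the relevant set.
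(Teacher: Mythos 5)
Your proof is correct, and it uses the same core mechanism as the paper (a Gronwall inequality on $\sum_i \|\z_i\|^2$ to rule out extinction of all tangential components in finite time) but via a genuinely different setup. You run the Gronwall argument \emph{globally} forward from $t=0$: non-degeneracy gives $N(0)>0$, and you close the argument by exhibiting a \emph{uniform} bound $C(t)\le C^{\ast}$ on all of $[0,\infty)$, drawn from the phase-by-phase upper and lower bounds on $\|\w_i(t)\|$ (Phase~1/2 giving $\|\w_i\|\ge s_1/2$, Phase~3 via Theorem~\ref{p3flow} giving $\|\w_i\|\ge\|\vv\|/(4n)$). The paper instead argues by contradiction: it considers the first time $\overline{t}$ at which all $\z_i$ vanish, shows $\overline{t}>0$, and applies Gronwall only on a small interval $[\overline{t}-\epsilon,\overline{t}]$, where the needed bound $C$ comes purely from continuity of $\w_i$ plus the pointwise non-vanishing $\w_i(\overline{t})\neq\bm{0}$ furnished by Corollary~\ref{w>0 gd flow}. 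Thus the paper's route is more economical in its prerequisites: it only uses the \emph{qualitative} statement that $\w_i$ never vanishes, whereas your route re-imports the \emph{quantitative} phase-by-phase norm bounds (and, as you note, their gradient-flow analogues, which the paper only explicitly states for Phase~3 and Corollary~\ref{w>0 gd flow}). On the other hand, your global Gronwall yields the stronger conclusion $N(t)\ge N(0)e^{-2\sqrt{n}C^{\ast}t}$, a concrete exponential lower bound for all $t$, while the paper's local argument gives only the non-vanishing. Both are valid; your version is slightly heavier in machinery but more informative.
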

\begin{proof}
Assume for contradiction that $\exists t \in \mathbb{R}^+$ such that $\z_1(t)=\cdots=\z_n(t)=\bm{0}$. Define 
    \[\overline{t}:=\inf \{t | \z_1(t)=\cdots=\z_n(t)=\bm{0}\}.\]

Then the continuity of $\z_i$ implies $\z_1(\overline{t})=\cdots=\z_n(\overline{t})=\bm{0}$, so $\overline{t} > 0$. On the other hand, Corollary \ref{w>0} indicates that $\w_i(\overline{t})\neq \bm{0}, \forall i\in[n]$. Since $\w_i$ is continuous, there exists a neighborhood of $\overline{t}$ such that for $\forall i,j\in[n]$, $\|\w_i(t)\|/\|\w_j(t)\|$ and $\|\vv\|/\|\w_i(t)\|$ are bounded by a fixed constant when $t$ is in this neighborhood. Furthermore, since $\overline{t}>0$, $\exists \epsilon>0$ and constant $C>1$ such that for $\forall t\in[\overline{t}-\epsilon, \overline{t}], \forall i\in[n]$ we have

\[\left|\pi+\sum_{j\neq i}\frac{\|\w_j(t)\|}{\|\w_i(t)\|}\sin\theta_{ij}(t)-\frac{\|\vv\|}{\|\w_i(t)\|}\sin\theta_i(t)\right|\leq \pi +\sum_{j\neq i}\frac{\|\w_j(t)\|}{\|\w_i(t)\|}|+\frac{\|\vv\|}{\|\w_i(t)\|}\leq C.\]

Then in the interval $[\overline{t}-\epsilon, \overline{t}]$ we have
\begin{gather*}
 \left\|\frac{\partial \z_i}{\partial t}\right\|=\left\|-\frac{1}{2\pi}\left(\pi+\sum_{j\neq i}\frac{\|\w_j\|}{\|\w_i\|}\sin\theta_{ij}-\frac{\|\vv\|}{\|\w_i\|}\sin\theta_i\right)\z_i-\sum_{j\neq i} \frac{\pi-\theta_{ij}}{2\pi}\z_j\right\|\leq C\|\z_i\|+\sum_{j\neq i}\|\z_j\|,\\
\To \frac{\partial \|\z_i\|^2}{\partial t}=2\left\langle\frac{\partial \z_i}{\partial t}, {\z_i}\right\rangle\geq -2C\|\z_i\|\left(\sum_{j\in[n]}\|\z_j\|\right)\\
\To \frac{\partial}{\partial t} \sum_{j\in[n]}\|\z_j\|^2\geq -2C\left(\sum_{j\in[n]}\|\z_j\|\right)^2\geq -2nC\left(\sum_{j\in[n]}\|\z_j\|^2\right)\\
\To \frac{\partial}{\partial t}\left[e^{2nCt}\left(\sum_{j\in[n]}\|\z_j\|^2\right)\right]=e^{2nCt}\left[2nC\left(\sum_{j\in[n]}\|\z_j\|^2\right)+\frac{\partial}{\partial t}\left(\sum_{j\in[n]}\|\z_j\|^2\right)\right]\geq 0\\
\To \sum_{j\in[n]}\|\z_j(\overline{t})\|^2\geq e^{-2nC\epsilon}\left(\sum_{j\in[n]}\|\z_j(\overline{t}-\epsilon)\|^2\right).
\end{gather*}

Here the $(t)$ indicator is omitted for simplicity. Note that we bound $\frac{\partial \|\z_i\|^2}{\partial t}$ instead of $\frac{\partial \|\z_i\|}{\partial t}$ here, since $\|\z_i\|$ might not be differentiable if $\z_i=\bm{0}$, while $\|\z_i\|^2$ is always differentiable.

Finally, due to the definition of $\overline{t}$, there exists $i\in [n]$ such that $\z_i(\overline{t}-\epsilon)\neq \bm{0}$. Then $\sum_{j\in[n]}\|\z_j(\overline{t})\|^2\geq e^{-2nC\epsilon}\left(\sum_{j\in[n]}\|\z_j(\overline{t}-\epsilon)\|^2\right)>0$, a contradiction.
\end{proof}

\subsection{Proofs for Section \ref{Proof Sketch: Convergence Rate Lower Bound}}

By the closed form formula of gradient \eqref{gradient}, the dynamics of $\z_i$ is given by

\begin{equation}\label{dynamics-z}
    \frac{\partial \z_i}{\partial t}=-\frac{1}{2\pi}\left(\pi+\sum_{j\neq i}\frac{\|\w_j\|}{\|\w_i\|}\sin\theta_{ij}-\frac{\|\vv\|}{\|\w_i\|}\sin\theta_i\right)\z_i-\sum_{j\neq i} \frac{\pi-\theta_{ij}}{2\pi}\z_j.
\end{equation}

\begin{replemma}{max_kappa}
    If there exists $i, j$ such that $\kappa_{ij}(t)=\kappa_{\max}(t) < \frac{\pi}{2}$, then $\cos\kappa_{ij}(t)$ is well-defined in an open neighborhood of $t$, differentiable at $t$, and
    \[\frac{\partial}{\partial t} \cos\kappa_{ij}(t)\leq -\frac{\pi-\theta_{ij}(t)}{\pi}(1-\cos\kappa_{ij}^2(t)).\]
\end{replemma}

\iffalse
To prove lemma \ref{max_kappa}, we need the following technical lemma.
\begin{lemma}\label{max-derivative}
    Suppose $n$ functions $f_1, \ldots, f_n: (t-\epsilon, t+\epsilon)\to \mathbb{R}$ defined on an $\epsilon$-neighborhood ($\epsilon>0$) of $t$ are differentiable at $t$. Then $f_{\min}:=\min\{f_1, \ldots, f_n\}$ is right differentiable at $t$. (Define the right derivative of $f_{\min}$ at $t$ as $\partial_+f_{\min}(t):=\lim_{t'\to t^+} \frac{f_{\min}(t')-f_{\min}(t)}{t'-t}$).
    Moreover, define $I(t):=\{i\in [n]| f_i(t)=f_{\min}(t)\}$, then 
    \[\partial_+f_{\min}(t)=\min_{i\in I(t)} \partial_+f_{i}(t).\]
\end{lemma}

\noindent \textbf{Proof of lemma \ref{max-derivative}}
\begin{proof}
    When $t'\to t^+$, for $\forall i\in [n]$ we have 
    \[f_i(t')=f_i(t)+\partial_+ f_i(t) (t'-t)+o(t'-t).\]
    
    Then since $t'-t>0$ when $t'\to t^+$, we have
    \[\begin{split}
        f_{\min}(t')&=\min_{i\in [n]} \{f_i(t)+\partial_+ f_i(t) (t'-t)+o(t'-t)\}\\
        &=f_{\min}(t)+\min_{i\in I(t)} \{\partial_+ f_i(t) (t'-t)+o(t'-t)\}\\
        &=f_{\min}(t)+\min_{i\in I(t)} \{\partial_+ f_i(t) \}(t'-t)+o(t'-t).
    \end{split}\]
    So $\partial_+f_{\min}(t)=\min_{i\in I(t)} \{\partial_+ f_i(t) \}$ due to the definition of right derivative.
\end{proof} 
\fi

\begin{proof}
First note that for $\forall i\in Q^+(t)$, $\|\z_i(t)\|>0 \To \overline{\z_i(t)}=\frac{\z_i(t)}{\|\z_i(t)\|}$ is differentiable at $t$. Therefore, for $\forall i,j \in Q^+(t)$, $\cos\kappa_{ij}=\langle \overline{\z_i(t)}, \overline{\z_j(t)}\rangle$ is well-defined in an open neighborhood of $t$ and differentiable at $t$. 
According to the dynamics of $\z_i$ \eqref{dynamics-z}, $\forall i\in Q^+(t)$ we have
\[\begin{split}
    \frac{\partial}{\partial t}\overline{\z_i}
    =\frac{\|\z_i\|\frac{\partial \z_i}{\partial t}-\frac{\partial \|\z_i\|}{\partial t}\z_i}{\|\z_i\|^2}
    =\frac{\frac{\partial \z_i}{\partial t}-\langle\frac{\partial \z_i}{\partial t}, \overline{\z_i}\rangle\overline{\z_i}}{\|\z_i\|}
    =-\sum_{k\neq i}\frac{\pi-\theta_{ik}}{2\pi}\frac{\z_k-\langle\z_k,\overline{\z_i}\rangle\overline{\z_i}}{\|\z_i\|}.
\end{split}\]

Then \begin{equation}\label{kappa dynamics}
    \begin{split}
    &\frac{\partial \cos\kappa_{ij}}{\partial t}
    =\frac{\partial \langle \overline{\z_i}, \overline{\z_j}\rangle}{\partial t}\\
    =&\left\langle\frac{\partial   }{\partial t}\overline{\z_i},\overline{\z_j}\right\rangle
    +\left\langle\overline{\z_i}, \frac{\partial}{\partial t}\overline{\z_j}\right\rangle\\
    =&-\sum_{k\neq i, k\in Q^+} \frac{\pi - \theta_{ik}}{2\pi}\frac{\|\z_k\|}{\|\z_i\|}(\langle \overline{\z_k}, \overline{\z_j}\rangle-\langle \overline{\z_k}, \overline{\z_i}\rangle\langle \overline{\z_i}, \overline{\z_j}\rangle)
    -\sum_{k\neq j, k\in Q^+} \frac{\pi - \theta_{jk}}{2\pi}\frac{\|\z_k\|}{\|\z_j\|}(\langle \overline{\z_k}, \overline{\z_i}\rangle-\langle \overline{\z_k}, \overline{\z_j}\rangle\langle \overline{\z_i}, \overline{\z_j}\rangle)\\
    =&\underbrace{-\frac{\pi - \theta_{ij}}{2\pi}\left(\frac{\|\z_i\|}{\|\z_j\|}+\frac{\|\z_j\|}{\|\z_i\|}\right)(1-\cos^2\kappa_{ij})}_{I_1}\\
    &\underbrace{-\sum_{k\neq i,j\wedge k\in Q^+}\left[ \frac{\pi - \theta_{ik}}{2\pi}\frac{\|\z_k\|}{\|\z_i\|}(\cos\kappa_{kj}-\cos\kappa_{ki}\cos\kappa_{ij})+
    \frac{\pi - \theta_{jk}}{2\pi}\frac{\|\z_k\|}{\|\z_j\|}(\cos\kappa_{ik}-\cos\kappa_{kj}\cos\kappa_{ij})\right]}_{I_2}.
\end{split}
\end{equation}

The expression above splits into two terms $I_1$ and $I_2$. The most important observation is that, by setting $\overline{\z_i}, \overline{\z_j}$ to be the pair of maximally separated vectors, i.e., $\kappa_{ij}=\kappa_{\max}$, the second term $I_2$ is guaranteed to be nonpositive. This is because for $\forall k,$
\[\kappa_{ki}\leq \kappa_{\max}<\pi/2, \kappa_{kj}\leq \kappa_{\max}=\kappa_{ij}
\To \cos\kappa_{kj}\geq \cos\kappa_{ij}\geq \cos\kappa_{ki}\cos\kappa_{ij} 
\To \cos\kappa_{kj}-\cos\kappa_{ki}\cos\kappa_{ij}\geq 0,\]
similarly we have $\cos\kappa_{ik}-\cos\kappa_{kj}\cos\kappa_{ij}\geq 0, \forall k$. So $I_2\leq 0$ when $\kappa_{ij} =\kappa_{\max}$. This implies that when $\kappa_{ij}=\kappa_{\max}$, 
\[\frac{\partial \cos\kappa_{ij}}{\partial t}\leq I_1=-\frac{\pi - \theta_{ij}}{2\pi}\left(\frac{\|\z_i\|}{\|\z_j\|}+\frac{\|\z_j\|}{\|\z_i\|}\right)(1-\cos^2\kappa_{ij})\leq -\frac{\pi - \theta_{ij}}{\pi}(1-\cos^2\kappa_{ij}).\]

%Finally, since $\cos\kappa_{ij}$ is well-defined in an open neighborhood of $t$ and differentiable at $t$ for $\forall i, j\in Q^+(t)$, by lemma \ref{max-derivative}, $\partial_+\cos\kappa_{\max}(t)=\partial_+(\min_{i,j\in Q^+(t)} \cos\kappa_{ij}(t))$ exists and 
%\[\partial_+\cos\kappa_{\max}(t)=\partial_+\left(\min_{i,j\in Q^+(t)}\cos\kappa_{ij}\right)
%=\min_{i,j:\kappa_{ij}=\kappa_{\max}}\partial_+ \cos\kappa_{ij}
%\leq -\frac{\pi - \theta_{ij}(t)}{\pi}(1-\cos^2\kappa_{\max}(t)),\]
%for any pair of $i,j\in Q^+(t)$ such that $\kappa_{ij}(t)=\kappa_{\max}(t)$.
\end{proof}

\begin{lemma}\label{separation}
Given that the initial condition in Lemma \ref{Initialization} holds, suppose the network is over-parameterized, i.e., $n\geq 2$, and the initialization is non-degenerate, then for $\forall t\in\mathbb{R}_{\geq 0}$, at least one of the following two conditions must hold:
\begin{gather}
    \exists i\in[n] ~s.t.~ \z_i(t)=\bm{0},\\
    \kappa_{\max}(t)\geq \frac{\kappa_{\max}(0)}{3}.
\end{gather}
\end{lemma}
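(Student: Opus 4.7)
The plan is to lift Lemma \ref{max_kappa} to a monotonicity statement about $\kappa_{\max}(t)$ itself, and then use a continuity/minimality argument to rule out a drop below $\kappa_{\max}(0)/3$. As long as all $\z_i(t)\neq \bm{0}$, $\kappa_{\max}(t)=\max_{i,j}\kappa_{ij}(t)$ is continuous, and standard envelope reasoning for the max of finitely many differentiable functions gives a right derivative
\[
\partial_+\kappa_{\max}(t)=\max_{(i,j)\in\arg\max\kappa_{ij}(t)}\partial_t\kappa_{ij}(t).
\]
Lemma \ref{max_kappa}, together with $\partial_t\kappa_{ij}=-\partial_t\cos\kappa_{ij}/\sin\kappa_{ij}$, then gives $\partial_+\kappa_{\max}(t)\geq \frac{\pi-\theta_{ij}(t)}{\pi}\sin\kappa_{\max}(t)>0$ on the open set $\{0<\kappa_{\max}(t)<\pi/2\}$.

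The key reduction I would prove by contradiction is the following ``no-descent'' statement: if all $\z_i$ stay nonzero on $[t_0,\infty)$ and $\kappa_{\max}(t_0)=\alpha\in(0,\pi/2)$, then $\kappa_{\max}(t)\geq \alpha$ for every $t\geq t_0$. Indeed, if $t_1:=\inf\{t>t_0:\kappa_{\max}(t)<\alpha\}<\infty$, continuity forces $\kappa_{\max}(t_1)=\alpha<\pi/2$, so by the previous paragraph $\partial_+\kappa_{\max}(t_1)>0$, contradicting the definition of $t_1$.

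With this lemma in hand, I split into two cases on $\kappa_{\max}(0)$. If $\kappa_{\max}(0)\leq \pi/2$, applying the no-descent statement at $t_0=0$ with $\alpha=\kappa_{\max}(0)>0$ directly gives $\kappa_{\max}(t)\geq \kappa_{\max}(0)\geq \kappa_{\max}(0)/3$. If $\kappa_{\max}(0)>\pi/2$, I claim $\kappa_{\max}(t)\geq \pi/2$ for all $t$: otherwise there exists $s$ with $\kappa_{\max}(s)<\pi/2$; since $\kappa_{\max}(0)>\pi/2$, continuity and the intermediate value theorem furnish, for any small $\epsilon>0$, a time $t_0\in(0,s]$ with $\kappa_{\max}(t_0)=\pi/2-\epsilon\in(0,\pi/2)$; the no-descent statement at $t_0$ forces $\kappa_{\max}(t)\geq \pi/2-\epsilon$ thereafter, and letting $\epsilon\to 0$ yields $\kappa_{\max}(t)\geq \pi/2$. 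Since $\kappa_{\max}(0)\leq \pi$ always, this bound is $\geq \kappa_{\max}(0)/2\geq \kappa_{\max}(0)/3$. The argument is run on the connected component of $\{t:\forall i,\;\z_i(t)\neq \bm{0}\}$ containing $t=0$ (which exists by the non-degeneracy assumption); at times outside this component, alternative (1) of the lemma holds trivially.

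The main obstacle is the regime $\kappa_{\max}\geq \pi/2$, where Lemma \ref{max_kappa} gives no direct information. The argument circumvents this by observing that $\kappa_{\max}$ must pass continuously through the value $\pi/2$ before dropping below it, at which point the lemma engages and prevents further descent; the limit $\epsilon\to 0$ then turns ``no descent below $\pi/2-\epsilon$'' into ``no descent below $\pi/2$.'' A secondary technicality is the non-smoothness of $\kappa_{\max}$ as an envelope of finitely many functions, which is handled by passing to right derivatives rather than classical derivatives throughout.
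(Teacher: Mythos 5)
There is a genuine gap in your proposal. The core issue is your final sentence: ``The argument is run on the connected component of $\{t:\forall i,\;\z_i(t)\neq \bm{0}\}$ containing $t=0$\dots; at times outside this component, alternative (1) of the lemma holds trivially.'' This is false. The set $S=\{t:\forall i,\;\z_i(t)\neq \bm{0}\}$ is open but need not be connected: from \eqref{dynamics-z}, when $\z_i(t_0)=\bm{0}$ one has $\frac{\partial \z_i}{\partial t}\big|_{t_0}=-\sum_{j\neq i}\frac{\pi-\theta_{ij}(t_0)}{2\pi}\z_j(t_0)$, which is generically nonzero, so $\z_i$ passes transversally through the origin and is again nonzero immediately afterward. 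Thus $S$ typically has the form $(0,a)\cup(a,b)\cup\cdots$, and on the components beyond the first, all $\z_i$ are nonzero, alternative (1) fails, and your no-descent lemma (which presupposes a known starting value $\alpha=\kappa_{\max}(t_0)\in(0,\pi/2)$ at the beginning of the component) has no starting value to anchor to. Establishing what $\kappa_{\max}$ does after $\z_i$ crosses zero is exactly the hard part, and your proof does not address it.

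The paper's proof is structured specifically to absorb this difficulty. It takes $t^*$ to be the infimum of the ``bad set'' $\{t:\forall i,\;\z_i(t)\neq\bm{0}\ \wedge\ \kappa_{\max}(t)<\kappa_{\max}(0)/3\}$ and splits into two sub-cases at $t^*$. Step~1 shows $\z_i(t^*)\neq\bm{0}$ for every $i$: if $\z_i(t^*)=\bm{0}$ then, because all surviving pairs satisfy $\kappa_{jk}(t^*)\leq\kappa_{\max}(0)/3<\pi/2$ (so their inner products are positive), the velocity $\frac{\partial\z_i}{\partial t}(t^*)$ has \emph{negative} inner product with every $\z_k(t^*)$, $k\in Q^+(t^*)$, which forces $\kappa_{ik}(t')>\pi/2$ for $t'$ just after $t^*$, contradicting minimality of $t^*$. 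Only after that does Step~2 invoke Lemma~\ref{max_kappa} at $\kappa_{\max}(t^*)=\kappa_{\max}(0)/3$. In effect the paper exploits the fact that a zero-crossing of some $\z_i$ flips its direction and therefore creates a large angle with the others, so the ``bad set'' cannot begin at such a crossing. This is the piece your proposal is missing; it is not a presentation choice but an argument you would have to add (and it in turn uses the $<\pi/2$ bound established at $t^*$, so it must be woven into the infimum argument rather than handled locally on each component).

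Two minor points. First, your case split on $\kappa_{\max}(0)\lessgtr\pi/2$ is avoidable: since $\kappa_{\max}(0)\leq\pi$ always, the target level $\kappa_{\max}(0)/3\leq\pi/3<\pi/2$ already lies in the domain where Lemma~\ref{max_kappa} is active, so one can argue directly at that level (as the paper does) without ever caring whether $\kappa_{\max}(0)$ exceeds $\pi/2$. Second, your passage from Lemma~\ref{max_kappa} to $\partial_+\kappa_{\max}>0$ via $\partial_t\kappa_{ij}=-\partial_t\cos\kappa_{ij}/\sin\kappa_{ij}$ is fine at the levels you use it (where $\kappa_{\max}=\alpha\in(0,\pi/2)$ and $\sin\kappa_{ij}>0$), but it is cleaner, and closer to the paper, to stay with $\cos\kappa_{ij}$ and avoid the division altogether.
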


\begin{proof}
Assume for contradiction that $\exists t$ such that $\z_i(t)\neq \bm{0}, \forall i$ and $\kappa_{\max}(t)<\frac{\kappa_{\max}(0)}{3}$. Then we can define
\[t^*=\inf\left\{t\in\mathbb{R}|\forall i, \z_i(t)\neq \bm{0} \wedge \kappa_{\max}(t)<\frac{\kappa_{\max}(0)}{3}\right\}.\]
Note that by lemma \ref{z>0} we have $Q^+(t^*)\geq 1$. For $\forall i,j\in Q^+(t^*)$, if $\kappa_{ij}(t^*)>\kappa_{\max}(0)/3$, due to the continuity of $\kappa_{ij}$, $\kappa_{ij} >\kappa_{\max}(0)/3$ holds in an open neighborhood of $t^*$, which contradicts the definition of $t^*$. So $\forall i,j\in Q^+(t^*)$ we have $\kappa_{ij}(t^*)\leq \kappa_{\max}(0)/3$.

\noindent \textbf{Step 1: } First we prove $\forall i, \z_i(t^*)\neq \bm{0}.$

If $\exists i$ such that $\z_i(t^*)=\bm{0}$, then for such $i$ we have $\frac{\partial \z_i(t^*)}{\partial t}=-\sum_{j\neq i}\frac{\pi-\theta_{ij}(t^*)}{2\pi}\z_j(t^*).$ Since $Q^+(t^*)\neq \varnothing$, pick $k\in Q^+(t^*)$ and we have
\[\left\langle\frac{\partial \z_i(t^*)}{\partial t}, \z_k(t^*)\right\rangle=-\sum_{j\neq i\wedge j\in Q^+(t^*)}\frac{\pi-\theta_{ij}(t^*)}{2\pi}\langle\z_j(t^*), \z_k(t^*)\rangle<0,\]
where the last inequality is because $\kappa_{jk}(t^*)\leq \kappa_{\max}(0)/3<\frac{\pi}{2}$.

On the other hand, the definition of $\frac{\partial \z_i(t^*)}{\partial t}$ implies that $\exists \epsilon>0$, $\forall t'\in [t^*, t^*+\epsilon)$, $\z_i(t')=\z_i(t^*)+(t'-t^*)\frac{\partial \z_i(t^*)}{\partial t}+\bm{o}(t'-t^*)=(t'-t^*)\frac{\partial \z_i(t^*)}{\partial t}+\bm{o}(t'-t^*)$. Similarly, $\z_k(t')=\z_k(t^*)+(t'-t^*)\frac{\partial \z_k(t^*)}{\partial t}+\bm{o}(t'-t^*)$. Then 
\begin{equation}\label{angle-0}
    \left\langle \z_i(t'), \z_k(t')\right\rangle=(t'-t^*)\left\langle\frac{\partial \z_i(t^*)}{\partial t}, \z_k(t^*)\right\rangle+o(t'-t^*).
\end{equation}
Since $\left\langle\frac{\partial \z_i(t^*)}{\partial t}, \z_k(t^*)\right\rangle<0$ is a negative constant, there exists $\epsilon'>0$ such that for $\forall t'\in[t^*, t^*+\epsilon')$, \eqref{angle-0} is negative, consequently $\kappa_{ik}(t')=\arccos\left(\left\langle \overline{\z_i(t')}, \overline{\z_k(t')}\right\rangle\right)>\frac{\pi}{2}.$ So $\forall t'\in[t^*, t^*+\epsilon')$, $\kappa_{\max}(t')\geq \kappa_{ik}(t')>\pi/2\geq \kappa_{\max}(0)/3$, this contradicts the definition of $t^*$.

\noindent \textbf{Step 2: }After proving $\z_i(t^*)\neq \bm{0}, \forall i$ and $\kappa_{ij}(t^*)\leq \kappa_{\max}(0)/3<\kappa_{\max}(0), \forall i,j\in[n]$, we aim to derive a contradiction.

Note that $t^*\neq 0$ due to the definition of $\kappa_{\max}$. By the continuity of $\z_i$, $\exists \epsilon_1>0$ such that for $\forall t\in (t^*-\epsilon_1,t^*+\epsilon_1), i\in[n],$ $\z_i(t)\neq \bm{0}$. Then the definition of $t^*$ implies that $\forall t\in(t^*-\epsilon_1, t^*)$, $\kappa_{\max}(t)\geq \kappa_{\max}(0)/3$. Since on the interval $(t^*-\epsilon_1,t^*+\epsilon_1)$, $\kappa_{\max}=\max_{i,j\in[n]}\kappa_{ij}$ is continuous
\footnote{Generally $\kappa_{\max}$ may not be continuous since the range of taking the max ($i,j\in Q^+$) might change, but it is continuous on $(t^*-\epsilon_1,t^*+\epsilon_1)$ since all $\z_i$'s are nonzero on this interval.}, we have that $\kappa_{\max}(t^*)\geq \kappa_{\max}(0)/3$. Then $\kappa_{\max}(t^*)=\kappa_{\max}(0)/3$. Pick $i,j$ such that $\kappa_{ij}(t^*)=\kappa_{\max}(t^*)$. Note that $\theta_{ij}(t^*)<\pi$, otherwise $\kappa_{ij}(t^*)=\pi$, a contradiction. Then by lemma \ref{max_kappa} we have \[\frac{\partial}{\partial t} \cos\kappa_{ij}(t^*)\leq -\frac{\pi-\theta_{ij}(t^*)}{\pi}(1-\cos\kappa_{ij}^2(t^*))<0.\]
So $\frac{\partial}{\partial t}\kappa_{ij}(t^*)>0 \To \exists \epsilon_2> 0 ~s.t.~ \forall t\in (t^*, t^*+\epsilon_2), \kappa_{\max}(t)\geq \kappa_{ij}(t)>\kappa_{ij}(t^*)=\kappa_{\max}(t^*)=\kappa_{\max}(0)/3,$
this contradicts the definition of $t^*$.
\end{proof}

\begin{lemma}\label{Z>}
   Given that the initial condition in Lemma \ref{Initialization} holds, suppose the network is over-parameterized, i.e., $n\geq 2$, and the initialization is non-degenerate, then for $\forall t\in \mathbb{R}_{\geq 0}$ we have
    \[Z(t)\geq \Omega(\kappa_{\max}(0)\max_{i\in [n]}\|\z_i(t)\|).\]
\end{lemma}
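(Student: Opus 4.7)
The plan is to select the student neuron whose tangential projection has the largest norm, and lower bound a single pairwise distance $\|\z_i(t)-\z_k(t)\|$ that appears in the sum $Z(t)$. Let $k \in \arg\max_{i\in[n]}\|\z_i(t)\|$ and let $M := \|\z_k(t)\|$. Lemma \ref{z>0} guarantees $M>0$. By Lemma \ref{separation}, at time $t$ one of two alternatives must hold, and I would split the proof accordingly.

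In the first case, some $\z_j(t)=\bm{0}$. Such a $j$ is necessarily distinct from $k$ (because $\|\z_k(t)\|=M>0$), so $\|\z_k(t)-\z_j(t)\|=M$ appears as one of the summands in $Z(t)$, immediately giving $Z(t)\ge M \ge \Omega(\kappa_{\max}(0)M)$ since $\kappa_{\max}(0)\le \pi$.

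In the second case, every $\z_i(t)\ne\bm{0}$, so $Q^+(t)=[n]$ and Lemma \ref{separation} yields $\kappa_{\max}(t)\ge \kappa_{\max}(0)/3$. Pick $i^*,j^*$ with $\kappa_{i^*j^*}(t)=\kappa_{\max}(t)$. If either $i^*=k$ or $j^*=k$, I directly have a pair containing $k$ whose angular separation is at least $\kappa_{\max}(0)/3$. Otherwise I use the spherical triangle inequality $\kappa_{i^*j^*}\le \kappa_{i^*k}+\kappa_{j^*k}$ (which holds because geodesic distances on the unit sphere form a metric), so at least one of $\kappa_{i^*k}(t)$ or $\kappa_{j^*k}(t)$ is at least $\kappa_{\max}(0)/6$. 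In either situation there exists $\ell\in[n]\setminus\{k\}$ with $\kappa_{\ell k}(t)\ge \kappa_{\max}(0)/6$.

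The final step is to convert this angular separation into a distance lower bound. For any two nonzero vectors, the distance from $\z_k(t)$ to the one-dimensional subspace $\operatorname{span}(\z_\ell(t))$ equals $M\sin\kappa_{\ell k}(t)$, and $\|\z_\ell(t)-\z_k(t)\|$ is at least this perpendicular distance. Using $\sin x \ge 2x/\pi$ for $x\in[0,\pi/2]$, I obtain
\[
Z(t)\ \ge\ \|\z_\ell(t)-\z_k(t)\|\ \ge\ M\sin\!\left(\tfrac{\kappa_{\max}(0)}{6}\right)\ \ge\ \tfrac{1}{3\pi}\,\kappa_{\max}(0)\,M,
\]
which is the desired $\Omega(\kappa_{\max}(0)\max_i\|\z_i(t)\|)$ bound. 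The only mildly delicate step is justifying the spherical triangle inequality and the case split into whether $k$ coincides with $i^*$ or $j^*$; everything else is elementary geometry. There is no technical obstacle beyond this bookkeeping, since Lemma \ref{separation} does the real work of guaranteeing a large angle somewhere.
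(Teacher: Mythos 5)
Your argument follows essentially the same route as the paper's own proof: fix the index of maximum tangential norm, split on the two alternatives provided by Lemma~\ref{separation} (some $\z_j$ vanishes, or $\kappa_{\max}(t)\ge\kappa_{\max}(0)/3$), and in the second case use the triangle inequality for angles to find an $\ell\ne k$ with $\kappa_{\ell k}(t)\ge\kappa_{\max}(0)/6$. Case~1 and the angle bookkeeping are identical to the paper's.

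There is, however, a small but genuine gap in the final conversion from angle to distance. You assert $\|\z_\ell(t)-\z_k(t)\|\ge M\sin\kappa_{\ell k}(t)\ge M\sin\bigl(\kappa_{\max}(0)/6\bigr)$, where the first step is the perpendicular-distance-to-$\operatorname{span}(\z_\ell)$ argument. The first inequality is correct for all $\kappa_{\ell k}\in[0,\pi]$, but the second step implicitly uses monotonicity of $\sin$, which fails on $(\pi/2,\pi]$: if $\kappa_{\ell k}(t)$ is close to $\pi$, then $\sin\kappa_{\ell k}(t)$ can be arbitrarily small even though $\kappa_{\ell k}(t)\ge\kappa_{\max}(0)/6$, and your stated chain gives no useful lower bound. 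The paper treats this explicitly via a case split: when the angle is obtuse, $\cos\kappa_{\ell k}(t)<0$ and the law of cosines gives $\|\z_\ell(t)-\z_k(t)\|\ge\|\z_k(t)\|=M$ outright, which dominates $M\sin(\kappa_{\max}(0)/6)$. Adding that one-line obtuse-angle case to your last step closes the gap and brings your proof exactly in line with the paper's.
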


\begin{proof}
    We show that for $\forall t\in\mathbb{R}_{\geq 0},$
    \begin{equation}\label{max-zij}
        \max_{i,j\in[n]} \|\z_i(t)-\z_j(t)\|\geq \Omega(\kappa_{\max}(0)\max_{i\in[n]} \|\z_i(t)\|).
    \end{equation} 
    W.L.O.G., suppose $\z_1(t)=\max_{i\in[n]} \|\z_i(t)\|$.
By lemma \ref{separation}, for $\forall t$ one of the following two cases must happen:
\begin{itemize}
    \item $\exists k ~s.t.~ \z_k(t)=\bm{0}$. 
    
    By lemma \ref{z>0}, $k\neq 1$. Then $\max_{i,j\in[n]} \|\z_i(t)-\z_j(t)\|\geq \|\z_1(t)-\z_k(t)\|=\|\z_1(t)\|\geq O(\kappa_{\max}(0)\max_{i\in[n]} \|\z_i(t)\|)$.

    \item $\kappa_{\max}(t)\geq \kappa_{\max}(0)/3$.

    Pick a pair $i,j$ such that $\kappa_{ij}(t)=\kappa_{\max}(t)$. Then $\kappa_{1i}(t)+\kappa_{1j}(t)\geq \kappa_{ij}(t)\geq \kappa_{\max}(0)/3\To \max\{\kappa_{1i}(t), \kappa_{1j}(t)\}\geq \kappa_{\max}(0)/6$. W.L.O.G., suppose $\kappa_{1i}(t) \geq \kappa_{\max}(0)/6$. If $\kappa_{1i}(t)\leq \pi/2$, then $\|\z_1(t)-\z_i(t)\|\geq \|\z_1(t)\|\sin \kappa_{1i}(t)\geq \Omega(\kappa_{\max}(0)\max_{i\in[n]} \|\z_i(t)\|)$. If $\kappa_{1i}(t)>\pi/2$, then $\|\z_1(t)-\z_i(t)\|\geq \|\z_1(t)\|\geq \Omega(\kappa_{\max}(0)\max_{i\in[n]} \|\z_i(t)\|)$. So no matter which case happens, \eqref{max-zij} always holds.
\end{itemize}

In conclusion, we have $Z(t)\geq \max_{i,j\in[n]}\|\z_i(t)-\z_j(t)\|\geq \Omega(\kappa_{\max}(0)\max_{i\in[n]} \|\z_i(t)\|).$
\end{proof}

Combined with Lemma \ref{z>0}, Lemma \ref{Z>} immediately implies the following corollary.
\begin{corollary}\label{Z(t)>0}
Given that the initial condition in Lemma \ref{Initialization} holds,  suppose $\z_i(0)\neq 0, \forall i\in[n]$, $\kappa_{\max}(0)>0$ and $n\geq 2$, then for $\forall t\in \mathbb{R}_{\geq 0}$ we have $Z(t)>0$.
\end{corollary}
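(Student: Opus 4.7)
The plan is to derive this as an immediate consequence of the two preceding results, Lemma \ref{z>0} and Lemma \ref{Z>}. Fix any $t \in \mathbb{R}_{\geq 0}$. First, I would invoke Lemma \ref{z>0} (whose hypotheses are satisfied, since the initial condition of Lemma \ref{Initialization} holds and the non-degeneracy assumptions $\z_i(0) \neq \bm{0}$ for all $i$ and $\kappa_{\max}(0)>0$ are precisely the conditions of Definition \ref{non-degenerate}) to conclude that there exists some index $i \in [n]$ with $\z_i(t) \neq \bm{0}$. Hence $\max_{i \in [n]} \|\z_i(t)\| > 0$.

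Next, I would apply Lemma \ref{Z>}, whose hypotheses again hold by the non-degeneracy and over-parameterization ($n \geq 2$) assumptions, to get the quantitative bound
\[
Z(t) \;\geq\; \Omega\bigl(\kappa_{\max}(0)\,\max_{i\in[n]}\|\z_i(t)\|\bigr).
\]
Since by hypothesis $\kappa_{\max}(0) > 0$ is a strictly positive constant (independent of $t$), and the factor $\max_i \|\z_i(t)\|$ is strictly positive by the previous step, the right-hand side is strictly positive, giving $Z(t) > 0$ as claimed.

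There is no real obstacle here; the whole content of the corollary is that the two lemmas assemble without loss. The only mild subtlety is checking that the hypotheses of Lemma \ref{Z>} and Lemma \ref{z>0} are literally the ones stated in the corollary, which they are (the non-degenerate initialization in Definition \ref{non-degenerate} is exactly $\z_i(0)\neq \bm{0}$ for all $i$ together with $\kappa_{\max}(0)>0$), so nothing further needs to be verified.
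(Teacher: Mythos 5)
Your proof is correct and matches the paper's own argument exactly: the paper states the corollary follows by combining Lemma \ref{z>0} (some $\z_i(t)\neq\bm{0}$) with the quantitative lower bound of Lemma \ref{Z>}, which is precisely your chain of reasoning.
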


\begin{lemma}\label{partial Z>}
Suppose the conditions \eqref{precondition lower bound 1} \eqref{precondition lower bound 2} in Theorem \ref{p3flow} holds. Suppose the network is over-parameterized, i.e., $n\geq 2$, and the initialization is non-degenerate. Then $\forall t\geq T_2$ we have
\[\frac{\partial }{\partial t}Z(t)\geq -O(n^2\|\vv\|\theta_{\max}^2(t)).\]
\end{lemma}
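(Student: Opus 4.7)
The strategy is to isolate a ``symmetric'' term in $\partial \z_i/\partial t$ that is common to all indices $i$ and therefore cancels in pairwise differences, leaving only a residual perturbation that is quadratically small in $\theta_{\max}$. Starting from \eqref{dynamics-z}, I rewrite the coefficient of $\z_i$ as $\alpha_i = \tfrac{1}{2} + E_i$ with
\[
E_i := \frac{1}{2\pi}\sum_{j\neq i}\frac{\|\w_j\|}{\|\w_i\|}\sin\theta_{ij} \;-\; \frac{1}{2\pi}\frac{\|\vv\|}{\|\w_i\|}\sin\theta_i,
\]
and the coefficient of $\z_k$ ($k\neq i$) as $\tfrac{\pi-\theta_{ik}}{2\pi} = \tfrac{1}{2} - \tfrac{\theta_{ik}}{2\pi}$. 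Collecting the $\tfrac{1}{2}$-terms, \eqref{dynamics-z} becomes
\[
\frac{\partial \z_i}{\partial t} \;=\; -\frac{1}{2}\sum_{k=1}^n \z_k \;-\; E_i\z_i \;+\; \sum_{k\neq i}\frac{\theta_{ik}}{2\pi}\z_k.
\]
The leading sum is independent of $i$, so it drops out of the difference and
\[
\frac{\partial(\z_i-\z_j)}{\partial t} \;=\; -E_i\z_i + E_j\z_j + \sum_{k\neq i}\frac{\theta_{ik}}{2\pi}\z_k - \sum_{k\neq j}\frac{\theta_{jk}}{2\pi}\z_k.
\]

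The remaining terms are all $O(\|\vv\|\theta_{\max}^2)$. From \eqref{precondition lower bound 1} the norms satisfy $\|\w_i\| = \Theta(\|\vv\|/n)$, so $\|\w_j\|/\|\w_i\| = O(1)$ and $\|\vv\|/\|\w_i\| = O(n)$; combined with $\theta_{ij} \leq \theta_i + \theta_j \leq 2\theta_{\max}$, this gives $|E_i| = O(n\theta_{\max})$. The crucial identity $\|\z_i\| = \|\w_i\|\sin\theta_i \leq O(\|\vv\|\theta_{\max}/n)$ then yields $\|E_i\z_i\| = O(\|\vv\|\theta_{\max}^2)$; similarly each of the two sums contains $n$ terms of size $O(\theta_{\max}) \cdot O(\|\vv\|\theta_{\max}/n)$, totaling $O(\|\vv\|\theta_{\max}^2)$. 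Hence $\|\partial(\z_i-\z_j)/\partial t\| = O(\|\vv\|\theta_{\max}^2)$, and the reverse triangle inequality $\partial\|\z_i-\z_j\|/\partial t \geq -\|\partial(\z_i-\z_j)/\partial t\|$, summed over the $\binom{n}{2} = O(n^2)$ pairs, gives $\partial Z/\partial t \geq -O(n^2\|\vv\|\theta_{\max}^2)$.

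The main technical subtlety is that $\|\z_i-\z_j\|$ is non-differentiable when $\z_i = \z_j$. I would bypass this by working first with $\tfrac{1}{2}\partial\|\z_i-\z_j\|^2/\partial t = \langle \z_i-\z_j,\partial(\z_i-\z_j)/\partial t\rangle$, which is always well-defined and bounded by $\|\z_i-\z_j\|\cdot O(\|\vv\|\theta_{\max}^2)$, and then transferring back to $\partial\|\z_i-\z_j\|/\partial t$ via Dini derivatives (the inequality is sharp at differentiable points, and the one-sided version suffices for the pointwise statement). The essential feature of the argument is the cancellation of the $-\tfrac{1}{2}\sum_k\z_k$ term: without it one would get only $O(\|\vv\|\theta_{\max})$, and the extra factor of $\theta_{\max}$ that upgrades the bound to the desired quadratic rate comes from the identity $\|\z_i\| = \|\w_i\|\sin\theta_i$, which forces $\z_i$ itself to be small when $\theta_{\max}$ is small.
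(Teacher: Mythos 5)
Your proof is correct and rests on the same two ideas as the paper's: isolate an $i$-independent ``symmetric'' term that cancels in the pairwise difference $\partial(\z_i-\z_j)/\partial t$, and show the residual is $O(\|\vv\|\theta_{\max}^2)$. The bookkeeping differs in a way worth noting. The paper works in the $\w$-variables, decomposing $\nabla_i=\tfrac12\rr+I_2^{(i)}$ and bounding $\|I_2^{(i)}\|=O(\|\vv\|\theta_{\max}^2)$ via the Taylor identities $\sin\theta-\theta=O(\theta^3)$ and $\|\overline{\w}_i-\overline{\w}_j\|=O(\theta_{\max})$, then pushes the bound through the orthogonal projection to get the same estimate on $\|\partial(\z_i-\z_j)/\partial t\|$. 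You instead stay entirely in the $\z$-variables, pulling the common $-\tfrac12\sum_k\z_k$ out of \eqref{dynamics-z} (which is exactly $-\tfrac12\rr$ after projection) and controlling what remains by combining $|E_i|=O(n\theta_{\max})$ with the geometric identity $\|\z_k\|=\|\w_k\|\sin\theta_k=O(\|\vv\|\theta_{\max}/n)$; the two factors of $\theta_{\max}$ thus come from different places, and the cancellation in the $\|\vv\|/n$ vs.\ $n$ scaling does the same work that $\sin\theta-\theta=O(\theta^3)$ does in the paper. Both routes are equally valid. Your treatment of the non-differentiable locus $\z_i=\z_j$ via Dini derivatives is also slightly more careful than the paper's: the paper asserts that $\z_i(t)=\z_j(t)$ forces $\partial\z_i/\partial t=\partial\z_j/\partial t$, which is not obvious (the coefficients in \eqref{dynamics-z} depend on $\theta_i,\theta_j$ and $\|\w_i\|,\|\w_j\|$, which may differ even when the $\z$'s coincide), whereas your observation that $D_+\|\z_i-\z_j\|\geq 0$ at such a point, so the bound is vacuous there, closes this gap cleanly.
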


\begin{proof}

Recall the closed form expression of gradient \eqref{gradient}, which can be decomposed into two terms,
\[\frac{\partial L(\w)}{\partial \w_i}=\underbrace{\frac{1}{2}\left(\sum_j \w_j -\bm{v}\right)}_{I_1}
+\underbrace{\frac{1}{2\pi}\left[\left(\sum_{j\neq i}\|\w_j\|\sin\theta_{ij}-\|\vv\|\sin\theta_i\right)\overline{\w}_i-\sum_{j\neq i} \theta_{ij} \w_j +\theta_i\vv\right]}_{I_2}.\]

The second term $I_2$ can be rewritten as
\[I_2=\frac{1}{2\pi}\left[\sum_{j\neq i} \|\w_j\|(\sin\theta_{ij}-\theta_{ij})\overline{\w}_i+\sum_{j\neq i}\|\w_j\|\theta_{ij}(\overline{\w}_i-\overline{\w}_j)-\|\vv\|(\sin\theta_i-\theta_i)\overline{\w}_i-\|\vv\|\theta_i(\overline{\w}_i-\overline{\vv})\right]\]

By Theorem \ref{p3flow}, for $\forall t>T_2$, we have $\|\w_i(t)\|=\Theta(\|\vv\|/n)$. Note that $\sin\theta_{ij}(t)-\theta_{ij}(t)=O(\theta_{ij}^3(t))=O(\theta_{\max}^3(t))$, similarly $\sin\theta_i(t)-\theta_i(t)=O(\theta_{\max}^3(t))$. Combined with $\|\overline{\w}_i-\overline{\w}_j\|=2\sin(\theta_{ij}/2)=O(\theta_{\max})$, we get
\[\begin{split}
    \|I_2\|\leq \sum_{j\neq i}\Theta(\|\vv\|/n)O(\theta^3_{\max})+\sum_{j\neq i}\Theta(\|\vv\|/n)\theta_{\max}O(\theta_{\max})\\
    +\|\vv\|O(\theta_{\max}^3)+\|\vv\|\theta_{\max}O(\theta_{\max})=O(\|\vv\|\theta_{\max}^2).
\end{split}\]

Then $\forall i, \frac{\partial \w_i}{\partial t}=-\frac{\partial L(\w)}{\partial \w_i}=-\frac{1}{2}\left(\sum_j \w_j -\bm{v}\right)+\bm{O}(\|\vv\|\theta_{\max}^2)$. Note that the first term $-\frac{1}{2}\left(\sum_j \w_j -\bm{v}\right)$ is the same for all $\w_i$, so for $\forall i,j\in[n]$ we have $\left\|\frac{\partial \w_i-\w_j}{\partial t}\right\|=\left\|\frac{\partial \w_i}{\partial t}-\frac{\partial \w_j}{\partial t}\right\|={O}(\|\vv\|\theta_{\max}^2)$.

For $\forall i,j \in [n]$, if $\z_i(t)=\z_j(t)$ then $\frac{\partial \z_i(t)}{\partial t}=\frac{\partial \z_j(t)}{\partial t}\To \frac{\partial }{\partial t}\|\z_i(t)-\z_j(t)\|=0$. Otherwise $\z_i(t)-\z_j(t)\neq \bm{0}$ and
\[\begin{split}
\frac{\partial}{\partial t}\left( \|\z_i(t)-\z_j(t)\|\right)
&=\left\langle\frac{\partial}{\partial t}\left( \z_i(t)-\z_j(t)\right), \overline{\z_i(t)-\z_j(t)}\right\rangle 
\geq -\left\|\frac{\partial}{\partial t} (\z_i(t)-\z_j(t))\right\|\\
&\geq -\left\|\frac{\partial}{\partial t} (\w_i(t)-\w_j(t))\right\|\geq -O(\|\vv\|\theta_{\max}^2(t)).
\end{split}\]

So for both cases we have $\frac{\partial}{\partial t}\left( \|\z_i(t)-\z_j(t)\|\right)\geq -O(\|\vv\|\theta_{\max}^2(t))$. 

Then $\frac{\partial}{\partial t}Z(t)=\sum_{1\leq i <j\leq n} \frac{\partial}{\partial t}\left( \|\z_i(t)-\z_j(t)\|\right)\geq -O(n^2\|\vv\|\theta_{\max}^2(t)).$
\end{proof}

\subsection{Proof of Main Theorem}\label{Proof of Main Theorem: lower bound}
\begin{reptheorem}{lower bound main theorem}
Suppose the network is over-parameterized, i.e., $n\geq 2$. For $\forall \delta >0$, if the initialization is non-degenerate, $d=\Omega(\log(n/\delta))$, $\sigma= O\left(n^{-4226}d^{-1/2}\|\vv\|\right)$, then there exists $T_2 = O\left(\frac{\log n}{n}\right)$ such that with probability at least $1-\delta$, for $\forall t\geq T_2$ we have
\[L(\w(t))^{-1/3}\leq O\left(\frac{n^{17/3}}{\kappa_{\max}^2(0)\|\vv\|^{2/3}}\right)(t-T_2)+\gamma,\]
where $\gamma\in\mathbf{R}^+$ is a constant that does not depend on $t$.
\end{reptheorem}

\begin{proof}
For $\forall t \geq T_2$ we have $\max_{i\in[n]}\|\z_i(t)\|\geq \theta_{\max}(t)\Theta(\|\vv\|/n) $. Then by lemma \ref{Z>}, for $\forall t \geq T_2$, $Z(t)\geq \Omega(\kappa_{\max}(0)\theta_{\max}(t)\|\vv\|/n)
\To \theta_{\max}(t)=O\left(\frac{nZ(t)}{\kappa_{\max}(0)\|\vv\|}\right).$ Combined with lemma \ref{partial Z>} we have
\begin{equation}\label{z dynamics}
    \frac{\partial }{\partial t}Z(t)\geq -O(n^2\|\vv\|\theta_{\max}^2(t))\geq -O\left(\frac{n^4Z^2(t)}{\kappa_{\max}^2(0)\|\vv\|}\right).
\end{equation}

By Corollary \ref{Z(t)>0}, $Z(t)$ is always strictly positive. We can therefore calculate the dynamics of $1/Z(t)$ as: $\forall t \geq T_2$,
\begin{equation}\label{Z(t)}
    \frac{\partial }{\partial t}\frac{1}{Z(t)}=-\frac{1}{Z^2(t)}\frac{\partial }{\partial t}Z(t)\leq O\left(\frac{n^4}{\kappa_{\max}^2(0)\|\vv\|}\right)\To \frac{1}{Z(t)}= O\left(\frac{n^4}{\kappa_{\max}^2(0)\|\vv\|}(t-T_2)\right)+\frac{1}{Z(T_2)}.
\end{equation}

On the other hand, by Theorem \ref{p3flow} we have \[Z(t)\leq \sum_{1\leq i<j\leq n} (\|\z_i(t)\|+\|\z_j(t)\|)\leq \sum_{1\leq i<j\leq n} (\theta_i(t)\|\w_i(t)\|+\theta_j(t)\|\w_j(t)\|)\leq O(n\|\vv\|\theta_{\max}).\] 

By lemma \ref{theta} we have 
\begin{equation}\label{Z vs L}
\forall t\geq T_2, \theta_{\max}(t)=O\left(\left(\frac{L(\w(t))n^2}{\|\vv\|^2}\right)^{1/3}\right)\To 
L(\w(t))\geq \Omega\left(\frac{Z^3(t)}{n^5\|\vv\|}\right).
\end{equation}
Combined with \eqref{Z(t)}, we have
\[
    L(\w(t))^{-1/3}\leq O\left(\frac{n^{17/3}}{\kappa_{\max}^2(0)\|\vv\|^{2/3}}\right)(t-T_2)+\frac{n^{5/3}\|\vv\|^{1/3}}{Z(T_2)}.
\]

Finally, since Corollary \ref{Z(t)>0} implies $Z(T_2)>0$, setting $\gamma=\frac{n^{5/3}\|\vv\|^{1/3}}{Z(T_2)}$ finishes the proof.
\end{proof}

\end{document}